\newcommand{\fixlist}{}
\def\NAT@spacechar{~}%
\newtheorem{theorem}{Theorem}[section]
\newtheorem{corollary}[theorem]{Corollary}
\newtheorem{lemma}[theorem]{Lemma}
\newtheorem{example}{Example}
\theoremstyle{definition}
\newtheorem{definition}{Definition}
\definecolor{lightgray}{rgb}{0.8,0.8,0.8}
\newcommand{\graytext}[1]{\fcolorbox{lightgray}{lightgray}{#1}}
\newcommand{\comm}[1]{{\color{gray}#1}}
\newcommand{\probCDCACInstance}{\ensuremath{((\electionC, \electionV), A, d, \combRule, \solk)}}
\newcommand{\probSetCover}{\textsc{Set Cover}\xspace}
\newcommand{\probSetCoverInstance}{\ensuremath{(X,\calS, h)}}
\newcommand{\probClique}{\textsc{Clique}\xspace}
\newcommand{\probCVC}{\textsc{Cubic Vertex Cover}\xspace}
\newcommand{\probVC}{\textsc{Vertex Cover}\xspace}
\newcommand{\probColorClique}{\textsc{Multi-Colored Clique}\xspace}
\newcommand{\probMCCInstance}{\ensuremath{(G, h)}}
\newcommand{\np}{{\mathsf{NP}}}
\newcommand{\paranp}{{\mathsf{Para}\textrm{-}\mathsf{NP}}}
\newcommand{\fpt}{{\mathsf{FPT}}}
\newcommand{\xp}{{\mathsf{XP}}}
\newcommand{\wone}{{\mathsf{W[1]}}}
\newcommand{\wtwo}{{\mathsf{W[2]}}}
\newcommand{\p}{{\mathsf{P}}}
\newcommand{\pref}{\ensuremath{\succ}}
\newcommand{\calR}{\ensuremath{\mathcal{R}}}
\newcommand{\calS}{\ensuremath{\mathcal{S}}}
\newcommand{\electionC}{\ensuremath{C}}
\newcommand{\electionV}{\ensuremath{V}}
\newcommand{\combRule}{\ensuremath{\kappa}\xspace}
\newcommand{\revnot}[1]{\overleftarrow{#1}}
\newcommand{\ouralpha}{$0 \leq \alpha \leq 1$}
\newcommand{\constantk}{\ensuremath{t}}
\newcommand{\solk}{\ensuremath{k}}
\newcommand{\park}{\ensuremath{p}}
\newcommand{\probDef}[3]{
  \begin{quote}
   #1\\
  \textbf{Input:} #2\\
  \textbf{Question:} #3
  \end{quote}
}
\newcommand{\mcctech}{Multi-Colored Clique Proof Technique\xspace}
\newcommand{\cvctech}{Cubic Vertex Cover Proof Technique\xspace}
\newcommand{\setcovertech}{Set-Embedding Proof Technique\xspace}
\newcommand{\signaturetech}{Signature Proof Technique\xspace}
\newcommand{\SetCandidateSet}{\ensuremath{\calS_{\mathrm{cand}}}}
\newcommand{\ElementCandidateSet}{\ensuremath{X_{\mathrm{cand}}}}
\newcommand{\shortcite}[1]{\cite{#1}}
\newcommand{\citeA}[1]{\citet{#1}}
\newcommand{\shortciteA}[1]{\citet{#1}}
\begin{document}

\title{Elections with Few Voters: Candidate~Control Can Be Easy\thanks{A preliminary short 
version of this work has been presented at the 29th AAAI Conference on Artificial Intelligence (AAAI~'15),  Austin Texas, January, 2015~\shortcite{CheFalNieTal2015}.
Compared to our conference version, in this long version, we provide all omitted proofs 
and, thanks to  \citeA{MausRot2016}, we fix a flaw in our multi-colored clique proof technique.}}

\author[1]{Jiehua Chen\footnote{\url{jiehua.chen@tu-berlin.de}}}
\author[2]{Piotr Faliszewski\footnote{\url{faliszew@agh.edu.pl}}}
\author[1]{Rolf Niedermeier\footnote{\url{rolf.niedermeier@tu-berlin.de}}}
\author[1]{Nimrod Talmon\footnote{\url{nimrodtalmon77@gmail.com}}}

\affil[1]{Institut f\"ur Softwaretechnik und Theoretische Informatik, TU Berlin, Germany}
\affil[2]{AGH University of Science and Technology, Krakow, Poland}

\date{}

\maketitle

\begin{abstract}
  We study the computational complexity of candidate control in elections with few voters,
  that is, we consider the parameterized complexity of candidate control in elections with respect to the number of voters as a parameter.
  We consider both the standard scenario of
  adding and deleting candidates, where one asks
  whether a given candidate can become a winner
  (or, in the destructive case, can be precluded from winning)
  by adding or deleting few candidates,
  as well as a combinatorial scenario where adding/deleting a candidate
  automatically means adding or deleting a whole group of candidates.
  Considering several fundamental voting rules,
  our results show that the parameterized complexity of candidate control,
  with the number of voters as the parameter,
  is much more varied than in the setting with many voters.
\end{abstract}

\renewcommand{\descriptionlabel}[1]{\hspace{\labelsep}\emph{#1}}

\newcommand{\wonehard}{$\wone$-h}
\newcommand{\wonehardx}{$\wone$-h / $\xp$}
\newcommand{\xpopen}{$?$ / $\xp$}
\newcommand{\paranph}{$\paranp$-h}

\newcommand{\wonehl}{$\wone$-hard}
\newcommand{\paranphl}{$\paranp$-hard}

\section{Introduction}

Election control problems are concerned with affecting the result of
an election by modifying the structure of the election.
Such election modifications could be either introducing some new candidates or voters
or removing some existing candidates or voters from the election
or partitioning candidates or voters~\shortcite{bar-tov-tri:j:control,erd-now-rot:j:sp-av,fal-hem-hem-rot:j:llull,hem-hem-rot:j:destructive-control,men:j:range-voting,men-sin:c:control-schulze,fal-hem-hem:j:multimode,fal-hem-hem-rot:j:single-peaked-preferences,par-xia:strategic-schultze-ranked-pairs}.
We focus on the computational complexity of election control by adding
and deleting candidates (that is, candidate control), for the case
where the election involves only a few voters.  From the viewpoint of
computational complexity, voter control with few voters has not
received sufficient study.  We focus on very simple, practical voting
rules such as Plurality, Veto, and $\constantk$-Approval, but discuss
several more involved rules as well.  To analyze the effect of
allowing only a small number of voters, we use the formal tools of
parameterized complexity
theory~\shortcite{CyFoKoLoMaPiPiSa2015,DF13,flu-gro:b:parameterized-complexity,nie:b:invitation-fpt}.

From the viewpoint of classical complexity theory, most of the
candidate control problems for most of the typically studied voting
rules are $\np$-hard. Indeed, candidate control problems are
$\np$-hard even for the Plurality rule; nonetheless, there are some
natural examples of candidate control problems with polynomial-time
algorithms.  It turns out that for the case of elections with few
voters, that is, for control problems parameterized by the number of
voters, the computational complexity landscape of candidate control is
much more varied and sometimes quite surprising.  We present
a high-level discussion of our results in \autoref{section_discussion}
(to get a quick feel of the nature of the results we obtain, the
reader might also wish to consult \autoref{tab:summary} in~\autoref{section_discussion}).

In addition to the standard candidate control problems, we also study
their \emph{combinatorial} variants, where instead of adding/deleting
candidates one-by-one, we add/delete whole groups of candidates at
unit cost.  In this we continue our previous work, conducted by a
slightly different set of authors, on combinatorial voter
control~\shortcite{bul-che-fal-nie-tal:j:combinatorial-voter-control}
and combinatorial shift bribery~\shortcite{bre-fal-nie-tal:j:combinatorial-shift-bribery}
(we mention that a somewhat similar model of combinatorial control was also
studied by~\citet{erd-hem-hem:t:comb-partition-control}).

The study of the computational complexity of control problems in
elections was initiated by~\citet{bar-tov-tri:j:control} and was
continued by many researchers (see, for example, the surveys
of~\citet{fal-rot:b:control}, of~\citet{fal-hem-hem:j:cacm-survey},
and our related work section).  While many researchers have considered
the case of few candidates---see, e.g., the classic work
of~\citet{con-lan-san:j:when-hard-to-manipulate} on election
manipulation and subsequent papers regarding
control~\shortcite{fal-hem-hem-rot:j:llull,fal-hem-hem:j:multimode,HemaspaandraLM16}---very
little effort was invested into studying the case of few voters
(perhaps the most notable example of a paper focusing on this case is
that of \citet{BrHaKaSe2014}; \citet{bet-sli-uhl:j:mon-cc} also
considered parametrization by the number of voters).  One possible
reason for this situation is that the case of few voters may seem
somewhat less natural. After all, presidential elections, an archetype
of elections, rarely involve more than a few candidates but do involve
millions of voters. We argue that the case of few voters is as natural
and as important to study, especially in the context of artificial
intelligence and multi-agent settings and various non-political
elections.\footnote{We should mention that since the publication of
  this paper's conference version, parameterization by the number of
  voters has become more common. For example,
  \citet{mis-nab-sin:c:minimax-av} considered winner determination
  parametrized by the number of voters under the Minimax Approval
  Voting rule, \citet{fal-sko-sli-tal:c:top-k-counting} did the same
  for a class of committee scoring rules, and
  \citet{bre-fal-nie-tal:c:multiwinner-sb} used this parameter in the
  study of shift bribery for committee elections. In each of these
  papers, parameterization by the number of voters leads to interesting
  $\fpt$ algorithms.}

\subsection{Motivation}\label{motivation}

Let us now argue why we believe that elections with few voters are
natural, and why (combinatorial) candidate control is an important
issue in such elections.  
First, let us look at the following examples, which include few voters
but possibly very many candidates.

\paragraph{{Hiring committee}}

Consider a university department which is going to hire a new faculty
member.  Typically, the committee consists of relatively few faculty
members,
but it may consider hundreds of applications for a given position.
The members of the committee have to aggregate their opinions
regarding the candidates and it is quite natural to assume that at
some point this would be done through voting.

\paragraph{{Holiday planning}}

Consider a group of people who are planning to spend holidays
together.  The group typically would consist of no more than a dozen
persons, but---technically---they have to choose from all possible
options provided by the travel agents, hotels, airlines, etc.  This
example is particularly relevant to the case of multi-agent systems:
one may foresee that in the near future we will delegate the task of
finding the most satisfying holiday location to our personal software
agents that will negotiate with travel agents and other travelers on
our behalf.

\paragraph{{Meta-search engine}}

\citet{dwo-kum-nao-siv:c:rank-aggregation} argued that one can build a
web meta-search engine that queries several other search engines (the
few voters) regarding a given query, aggregates their rankings of the
web pages (the many candidates), and outputs the consensus ranking.
\bigskip

In all these examples, it is clear that prior to holding an
election, the voters, or some particular individual, usually first shrinks the
set of candidates.  In the hiring committee case, most of the
applications are removed from the considerations early in the
evaluation process (based on the number of journal publications, for
example).  The group of people planning holidays first (implicitly)
removes most of the possible holiday options and, then, removes those
candidates that do not fully fit their preferences: for example, they
might remove destinations which are too expensive, or holiday
places by the sea when they are interested in hiking in the mountains,
etc.  The search engines usually disregard those web pages that appear
completely irrelevant to a given query.

This natural process of modifying the candidate set, however, creates
a natural opportunity for manipulating the result. A particularly
crafty agent may remove those candidates that prevent his or her
favorite candidate from winning.  Similarly, after the initial process
of thinning down the candidate set, a voter may request that some
candidates are added back into consideration, possibly to help his or
her favorite candidate. More importantly, it is quite realistic to
assume that the voters in a small group know each other so well as to
reliably predict each others' votes (this is particularly applicable
to the example of the hiring committee).  Thus, it is natural and
relevant to study the computational complexity of candidate control
parameterized by the number of voters. While control problems do not
model the full game-theoretic process of adding/deleting candidates,
they allow agents to compute what effects they might be able to
achieve, and, if the corresponding computational problem is tractable,
also how to achieve their goals.\footnote{To the best of our
  knowledge, game-theoretic aspects of this process of adding/deleting
  candidates have not been studied in detail. There is, however, a
  related line of research regarding \emph{strategic candidacy}, where
  the candidates themselves may decide to run or
  not~\shortcite{dut-jac-leb:j:strategic-candidacy} (see also, for example,
  the work of \citet{lan-mau-pol:c:strategic-candidacy},
  \citet{pol-obr-rab-kru-jen:c:strategic-candidacy}, and
  \citet{obr-elk-pol-rab:c:strategic-candidacy} for recent results).}

Finally, it is quite natural to consider the case where deleting
(adding) a particular candidate means also deleting (adding) a number
of other ones.  For example, if a hiring committee removes some
candidate from consideration, it might have to also remove all those
with weaker publication records; if people planning holidays disregard
some expensive hotel, they might also want to remove those that cost
more. Thus, we also study \emph{combinatorial variants} of candidate
control problems which model such settings.

\subsection{Main Contributions}

Our research sheds light on some surprising patterns that were not
(nearly as) visible in the context of classical complexity analysis of
election control. The two most interesting patterns can be summarized
as follows (by constructive control we mean variants of our problems
where we want to ensure some candidate's victory, whereas by destructive
control we mean cases where the goal is to prevent some candidate from
winning):

\begin{enumerate}\fixlist

\item In the non-combinatorial setting, destructive candidate control
  is easy for all our voting rules, either in the fixed-parameter
  tractability sense or via outright polynomial-time algorithms.

\item In the combinatorial setting, control by deleting candidates appears to be 
  computationally harder than control by adding
  candidates.

\end{enumerate}

We also found an interesting difference in the complexity of
non-combinatorial constructive control by deleting candidates between
Plurality and Veto rules (under Plurality we elect the candidate that
is ranked first most often; under Veto we elect the candidate that is
ranked last least often). This is especially interesting since the
rules are so similar and there is no such difference for the case of
adding candidates.

Our results (see \autoref{section_discussion} and
\autoref{tab:summary}; formal definitions are given in the next
section) are of four types.
For each of our problems we show that it is either:
\begin{enumerate}\fixlist
\item[(1)] in $\p$, 
\item[(2)] in $\fpt$ (that is, is fixed-parameter tractable),
\item[(3)] is $\wone$-hard but is in $\xp$, or
\item[(4)] is $\paranp$-hard.\footnote{There is one exception. For
    $\constantk$-Veto-\textsc{Comb}-DCAC ($\constantk \ge 2$) we only show $\xp$ membership.  Thus,
    from our point of view, the case of $\constantk$-Veto-\textsc{Comb}-DCAC is
    still partially open.}
\end{enumerate}
For each case, the parameterization is by the number of voters.
Naturally, results of Type~(1) are the most positive\footnote{We
  evaluate the results from the computational complexity perspective
  and, hence, regard computational efficiency as positive.} because
they give unconditionally efficient algorithms.  Results of Type~(2)
are quite positive too (the exponential part of the running time of
the corresponding algorithms depends only on the number of voters and
not on the whole input size).  The third kind is less positive (under
the usual assumption that $\fpt \neq \wone$, $\wone$-hardness
precludes the existence of $\fpt$ algorithms, but membership in $\xp$
means that there are algorithms which are polynomial-time if the
number of voters is assumed to be a constant).  Results of Type~(4)
are the most negative ones (they mean that the corresponding problems
are $\np$-hard even for a constant number of voters; this precludes
membership in $\xp$, under the usual assumption that $\p \neq
\np$).

We emphasize that almost all of our results follow by applying proof
techniques that might be useful in further research on the complexity
of election problems with few voters. In particular, our
$\wone$-hardness results follow via reductions from the $\wone$-complete
\probColorClique problem and have quite a universal
structure. Similarly, our $\paranp$-hardness proofs follow either via
reductions from the $\np$-complete \probCVC problem and use a universal trick to
encode graphs into elections with eight votes, or are based on embedding
\textsc{Set Cover} instances in our problems. Our $\fpt$ algorithms
also have a fairly universal structure and are based on what we call the
Signature Proof Technique.  Indeed, we believe that introducing these
proof techniques is an important contribution of this paper.

\subsection{Related Work}

The complexity study of election control was introduced
by~\citet{bar-tov-tri:j:control}, who were later followed by numerous
researchers---mostly working within the field of artificial
intelligence---including, for example,
\citet{hem-hem-rot:j:destructive-control} (who introduced the
destructive variants of the control problems),
\citet{mei-pro-ros-zoh:j:multiwinner} (who considered control in
multiwinner voting rules and introduced a model that unifies the
constructive and the destructive settings),
\citet{con-lan-xia:c:control-combinatorial-domain} (who considered a
form of election control for elections over sets of interrelated
candidates), and many others.  We point the reader to the surveys by
\citet{fal-hem-hem:j:cacm-survey} and \citet{RotSch2013}, the book
chapter of \citet{fal-rot:b:control}, and to several recent papers on
the topic, including those focusing on the Schulze and Ranked Pairs
rules~\shortcite{par-xia:strategic-schultze-ranked-pairs,men-sin:c:control-schulze},
and those focusing on Bucklin and Fallback rule, also including
experimental
studies~\shortcite{erd-fel-rot-sch:j:bucklin-control-theory,erd-fel-rot-sch:j:bucklin-control-experiments}.
Briefly put, it turns out that for standard voting rules, control
problems are typically $\np$-hard (however, it is worth noting that
some of these hardness results disappear in restricted
domains~\shortcite{BrandtBHH15,fal-hem-hem-rot:j:single-peaked-preferences,mag-fal:c:sc-control}).

There is a growing body of research regarding the parameterized
complexity of voting problems (see, for example, the survey
by~\citet{bet-bre-che-nie:b:fpt-voting}), where typical parameters
include the solution size (for example, the number of candidates which
can be added, that is, the budget) and parameters related to the
election size (usually, the number of candidates; or, as is in our case, the number of
voters).  When considering the solution size as the parameter, control
problems typically turn out to be hard (for examples,
see the works
of~\citet{bet-uhl:j:parameterized-complexity-candidate-control},~\citet{fen-liu-lua-zhu:j:parameterized-control},
and~\citet{liu-zhu:j:maximin}).  On the contrary, taking the number of
candidates as the parameter almost always leads to $\fpt$ results
(see, for example, the papers by~\citet{kou-kno-mni:c:candidates}, by
\citet{bre-fal-nie-sko-tal:c:elections-and-covers},
by~\citet{fal-hem-hem:j:multimode}, and
by~\citet{HemaspaandraLM16}).  So far, however,
only~\citet{bet-uhl:j:parameterized-complexity-candidate-control}
considered a \emph{control} problem parameterized by the number of
voters (for the Copeland rule), and~\citet{BrHaKaSe2014} showed
$\np$-hardness results of several winner determination problems even
for constant numbers of voters.  The parameter~``number of voters''
also received some attention in other voting
settings~(see, for example, the paper
of~\citet{bet-guo-nie:j:dodgson-parametrized} on winner determination
for Dodgson and Young rule and the paper of~\citet{dor-sch:j:parameterized-swap-bribery} on swap bribery
problems in $t$-Approval rules), and it is currently receiving
increased
attention~\shortcite{bre-che-fal-nic-nie:j:parametrized-shift-bribery,mis-nab-sin:c:minimax-av,fal-sko-sli-tal:c:top-k-counting,bre-fal-nie-tal:c:multiwinner-sb}.

The study of \emph{combinatorial} control was initiated in our recent
paper~\shortcite{bul-che-fal-nie-tal:j:combinatorial-voter-control},
where we focused on voter control.  A different notion of
combinatorial control was studied
by~\citet{erd-hem-hem:t:comb-partition-control}, and some of us also
considered combinatorial shift
bribery~\cite{bre-fal-nie-tal:j:combinatorial-shift-bribery}.

We stress that our combinatorial view of control is different from the
studies of combinatorial voting domains considered, for example,
by~\citet{bou-bra-dom-hoo-poo:cp-nets},~\citet{xia-con:c:strategy-proof-multi-issue},
and~\citet{mat-pin-ros-ven:c:cp-bribery}. There, the authors consider
sets of candidates of possibly exponential size and voters who express
their preferences succinctly, using special formalisms (such as, for
example, CP-nets). In our case, sets of candidates and preference
orders are expressed directly, and the combinatorial flavor comes from
considering bundles of candidates.

Finally, we mention that the original construction in our
multi-colored clique proofs had a very subtle bug that was noted and
fixed by \citeA{MausRot2016}; we incorporate the fix here.

\subsection{Organization}

The paper is organized as follows. In the next section we provide some
preliminaries regarding the standard election model, formal
definitions of our control problems, and a brief review of relevant
notions from parameterized complexity theory.  Then,
in~\autoref{section_discussion}, we discuss our results. Specifically,
in~\autoref{section_proof_techniques} we discuss our proof techniques
in a high-level fashion, giving the main intuitive ideas, and provide
the most illustrative full proofs in the following sections:
in \autoref{sec:mcc} we discuss $\wone$-hardness proofs based on the \mcctech, 
in \autoref{sec:cvc} we discuss $\paranp$-hardness proofs based on the \cvctech, 
in \autoref{sec:set} we discuss $\paranp$-hardness proofs based on the \setcovertech, 
in \autoref{sec:sig} we discuss $\fpt$ algorithms based on the Signature
Proof Technique, and in \autoref{sec:oth} we consider the remaining
problems.  Finally, in~\autoref{section_outlook}, we present
challenges for future research.  The proofs not presented in the main
text are deferred to an appendix.

\section{Preliminaries}

In this section, we provide some relevant notions concerning elections
and voting rules, control problems, and parameterized complexity.  We
denote the set $\{1, 2, \ldots, z\}$ by $[z]$. For a set $A$, we write
$\mathcal{P}(A)$ to denote the family of all subsets of $A$.

\subsection{Elections}

We consider the standard, ordinal model of elections (see, for example, the
handbook edited by \citet{arr-sen-suz:b:handbook-of-social-choice} for
a general overview of elections and social choice theory, or the
handbook of computational social
choice~\shortcite{bra-con-end-lan-pro:b:comsoc} for a more
computational perspective on the topic).  An election $E = (C,V)$
consists of a set~$C = \{c_1, \ldots, c_m\}$ of candidates and a
collection $V = (v_1, \ldots, v_n)$ of voters. Each voter $v_\ell$ has
a preference order~(that is, a vote), often denoted by $\pref_\ell$,
which ranks the candidates from the one that $v_\ell$ likes most to
the one that $v_\ell$ likes least.  For example, if
$C = \{c_1, c_2, c_3\}$ then voter $i$ with preference order
$c_1 \pref_i c_2 \pref_i c_3$ would most like $c_1$, then $c_2$, and
then $c_3$.  Throughout the text, we use masculine forms to refer to
the candidates and feminine forms to refer to the voters.

For a voter $v_\ell$ and two candidates, $c_i$ and $c_j$, we sometimes
write $v_\ell \colon c_i \pref c_j$ to indicate that $v_\ell$ prefers
$c_i$ to~$c_j$.  For a subset $A$ of candidates, by writing $A$ within
a preference order description (for example, $A \pref a \pref b$,
where $a$ and $b$ are some candidates not in $A$) we mean listing
the members of $A$ in some arbitrary but fixed order.  By writing
$\revnot{A}$ we mean listing the candidates in the reverse of this
arbitrary but fixed order.  Given an election $E = (C,V)$, for each
two candidates, $c_i, c_j \in C$, we define $N_E(c_i,c_j) \coloneqq
|\{v_\ell \in V \mid v_\ell\colon c_i \pref c_j\}|$ (that is,
$N_E(c_i,c_j)$ is the number of voters preferring $c_i$ to $c_j$).

A voting rule $\calR$ is a function that given an election $E = (C,V)$
outputs a set $\calR(E) \subseteq C$ of candidates that tie as winners
(we use the non-unique-winner model where the candidates in $\calR(E)$
are equally successful).  We study the following standard voting rules
(in each case, the candidates who receive the highest number of points
are the winners).

\paragraph{{$\boldsymbol\constantk$-Approval and $\boldsymbol\constantk$-Veto}}

  Under $\constantk$-Approval, $\constantk\geq 1$,
  each candidate gets a point from each voter that ranks him among
  the top~$\constantk$ positions.
  For $m$ candidates, $\constantk$-Veto is a synonym for
  $(m-t)$-Approval (we often view the score of a candidate under
  $t$-Veto as the number of vetoes, that is, the number of times he
  is ranked among the bottom $t$ positions).  We refer to $1$-Approval and
  $1$-Veto as the Plurality rule and the Veto rule, respectively,
  and we jointly refer to the voting rules in this group as approval-based rules.

\paragraph{{Borda rule and Maximin rule}} 

  Under the Borda rule, in election $E = (C,V)$ each
  candidate~$c \in C$ receives $\sum_{d \in C \setminus \{c\}} N_E(c,d)$ points.
  It is also convenient to think that under the Borda rule each 
  voter gives each candidate~$c$ as many points as there are
  candidates that this voter ranks below~$c$.
  Under the Maximin rule, each candidate~$c \in C$
  receives $\min_{d \in C \setminus \{c\}} N_E(c,d)$ points.

\paragraph{{Copeland$^\alpha$ rule}}

Under the Copeland$^\alpha$ rule (where $\alpha$ is a rational number,
$0 \leq \alpha \leq 1$), in election $E = (C,V)$ each candidate~$c$
receives $|\{ d \in C \setminus \{c\} \colon N_E(c,d) > N_E(d,c) \}| +
\alpha \cdot |\{ d \in C \setminus \{c\} \colon N_E(c,d) = N_E(d,c) \}|$
points.  Intuitively, under the Copeland$^\alpha$ rule we conduct a
head-to-head contest among each pair of candidates. For a given pair
of candidates, $c$ and $d$, the candidate who is preferred to the
other one by a majority of voters receives one point. If there is a tie,
both candidates receive $\alpha$ points.

\subsection{Control Problems}

We study \emph{candidate control} in elections, considering both
constructive control (CC) and destructive control (DC), by either
adding candidates (AC) or deleting candidates (DC). Thus, for example,
\textsc{CCAC} refers to constructive control by adding candidates.

For the case of problems of control by adding candidates, we make the
standard assumption that the voters have preference orders over all
the candidates---both those already registered and those that can be
added (that is, the unregistered candidates).
Naturally, when the election is conducted we consider the
preference orders to be restricted only to those candidates that
either were originally registered or were added. Similarly, for the
case of control by deleting candidates, to compute the result of the
election we restrict the preference orders only to those candidates
that were not deleted.

We consider combinatorial variants of our problems, where
adding/deleting a single candidate also automatically adds/deletes a whole
group of other candidates (in this we follow our earlier work on
combinatorial voter
control~\shortcite{bul-che-fal-nie-tal:j:combinatorial-voter-control}; see also
the work of \citet{erd-hem-hem:t:comb-partition-control}). In these
\emph{combinatorial variants} (denoted with a prefix Comb), we use
bundling functions $\combRule$ such that for each candidate $c$,
$\combRule(c)$ is the set of candidates which are also added if $c$ is
added (or, respectively, which are also deleted if $c$ is deleted).
For each candidate~$c$, we require that $c \in \combRule(c)$ and call
$\combRule(c)$ the \emph{bundle} of $c$.
For a given subset~$B$ of candidates,
we write~$\combRule(B)$ to denote $\bigcup_{c \in B}\combRule(c)$.  Bundling
functions are encoded by explicitly listing their values for all
arguments. 

Formally, given a voting rule~$\calR$, our problems are defined as
follows (we only list the combinatorial generalizations; the
non-combinatorial variants can be ``derived'' by using the identity
function as $\combRule$).

\probDef{
  $\calR$-\textsc{Comb}-\textsc{CCAC}
}
{
  An election~$(C,V)$, a set~$A$ of unregistered candidates such that
  the voters from $V$ have preference orders over $C\cup A$, a preferred
  candidate~$p\in C$,
  a bundling function~$\combRule \colon A \to \mathcal{P}(A)$, and a non-negative integer~$\solk$.
}
{
  Is there a subset $A'\subseteq A$ with $|A'|\le k$ such that $p\in \calR(C \cup \combRule(A'),V)$?
}

\probDef{
  $\calR$-\textsc{Comb}-CCDC
}
{
  An election~$(C,V)$, a preferred candidate~$p\in C$,
  a bundling function~$\combRule\colon C \to \mathcal{P}(C)$, and a non-negative integer~$\solk$.
}
{
  Is there a subset~$C' \subseteq C$ with $|C'| \leq k$ such
  that $p \in \calR(C \setminus \combRule(C'),V)$?
}

The destructive variants of our problems, $\calR$-\textsc{Comb}-DCAC and
$\calR$-\textsc{Comb}-DCDC, are defined analogously,
except that we replace the preferred candidate $p$ with the despised candidate $d$,  
and we ask whether it is possible to ensure that $d$ is \emph{not} a winner of the
election.
In the DCDC case, we explicitly disallow deleting any bundle
containing the despised candidate.
In the standard, non-combinatorial variants of control we omit the
prefix ``Comb'' and assume that for each candidate $c$ we have
$\combRule(c) = \{c\}$, omitting the bundling function in the respective
discussions.

We believe that our model of combinatorial candidate control is fairly
simple, but that it captures some of the most important features of
our motivating examples. We view it as simple because in a scenario
with $m$~candidates, there are at most~$m$ corresponding bundles of
candidates that can be added/deleted.  While in real life one might
expect many more (perhaps even exponentially many with respect to
$m$), this is enough to associate each candidate with a ``single
reason'' for which it could be added or deleted (for example, in the hiring
committee the single reason associated with candidate $c$ could be
``having only as many journal papers as $c$ has is not enough to be
considered for the position at hand''). Then, each candidate's bundle
would correspond to a set of candidates who also meet the condition
associated with him.  Note that each candidate could have a
bundle defined through a considerably different condition and, so, the
bundles could have quite an arbitrary structure.  Notably, even
this---quite simple---model turns out to be computationally difficult.

Indeed, one could consider an even simpler model. For
example, one could associate each candidate with a label and consider
actions of adding/deleting all candidates with a given label. 
(This is the model proposed by
\citet{erd-hem-hem:t:comb-partition-control} in the context of voter
control.) This model would not allow for a given candidate to belong
to more than one bundle and, in effect, might be computationally much
simpler than ours. Indeed, our techniques for showing hardness of the
combinatorial control problems would not apply to it.

\subsection{Parameterized Complexity}

A parameterized problem is in the complexity class $\fpt$ (termed fixed-param\-eter
tractable) if there exists an algorithm that, given an instance $I$ of
this problem (with parameter value $\park$ and instance size~$|I|$;
in this paper, the parameter value is always the number of voters
involved), computes an answer for this instance in $f(\park)\cdot
|I|^{O(1)}$ time, where $f$ is some computable function.  
A presumably larger complexity class is $\xp$ which consists of parameterized problems solvable in $O(|I|^{f(\park)})$~time.
Indeed, while the problems from both complexity classes, $\fpt$ and
$\xp$, come under the ``polynomial-time solvable when the
parameter~$\park$ is a constant'' description, it is decisive that in
the case of $\fpt$ the degree of the polynomial does not depend
on~$\park$, which is not the case for~$\xp$.  Problems in $\fpt$ are
viewed as tractable, whereas the class~$\xp$ is rather considered to
be at the high-level of the parameterized intractability hierarchy.
Specifically, it holds that
$\fpt \subseteq \wone \subseteq \wtwo \subseteq \cdots \subseteq \xp$.

Originally, the class $\wone$ was defined in terms of certain
circuit-based computations.  For our purposes, however, it is much
easier to define $\wone$ through its complete problems.  Specifically,
$\wone$ contains those problems that reduce to $\probColorClique$ (see
\autoref{def:mcc} below) in the parameterized sense.  A parameterized
reduction from a parameterized problem~$L$ to a parameterized
problem~$L'$ is a function that, given an instance~$(I,\park)$,
computes in time~$f(\park)\cdot |I|^{O(1)}$ an instance~$(I',\park')$
such that~$\park' \le g (\park)$
and~$(I,\park)\in L \Leftrightarrow (I',\park')\in L'$, where $f$ and
$g$ are some computable functions. We mention that in this paper all
reductions can in fact be performed in polynomial time.

\begin{definition}\label{def:mcc}
  An input instance of $\probColorClique$ consists of an undirected
  graph $G = (V(G),E(G))$ and a non-negative integer~$h$ such that
  the vertex set $V(G)$ is partitioned into $h$ sets,
  $V_1(G), \ldots, V_h(G)$, where each set corresponds to one of the
  $h$~colors (in a one-to-one manner).  We ask whether there exist
  $h$~vertices $v_1, \ldots, v_h$ such that for each $i$,
  $1 \leq i \leq h$, it holds that $v_i \in V_i(G)$, and each pair is
  connected by an edge.  We call the set of these $h$ vertices a
  \emph{multi-colored clique} of order $h$.
\end{definition}

We say that a problem is $\paranp$-hard if there is a proof of its
$\np$-hardness that produces an instance in which the value of the
parameter is upper-bounded by a constant. If a problem is
$\paranp$-hard for some parameter, then it cannot even belong to~$\xp$
for this parameter (unless $\p = \np$).  Similarly, if a problem is
$\wone$-hard, it cannot be in $\fpt$ (unless $\fpt = \wone$).  In our
$\paranp$-hardness proofs we will mostly rely on the following
$\np$-hard problem (and on its restricted variant, the \probCVC
problem; see \autoref{sec:cvc}).

\begin{definition}\label{def:sc}
  An instance of $\probSetCover$ consists of a ground set $X = \{x_1,
  \ldots, x_{n'}\}$, a family $\calS = \{S_1, \ldots, S_{m'}\}$ of subsets
  of $X$, and a non-negative integer~$h$.
  We ask whether it is possible to pick at most $h$ sets from~$\calS$ so
  that their union is~$X$.
  Such a collection of $h$ sets is called a \emph{set cover} of order $h$.
\end{definition}

For more details on parameterized complexity and parameterized
algorithms, we point the readers to the textbooks of \citet{CyFoKoLoMaPiPiSa2015},
\citet{DF13},~\citet{flu-gro:b:parameterized-complexity},
and~\citet{nie:b:invitation-fpt}.

\section{Discussion of Our Results and Proof
  Techniques}\label{section_discussion}

In this section, we review our results, discuss some relevant
patterns we identified in them, and provide a high-level overview of
our proof techniques.  This section can be viewed as a guide for
helping the reader to better understand the implications of our results,
presented in~\autoref{tab:summary}, and to get an intuitive
understanding of our means of obtaining them.  We begin by discussing
the results regarding the approval-based voting rules, then discuss
the results for the other voting rules, and finally describe our proof
techniques.

\newcommand{\multimode}{$\clubsuit$ }
\newcommand{\llull}{$\diamondsuit$ }
\newcommand{\bordacite}{$\heartsuit$ }
\newcommand{\betzler}{$\spadesuit$ }
\newcommand{\betzlerfollow}{$^\spadesuit$ }
\newcommand{\betzlerfollowphantom}{\phantom{$^\spadesuit$} }

\newcommand{\accactl}{$\calR$-CCAC  & \wonehardx & \wonehardx & \wonehardx & \wonehardx}
\newcommand{\accdctl}{$\calR$-CCDC  & $\fpt$    & \wonehardx & \wonehardx & \wonehardx}
\newcommand{\adcactl}{$\calR$-DCAC  & $\fpt$    & $\fpt$    & $\fpt$ & $\fpt$}
\newcommand{\adcdctl}{$\calR$-DCDC  & $\fpt$    & $\fpt$    & $\fpt$ & $\fpt$}

\newcommand{\acccactl}{$\calR$-\textsc{Comb}-CCAC  & \wonehardx & \wonehardx & \wonehardx & \wonehardx}
\newcommand{\acccdctl}{$\calR$-\textsc{Comb}-CCDC  & \paranph~($1$) & \paranph~($1$) & \paranph~($1$) & \paranph~($1$)}
\newcommand{\acdcactl}{$\calR$-\textsc{Comb}-DCAC  & $\fpt$    & $\fpt$ & \wonehardx & \xpopen}
\newcommand{\acdcdctl}{$\calR$-\textsc{Comb}-DCDC  & \paranph~($3$) & \paranph~($1$) & \paranph~($2$) & \paranph~($1$)}

\newcommand{\bccactl}{$\calR$-CCAC  & \paranph~($10$) & \paranph~($10$) & \paranph~($20$)~\betzlerfollow}
\newcommand{\bccdctl}{$\calR$-CCDC  & $\p$~\multimode & \paranph~($10$) & \paranph~($26$)~\betzlerfollow}
\newcommand{\bdcactl}{$\calR$-DCAC  & $\p$~\multimode & $\p$~\bordacite & $\p$~\llull}
\newcommand{\bdcdctl}{$\calR$-DCDC  & $\p$~\multimode & $\p$~\bordacite & $\p$~\llull}

\newcommand{\bcccactl}{$\calR$-\textsc{Comb}-CCAC  & \paranph~($6$) & \paranph~($2$) & \paranph~($3$)~\betzlerfollow}
\newcommand{\bcccdctl}{$\calR$-\textsc{Comb}-CCDC  & \paranph~($1$) & \paranph~($1$) & \paranph~($1$)~\betzlerfollow}
\newcommand{\bcdcactl}{$\calR$-\textsc{Comb}-DCAC  & $\p$ & \paranph~($2$) & \paranph~($3$)~\betzlerfollowphantom}
\newcommand{\bcdcdctl}{$\calR$-\textsc{Comb}-DCDC  & \paranph~($5$) & \paranph~($2$) & \paranph~($3$)~\betzlerfollowphantom}

\begin{table*}[t]
  \setlength{\tabcolsep}{3pt}
  \centering
  \small
  \begin{subtable}{.9\linewidth}
    \centering
      \caption{Approval-based voting rules}
    \begin{tabular}{c|cccc}
    \toprule
    Problem & Plurality & Veto & $\constantk$-Approval & $\constantk$-Veto \\
    \bottomrule
    \accactl \\
    \accdctl \\
    \adcactl \\
    \adcdctl \\
    \midrule
    \acccactl \\
    \acccdctl \\
    \acdcactl \\
    \acdcdctl \\
    \bottomrule
    \end{tabular}
  \end{subtable}

  \vspace{5px}
  
  \begin{subtable}{.9\linewidth}
    \centering
      \caption{Other voting rules}
    \begin{tabular}{c|ccc}
    \toprule
    Problem & Maximin & Borda & Copeland$^\alpha$ \\
    \bottomrule
    \bccactl \\
    \bccdctl \\
    \bdcactl \\
    \bdcdctl \\
    \midrule
    \bcccactl \\
    \bcccdctl \\
    \bcdcactl \\
    \bcdcdctl \\
    \bottomrule
    \end{tabular}
  \end{subtable} 
  \caption{\label{tab:summary}
  The complexity of candidate control 
    (constructive~(CC) and destructive~(DC), adding candidates~(AC) and 
    deleting candidates~(DC)) problems 
    for various voting rules~$\calR$ parameterized by the number of voters (for $\constantk$-Approval and $\constantk$-Veto
    we mean $\constantk \geq 2$; for Copeland$^\alpha$, we mean \ouralpha (which we define to always be a rational number);
    note that the results by~\protect\citet{bet-uhl:j:parameterized-complexity-candidate-control} hold only for $\alpha \in \{0, 1\}$).
    Results marked with \multimode and \llull are due to~\protect\citet{fal-hem-hem-rot:j:llull,fal-hem-hem:j:multimode},
    those marked with \bordacite are due to~\protect\citet{lor-nar-ros-bre-wal:c:replacing-candidates},
    and those marked with \betzlerfollow follow from the work of Betzler and Uhlmann for $\alpha \in \{0,1\}$ and
    are due to this paper for the remaining values.
    Cells containing statements of the form ``\paranph~($z$)'' mean that the relevant problem is $\np$-hard even with only $z$~voters.
    The question mark~($?$) means that the computational complexity is still open.}
\end{table*}

\subsection{Results for Approval-Based Voting Rules}

Approval-based rules are perhaps the simplest and the most frequently
used ones, so results regarding them are of particular interest.
Further, they exhibit quite interesting behavior with respect to the
complexity of candidate control parameterized by the number of voters.
For the approval-based voting rules discussed here,
the results regarding Plurality (and, to some extent, Veto),
are by far the most important ones.

In terms of standard complexity theory, all constructive and
destructive candidate control problems for Plurality and Veto are
$\np$-complete (see the works of \citet{bar-tov-tri:j:control} and
\citet{hem-hem-rot:j:destructive-control} for the results regarding
the Plurality rule; the results for Veto are easy to derive based on
those for Plurality and, for example,~\citet{elk-fal-sli:j:cloning} showed that Veto-\textsc{CCAC} is
$\np$-complete\footnote{For the cases of $t$-Approval and $t$-Veto, $t \geq 2$, the results
  are due to~\citet{lin:thesis:hard-election-problems}.}). 
Yet, if we consider parameterization by the
number of voters, the results change quite drastically. We make the
following observations:
\begin{enumerate}
\item The results for Plurality and Veto are no longer the same
  (specifically, Plurality-CCDC is in $\fpt$ whereas Veto-CCDC is
  $\wone$-hard).  This is quite surprising given both the similarities
  between these rules and the fact that their standard
  complexity-theoretic results for control are identical (yet, we mention
  that some results for them were known to be different previously;
  for example, weighted coalitional manipulation problem is in $\p$
  for the Plurality rule and is $\np$-complete for the Veto
  rule~\shortcite{hem-hem:j:dichotomy}).
\item For all the $t$-Approval and $t$-Veto rules (including Plurality
  and Veto), the destructive non-combinatorial candidate control
  problems are fixed parameter tractable. The
  constructive variants of these problems---with the exception of
  Plurality-CCDC---are $\wone$-hard.
\item For the combinatorial setting there is a sharp difference
  between control by adding candidates and control by deleting
  candidates.  Specifically, for both the Plurality rule and the Veto rule, 
  \textsc{Comb}-DCAC is fixed-parameter tractable and \textsc{Comb}-\textsc{CCAC}
  is $\wone$-hard, whereas \textsc{Comb}-CCDC and \textsc{Comb}-DCDC are
  $\paranp$-hard. For $t$-Approval and $t$-Veto with $t \geq 2$, the
  patterns are simpler. All the adding-candidates cases are
  $\wone$-hard (with one open case), and all the deleting-candidates cases are
  $\paranp$-hard.
\end{enumerate}

We conclude by noting that in each of the $\wone$-hard cases discussed
here we also provide an $\xp$ algorithm.  This means that,
under the assumption $\p \neq \np$,
these cases
cannot be strengthened to $\paranp$-hardness results and, thus, in
some sense our results are tight.  It is quite interesting that in the
non-combinatorial setting the demarcation line between fixed-parameter
tractable and $\wone$-hard problems goes along the
constructive-vs-destructive axis, whereas for the combinatorial
setting the line between $\wone$-hard (or, in-$\fpt$ for Plurality and
Veto) and $\paranp$-hard problems goes along the
adding-vs-deleting-candidates axis.

\subsection{Results for Maximin, Borda, and Copeland}

The results for Maximin, Borda, and Copeland$^\alpha$ rules are quite
different from those for $\constantk$-Approval and
$\constantk$-Veto. %
Here, instead of $\fpt$ and $\wone$-hardness results we find
polynomial-time algorithms and $\paranp$-hardness
results. Specifically, it has already been reported in the literature
that there are polynomial-time algorithms for destructive candidate
control under the Borda
rule~\shortcite{lor-nar-ros-bre-wal:c:replacing-candidates}, the
Copeland$^\alpha$ rule~\shortcite{fal-hem-hem-rot:j:llull}, and the Maximin
rule~\shortcite{fal-hem-hem:j:multimode}.  For constructive candidate
control, $\paranp$-hardness was already known for Copeland$^0$ and
Copeland$^1$~\shortcite{bet-uhl:j:parameterized-complexity-candidate-control},
while in this paper we establish the same $\paranp$-hardness for the
remaining values of $\alpha$ (that is, for $0 < \alpha < 1$), for the
Borda rule, and for the Maximin rule (in the latter case, only for
\textsc{CCAC}; CCDC is known to be in $\p$~\shortcite{fal-hem-hem:j:multimode}).

For the combinatorial setting, almost all of our problems are
$\paranp$-hard. The only exception is Maximin-\textsc{Comb}-DCAC,
which can be solved in polynomial time using an algorithm that, in
essence, is identical to the one for the non-combinatorial setting.
Our proofs of the $\paranp$-hardness results mostly rely on a
set-embedding technique (see the next section), which more-or-less
directly embeds instances of \textsc{Set Cover} in our control
problems. Due to the generality of this approach, we also prove that
for every voting rule $\calR$ that satisfies the unanimity principle
(that is, for every voting rule that chooses as the unique winner the
candidate that is ranked first by all the voters, if such a candidate exists),
$\calR$-\textsc{Comb}-CCDC is $\paranp$-hard.

Summarizing the discussion above, for our more involved voting rules,
the high-level intuition is that constructive candidate control is
$\paranp$-hard even in the non-combinatorial setting, whereas destructive
candidate control is polynomial-time solvable in the non-com\-bi\-na\-to\-ri\-al settings, but
becomes $\paranp$-hard in the combinatorial ones. The only exception
from this rule is Maximin for the destructive control by adding
candidates (Maximin-(\textsc{Comb})-DCAC).

\subsection{Proof Techniques}\label{section_proof_techniques}

We believe that one of the most important contributions of this paper
comes from identifying four very general proof techniques for
establishing our results.  Indeed, we believe that these techniques
might be useful in studying the complexity of other election problems
(especially, parameterized by the number of voters) and below we provide
their high-level, intuitive descriptions.

\paragraph{{Overview of the \mcctech}}

This is a technique used for establishing $\wone$-hard\-ness results.
The main idea is to provide a reduction from the \probColorClique (MCC)
problem parameterized by the order of the desired clique.  This
problem, which is a variant of the standard \probClique problem, is
more naturally suited for parameterized complexity analysis than
\probClique itself~\shortcite{FHRV09}.  Specifically, each vertex is
associated with one color out of $h$ colors overall and we seek a clique of order
$h$ containing one vertex of each color.  The high-level description
of our technique is as follows. We provide a reduction that, given an
MCC-instance, introduces a candidate for each vertex and two
candidates for each edge.
We have to ensure that the only successful control actions add exactly
the candidates (delete all but exactly the candidates) which correspond to a
multi-colored clique (we mean both the candidates corresponding to the
vertices of the clique and the candidates corresponding to the edges
between them).  We enforce this constraint using pairs of carefully
crafted votes such that if we have two vertices but not an edge
between them, then some candidate receives one more point than it
should have for our preferred candidate to win.  Note that the colors
help us to upper-bound the number of voters needed for the
construction (specifically, the number of voters created in the
construction depends only on the parameter, therefore giving rise to
the parameterized hardness).  Formal proofs using this technique are
given in~\autoref{section_formal_proofs_mcc}.

\paragraph{{Overview of the \cvctech}}

This is a technique used for establishing $\paranp$-hard\-ness results
for non-combinatorial constructive candidate control problems.  The
main idea is to give a reduction from the \probCVC (CVC) problem.
This problem is a variant of the standard \probVC problem where the
input graph is guaranteed to be cubic (that is, each vertex is incident to exactly three edges).  The crucial observation used in
this technique is that the edges of a cubic graph can be partitioned
into four disjoint matchings.  This fact allows us to encode all
information regarding the graph in a constant number of votes, in a
way that ensures that the actions of adding/deleting candidates
correspond to covering edges.  Formal proofs using this technique are
given in~\autoref{section_formal_proofs_cvc}.

\paragraph{{Overview of the \setcovertech}}

This is a fairly simple technique for showing $\paranp$-hard\-ness
results for combinatorial control by adding/deleting candidates. The
idea is to reduce from the \probSetCover problem using the
bundling function to encode the given sets.  Due to the power of
bundling, a constant number of voters suffices for the reduction.
Formal proofs using this technique are given
in~\autoref{section_formal_proofs_sc}.

\paragraph{{Overview of the \signaturetech}}

This is a group of two similar techniques for showing $\fpt$ results
for the case of destructive control under $t$-Approval/$t$-Veto rules.
The first technique in the group works for problems of destructive
control by adding candidates.  The main idea is to group candidates by
finding some equivalences between them in the sense that it does not
make a difference which candidate from the set we add.  In other
words, often it is possible to limit the number of candidates that one
has to consider by identifying their most crucial properties (such as
the subsets of voters where the candidates are ranked ahead of some
given candidate); we refer to these properties as signatures.
Building upon this idea, and by upper-bounding the number of such
groups with function solely depending on the number of voters, we
achieve fixed-parameter tractability results.  The second technique is
of similar nature and applies to problems of destructive control by
deleting candidates.  Formal proofs using this technique are given
in~\autoref{section_formal_proofs_signatures}.  \bigskip

Almost all of the proofs in this paper follow by applying one of the
above four techniques.  The few remaining ones follow by direct
arguments and are given in \autoref{sec:other-proofs}.  The following
sections are ordered by the proof technique employed and present the
most notable results. Proofs omitted here are presented in the
appendix.

\section{\mcctech}\label{section_formal_proofs_mcc}
\label{sec:mcc}

\newcommand{\vertexsetsize}{\ensuremath{n'}}
\newcommand{\edgesetsize}{\ensuremath{m'}}

We start our technical discussion by describing the Multi-Colored
Clique proof technique.  We apply it to showing $\wone$-hardness of
candidate control problems for $t$-Approval/$t$-Veto rules. Indeed,
all $\wone$-hardness proofs in this paper are based on this
technique.  Specifically, we prove the following statements (all
results are for the parameterization by the number of voters):
  
\begin{enumerate}
\item For each fixed integer~$\constantk \geq 1$ and for each voting rule {$\calR \in
    \{$}$\constantk$-Approval, $\constantk$-Veto{$\}$}, $\calR$-\textsc{CCAC} (and
  therefore also $\calR$-\textsc{Comb}-\textsc{CCAC}) is \wonehl.
\item For each fixed integer~$\constantk \ge 2$ 
and for each voting rule {$\calR \in
    \{$}Veto, $\constantk$-Approval, $\constantk$-Veto{$\}$}, $\calR$-CCDC is \wonehl.
\item For each fixed integer~$\constantk \ge 2$, $t$-Veto-\textsc{Comb}-DCAC is \wonehl.
\end{enumerate}

All these results follow by reductions from $\probColorClique$ (hence
the name of the technique) and are quite similar in spirit. Thus, we
start by providing some common notation and observations for all of
them.

Let $I=\probMCCInstance$ be a $\probColorClique$ instance with graph
$G$ and non-negative integer $h$ (recall \autoref{def:mcc}).
The vertices of $G$ are partitioned into
$h$ sets, $V_1(G), \ldots, V_h(G)$, each containing all vertices
colored with a given color.  Without loss of generality, we assume
that $h > 2$ and each $V_i(G)$ contains the same number of vertices, denoted by
$\vertexsetsize$, and we rename the vertices so that for each color
$i$, $1 \leq i \leq h$, we have $V_i(G) = \{v^{(i)}_1, \ldots,
v^{(i)}_{\vertexsetsize}\}$.  The task is to decide whether there is a
clique of order~$h$ where each vertex comes from a different set
$V_i(G)$.  Moreover, and without loss of generality, we assume that
each edge in $G$ connects vertices with different colors, that the
input graph contains at least two vertices, and that this graph is
connected.

In our reductions, given an instance $I=\probMCCInstance$, 
we build elections with the
following candidates related to the graph~$G$ (in addition to the
candidates specific to each particular reduction).  For each vertex $v
\in V(G)$, we introduce a candidate denoted by the same symbol. For
each edge~$e = \{u,v\}$, we introduce two candidates $(u,v)$ and
$(v,u)$ (while our original graph is undirected, for our construction
we treat each undirected edge as two directed ones, one in each
direction).

In the description of our preference orders, we will use the following
orders over subsets of candidates:
\begin{enumerate}
  \item For each color $i$, when we write
  $V_i(G)$ in a preference order, we mean the order
  \begin{align*}
    V_i(G)\colon~ v^{(i)}_1 \pref v^{(i)}_2 \pref \cdots \pref v^{(i)}_{n'}\text{.}
  \end{align*}
  \item For each color $i$, each vertex $v^{(i)}_\ell \in
  V_i(G)$, and each color $j$ with $j \neq i$, we write
  $L(v^{(i)}_{\ell},j)$ to denote the order obtained from
  \[
  L(v^{(i)}_{\ell},j)\colon~ (v^{(i)}_{\ell},v^{(j)}_1) \pref \cdots
  \pref (v^{(i)}_{\ell},v^{(j)}_{\vertexsetsize})\]
  by removing those candidates $(v^{(i)}_{\ell},v^{(j)}_h)$ for which
  there is no edge~$\{v^{(i)}_{\ell},v^{(j)}_h\}$ in~$G$.
  Intuitively, $L(v^{(i)}_{\ell},j)$ lists all edge candidates for
  edges which have endpoint $v^{(i)}_{\ell}$ and go to vertices of
  color $j$ (the particular order of these edges in
  $L(v^{(i)}_{\ell},j)$ is irrelevant for our constructions).

  \item The following two preference orders are crucial for the
  $\probColorClique$ technique.  For each pair of colors, $i,j$ ($1 \leq i,j
  \leq h$, $i \neq j$) we define $R(i,j)$ and $R'(i,j)$ as follows:
  \begin{align*}
    R(i,j) \colon & v^{(i)}_1 \pref L(v^{(i)}_1,j)
                    \pref v^{(i)}_2 \pref L(v^{(i)}_2,j) 
                    \pref \cdots
                    \pref v^{(i)}_{\vertexsetsize} \pref L(v^{(i)}_{\vertexsetsize},j), \\[2mm]
    R'(i,j) \colon &
                     L(v^{(i)}_1,j) \pref v^{(i)}_1 
                     \pref L(v^{(i)}_2,j) \pref v^{(i)}_2
                     \pref \cdots
                     \pref L(v^{(i)}_{\vertexsetsize},j) \pref v^{(i)}_{\vertexsetsize}.
  \end{align*}
  
  The idea behind $R(i,j)$ and $R'(i,j)$ is as follows. Consider a
  setting where $u$ is a vertex of color $i$ and $v$ is a vertex of
  color $j$ (that is, $u \in V_i(G)$ and $v \in V_j(G)$).  Note that
  $R(i,j)$ and $R'(i,j)$ contain all candidates from $V_i(G)$ and
  $E(i,j)$. If we restrict these two preference orders to candidates~$u$ and
  $(u,v)$, then they will become $u \pref (u,v)$ and $(u,v) \pref u$.
  That is, in this case they are reverses of each other. However, if we
  restrict them to $u$ and some candidate $(u',v')$ with $u \neq u'$,
  then either they will be both $u \pref (u',v')$ or they will
  be both $(u',v') \pref u$. 
  Using this effect is at the heart of our
  constructions.

\item
  For each two colors $i$ and $j$, ($1 \leq i < j \leq h$, $i \neq j$),
  let $e(\{i,j\}) = (e^{\{i,j\}}_1, \ldots, e^{\{i,j\}}_t)$ denote
  some fixed sequence of all the edges between the vertices from
  $V_i(G)$ and $V_j(G)$ (the specific order of the edges in this
  sequence is irrelevant and we pick an easily computable one). Given
  an edge $e^{\{i,j\}}_\ell = \{u,v\}$ from this sequence, with
  $u \in V_i(G)$ and $v \in V_j(G)$, by $e^{(i,j)}_\ell$ we mean the
  candidate $(u,v)$ and by $e^{(j,i)}_\ell$ we mean the candidate
  $(v,u)$. We define the following two
  preference orders:
  \begin{align*}
    E(i,j) \colon& \;\;
                   e^{(i,j)}_1 \pref e^{(j,i)}_1 \;\; 
             \pref e^{(i,j)}_2 \pref e^{(j,i)}_2 \;\; 
             \pref \;\;\cdots\;\; \pref \;\;
                   e^{(i,j)}_t \pref e^{(j,i)}_t, \\[2mm]
    E(j,i) \colon& \;\;
                   e^{(j,i)}_1 \pref e^{(i,j)}_1 \;\;
             \pref e^{(j,i)}_2 \pref e^{(i,j)}_2 \;\; 
             \pref\;\; \cdots\;\; \pref \;\;
                   e^{(j,i)}_t \pref e^{(i,j)}_t. 
  \end{align*}
  Both $E(i,j)$ and $E(j,i)$ list all the candidates that correspond
  to the edges between vertices of colors $i$ and $j$ and the
  difference is that for each edge of the form
  $\{v^{(i)}_\ell,v^{(j)}_h\}$, in $E(i,j)$ we have
  $(v^{(i)}_\ell, v^{(j)}_h) \pref (v^{(j)}_h, v^{(i)}_\ell)$ and in
  $E(j,i)$ we have
  $(v^{(j)}_h, v^{(i)}_\ell) \pref (v^{(i)}_\ell, v^{(j)}_h)$. This
  construction of~$E(i,j)$ and $E(j,i)$ is due to
  \shortciteA{MausRot2016}.  We thank them for pointing out the flaw
  in our original construction and repairing it
  (we explain why the current construction works within the proofs).

\end{enumerate}

With the above setup, we are ready to prove the results of this
section. Here we give the most interesting examples of proofs; the
remaining ones are in Appendix~\ref{app:mcc}.

\begin{theorem}\label{lem:plurality-ccac}
  Plurality-\textsc{CCAC}, parameterized by the number of voters, is $\wone$-hard.
\end{theorem}

\begin{proof}
  Let $I=\probMCCInstance$ be our input instance of $\probColorClique$
  with graph~$G$ and non-negative integer $h$. Let the notation be the
  same as described above the theorem statement. We form an instance
  $I'$ of Plurality-\textsc{CCAC} as follows. Let the registered
  candidate set~$C$ consist of two candidates, $p$ and $d$, and let
  the set~$A$ of unregistered candidates contain all vertex
  candidates and all edge candidates for~$G$. Let $p$ be the
  preferred candidate. We construct the election such that the current
  winner is~$d$.  We introduce the following groups of voters (when we
  write ``$\cdots$'' in a preference order, we mean listing all the
  remaining candidates in some arbitrary order; we illustrate the
  construction in Example~\ref{example:mcc} after the proof):
  
  \begin{enumerate}
  \item For each color $i$ ($1 \leq i \leq h$), 
  we have one voter with preference order of the form
    \[ V_i(G) \pref d \pref \cdots \pref p.\]
  \item For each pair of colors $i,j$ ($1 \leq i,j \leq h$, $i \neq
    j$), we have $h-1$ voters with preference order of the form
    \[E(i,j) \pref d
    \pref \cdots \pref p.\]
  \item For each pair of colors $i,j$ ($1 \leq i,j \leq h$, $i \neq
    j$) we have two voters, one with
    preference order of the form 
    \[R(i,j) \pref d \pref \cdots \pref p,\]
    and one with preference
    order of the form
    \[R'(i,j) \pref d \pref \cdots \pref p.\]
  \item We have $h$~voters with preference order of the form
  \[d \pref \cdots \pref p,\]
  and
    $h$~voters with preference order of the form
    \[p \pref \cdots \pref d.\]
  \end{enumerate}

  Note that the total number of voters is
  $3h+2(h+1)\cdot {h \choose 2}$ and that the current winner is $d$
  with the score of $(2h+2(h+1)\cdot {h \choose 2})$~points.  We set
  the budget $k \coloneqq h + 2{h \choose 2} = h^2$.  This completes
  the construction (which is a parameterized reduction).

  We now claim that it is possible to ensure that $p$ becomes a winner by
  adding at most $k$ candidates if and only if $I$ is
  a ``yes''-instance.

  First, assume that $I$ is a ``yes''-instance of $\probColorClique$
  and let $Q$ be a size-$h$ subset of vertices that forms a multi-colored
  clique in $I$. If we add to our election the
  $h$ vertex candidates from $Q$ and all edge candidates that correspond
  to edges between the candidates from $Q$, then, in the resulting
  election, each candidate (including $p$ and $d$) will have $h$~points
  (for example, each of the added vertex candidates will receive one
  point from the first group of voters and $h-1$ points from the third
  group of voters). Thus, everyone will win.

  Now, assume that it is possible to ensure $p$'s victory by adding at
  most $k$~candidates. Let $A'$ be a subset of candidates such that
  $|A'| \leq k = h + 2{h \choose 2}$ and adding the candidates from
  $A'$ to the election ensures that $p$ is a winner. Irrespective of
  the contents of the set~$A'$, in the resulting election $p$ will
  have $h$~points. Thus, it follows that $d$ must lose all the points
  from the first three groups of voters. 
  This implies the following facts:
  \begin{enumerate}
  \item[(i)] For each color $i$, $1 \leq i \leq h$, $A'$ contains at least
    one vertex from $V_i(G)$. Otherwise $d$ would not lose all the
    points from the first group of voters. (This also ensures that $d$ loses
    all the points from the third group.)

  \item[(ii)] For each pair of colors $i,j$ ($1 \leq i < j \leq h$),
    $A'$ contains at least one edge candidate $(u,v)$ such that
    $u \in V_i(G)$ and $v \in V_j(G)$, and at least one edge candidate
    $(u',v')$ such that $u' \in V_j(G)$ and $v' \in V_i(G)$. Why is
    this the case? If for some pair of colors $i$ and $j$ neither of
    such edge candidates were included in $A'$, then $d$ would receive
    at least one point from the second group of voters, which would
    preclude $p$'s victory. On the contrary, if for some colors $i$
    and $j$, $A'$ contained edge candidate $(u,v)$ with $u \in V_i(G)$
    and $v \in V_j(G)$ but no edge candidate $(u',v')$ with
    $u' \in V_j(G)$ and $v' \in V_i(G)$, then $(u,v)$ would recevie
    $2(h-1)$ points from the second group of voters and $p$, again,
    would not be a winner.
  \end{enumerate}
  By a simple counting argument, the above two facts lead to the
  conclusion that for each color $i$, $1 \leq i \leq h$, $A'$ contains
  exactly one candidate from $V_i(G)$ and for each pair of colors
  $i,j$ ($1 \leq i, j \leq h$, $i \neq j$), $A'$ contains exactly one
  edge candidate $(u,v)$ such that $u \in V_i(G)$ and $v \in V_j(G)$.
  This leads to yet another observation:
  \begin{enumerate}
  \item[(iii)] For each pair of colors $i,j$ ($1 \leq i < j \leq h$),
    if $(u,v)$ and $(u',v')$ are the two candidates from $A'$ such
    that $u \in V_i(G)$, $v \in V_j(G)$ and
    $u' \in V_j(G), v' \in V_i(G)$, then it must be the case that
    $(u',v') = (v,u)$. Indeed, if this were not the case then,
    analogously to the reasoning in item (ii) above, either $(u,v)$ or
    $(u',v')$ would receive $2(h-1)$ points from the second group of
    voters and $p$ would not be a winner of the resulting election.
  \end{enumerate}

  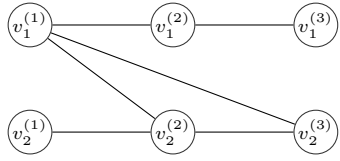
\begin{figure}
  \begin{center}
    \begin{tikzpicture}[scale=0.95]
      \scriptsize
      \tikzset{node/.style={circle,draw=black!80,minimum size=12pt,inner sep=0pt}}

      \node[node] (v11) at (0,1.5) {$v^{(1)}_1$};
      \node[node] (v21) at (2,1.5) {$v^{(2)}_1$};
      \node[node] (v31) at (4,1.5) {$v^{(3)}_1$};
      \node[node] (v12) at (0,0)   {$v^{(1)}_2$};
      \node[node] (v22) at (2,0)   {$v^{(2)}_2$};
      \node[node] (v32) at (4,0)   {$v^{(3)}_2$};
      
      \draw (v11) to node [auto,near start] {} (v21);
      \draw (v11) to node [auto,near     start] {} (v32);
      \draw (v11) to node [auto,near start] {} (v22);
      \draw (v21) to node [auto,near start] {} (v31);
      \draw (v12) to node [auto,near start] {} (v22);
      \draw (v22) to node [auto,near start] {} (v32);
    \end{tikzpicture}
  \end{center}
  \caption{The input graph considered in our example for~\autoref{lem:plurality-ccac}.}
  \label{figureexamplemcc}
  \end{figure}

  Now, to show that $G$ has a multi-colored clique of order~$h$, it
  suffices to show that for each two vertices~$u,v\in A'$ we have that
  $(u,v)\in A'$.  Consider two arbitrary vertices~$u,v\in A'$ and let
  $i,j$ be two colors with $u\in V_i(G)$ and $v\in V_j(G)$.  As per
  our previous reasoning, let $(u',v'), (v',u')\in A$ denote the pair
  of edge candidates that corresponds to the color pair $(i,j)$, that
  is, $u'\in V_i(G)$ and $v'\in V_j(G)$.  We show that
  $(u,v) = (u',v')$.  Suppose for the sake of contradiction that
  $(u,v)\neq (u',v')$, implying that either $u\neq u'$ or $v\neq v'$.
  Let us assume, without loss of generality, that $u\neq u'$ (the case
  with $v\neq v'$ is analogous).  First, observe that there is a total
  of $h+2(h+1)\cdot {h \choose 2}= h^3 =h\cdot k$ voters in the first
  three voter groups and they all give points to the newly added
  candidates.  Since each added candidate can have at most $h$~points,
  it follows that $|A'|=k$ and each added candidate receives exactly
  $h$~points.  By the observations regarding preference orders
  $R(i,j)$ and $R'(i,j)$ (in their definition, before this proof), and
  by obsevation (iii) above,
  either vertex candidate~$u$ or edge candidate~$(u',v')$ is ranked
  first by at least two voters from the third group.
  If this were the case for edge candidate~$(u',v')$, then---including
  the voters from the second group---this candidate would have more
  than $h$~points and $p$ would not be a winner. If this were the case
  for vertex candidate~$u$ (and none of the edge candidates were
  ranked first by more than one of the voters in the third group),
  then this vertex candidate would receive at least $h$~points from
  the voters in the third group and one point from the voters in the
  first group. Again, $p$ would not be a winner. Thus it must be that
  $(u,v) = (u',v')$, implying that for each $u, v \in A'$, there is an
  edge $\{u,v\}$ in the graph and, so, $G$ contains a multi-colored
  clique of order $h$.
\end{proof}

\begin{example}\label{example:mcc}
  We provide an example for the reduction described in the proof
  of~\autoref{lem:plurality-ccac}.  The input graph is depicted
  in~\autoref{figureexamplemcc}, we take $h \coloneqq 3$, and the
  election constructed by the reduction is given in
  \autoref{tabularexamplemcc}. In this table, the registered
  candidates are typeset in bold, the unregistered ones are typeset
  normally, and the added candidates are marked with gray background.
  These added candidates, who correspond to the multi-colored clique
  $\{v^{(1)}_1, v^{(2)}_2, v^{(3)}_2\}$ are:
  \begin{align*}
      & v^{(1)}_1, v^{(2)}_2, v^{(3)}_2, \\
      & (v^{(1)}_1, v^{(2)}_2), (v^{(1)}_1, v^{(3)}_2), (v^{(2)}_2, v^{(3)}_2), \\
      & (v^{(2)}_2, v^{(1)}_1), (v^{(3)}_2, v^{(1)}_1), (v^{(3)}_2, v^{(2)}_2).
  \end{align*}
  We see that $p$, with three points, is among the winners of this
  election (as are all the other candidates).

  \newcommand{\breakline}{\\ &&\\[-2.2ex]}
  \begin{table}
    \centering
    \resizebox{1\textwidth}{!}{
      \begin{tabular}{@{}l|ll@{}}
        Group, & Preference order \\
        Color(s) & \\
        \midrule
        $1$, $(1)$   & \graytext{$v^{(1)}_1$} $ \pref v^{(1)}_2 \pref \mathbf{d} \pref \cdots$ \breakline
        $1$, $(2)$   & $v^{(2)}_1 \pref$ \graytext{$v^{(2)}_2$} $\pref \mathbf{d} \pref \cdots$ \breakline
        $1$, $(3)$   & $v^{(3)}_1 \pref $ \graytext{$v^{(3)}_2$} $\pref \mathbf{d} \pref \cdots$ \breakline
        &\\[-1.6ex]
        \hline
        &\\[-1.6ex]
        $2$, $(1,2)$ & $(v^{(1)}_1, v^{(2)}_1) \pref (v^{(2)}_1, v^{(1)}_1) \pref $ \graytext{$(v^{(1)}_1, v^{(2)}_2) $} $\pref$ \graytext{$(v^{(2)}_2, v^{(1)}_1)$} $\pref (v^{(1)}_2, v^{(2)}_2)  \pref (v^{(2)}_2, v^{(1)}_2)\pref \mathbf{d} \pref \cdots$ & ($2$ copies) \breakline
        $2$, $(2,1)$ & $(v^{(2)}_1, v^{(1)}_1) \pref (v^{(1)}_1, v^{(2)}_1) \pref$ \graytext{$(v^{(2)}_2, v^{(1)}_1)$} $\pref$ \graytext{$(v^{(1)}_1, v^{(2)}_2)$} $\pref (v^{(2)}_2, v^{(1)}_2) \pref (v^{(1)}_2, v^{(2)}_2) \pref \mathbf{d} \pref \cdots$ & ($2$ copies)\breakline
        $2$, $(1,3)$ & \graytext{$(v^{(1)}_1, v^{(3)}_2)$} $\pref$ \graytext{$(v^{(3)}_2, v^{(1)}_1)$} $\pref \mathbf{d} \pref \cdots$  &($2$ copies) \breakline
        $2$, $(3,1)$ & \graytext{$(v^{(3)}_2, v^{(1)}_1)$} $\pref$ \graytext{$(v^{(1)}_1, v^{(3)}_2)$} $\pref \mathbf{d} \pref \cdots$ &($2$ copies) \breakline
        $2$, $(2,3)$ & $(v^{(2)}_1, v^{(3)}_1) \pref (v^{(3)}_1, v^{(2)}_1) \pref $ \graytext{$(v^{(2)}_2, v^{(3)}_2)$} $\pref$ \graytext{$(v^{(3)}_2, v^{(2)}_2)$} $\pref \mathbf{d} \pref \cdots$ &($2$ copies) \breakline
        $2$, $(3,2)$ & $(v^{(3)}_1, v^{(2)}_1) \pref (v^{(2)}_1, v^{(3)}_1) \pref$ \graytext{$(v^{(3)}_2, v^{(2)}_2)$} $\pref$ \graytext{$(v^{(2)}_2, v^{(3)}_2)$} $\pref \mathbf{d} \pref \cdots$ & ($2$ copies) \breakline
        &&\\[-1.6ex]
        \hline
        &&\\[-1.6ex]
        $3$, $(1,2)$ & \graytext{$v^{(1)}_1$} $\pref (v^{(1)}_1, v^{(2)}_1) \pref$ \graytext{$(v^{(1)}_1, v^{(2)}_2)$} $\pref v^{(1)}_2 \pref (v^{(1)}_2, v^{(2)}_2) \pref \mathbf{d} \pref \cdots$ \breakline
        $3$, $(1,2)$ & $(v^{(1)}_1, v^{(2)}_1) \pref$ \graytext{$(v^{(1)}_1, v^{(2)}_2)$} $\pref$ \graytext{$v^{(1)}_1$} $\pref (v^{(1)}_2, v^{(2)}_2) \pref v^{(1)}_2 \pref \mathbf{d} \pref \cdots$ \breakline
        $3$, $(2,1)$ & $v^{(2)}_1$ $\pref (v^{(2)}_1, v^{(1)}_1) \pref$ \graytext{$v^{(2)}_2$} $\pref$ \graytext{$(v^{(2)}_2, v^{(1)}_1)$} $\pref (v^{(2)}_2, v^{(1)}_2) \pref \mathbf{d} \pref \cdots$ \breakline
        $3$, $(2,1)$ & $(v^{(2)}_1, v^{(1)}_1) \pref v^{(2)}_1 \pref$ \graytext{$(v^{(2)}_2, v^{(1)}_1)$} $\pref (v^{(2)}_2, v^{(1)}_2) \pref$ \graytext{$v^{(2)}_2$} $\pref \mathbf{d} \pref \cdots$ \breakline
        $3$, $(1,3)$ & \graytext{$v^{(1)}_1$} $\pref$ \graytext{$(v^{(1)}_1, v^{(3)}_2)$} $\pref v^{(1)}_2 \pref \mathbf{d} \pref \cdots$ \breakline
        $3$, $(1,3)$ & \graytext{$(v^{(1)}_1, v^{(3)}_2)$} $\pref$ \graytext{$v^{(1)}_1$} $\pref v^{(1)}_2 \pref \mathbf{d} \pref \cdots$ \breakline
        $3$, $(3,1)$ & $v^{(3)}_1 \pref$ \graytext{$v^{(3)}_2$} $\pref$ \graytext{$(v^{(3)}_2, v^{(1)}_1)$} $\pref \mathbf{d} \pref \cdots$ \breakline
        $3$, $(3,1)$ & $v^{(3)}_1 \pref$ \graytext{$(v^{(3)}_2, v^{(1)}_1)$} $\pref$ \graytext{$v^{(3)}_2$} $\pref \mathbf{d} \pref \cdots$ \breakline
        $3$, $(2,3)$ & $v^{(2)}_1 \pref (v^{(2)}_1, v^{(3)}_1) \pref$ \graytext{$v^{(2)}_2$} $\pref$ \graytext{$(v^{(2)}_2, v^{(3)}_2)$} $\pref \mathbf{d} \pref \cdots$ \breakline
        $3$, $(2,3)$ & $(v^{(2)}_1, v^{(3)}_1) \pref v^{(2)}_1 \pref$ \graytext{$(v^{(2)}_2, v^{(3)}_2)$} $\pref$ \graytext{$v^{(2)}_2$} $\pref \mathbf{d} \pref \cdots$ \breakline
        $3$, $(3,2)$ & $v^{(3)}_1 \pref (v^{(3)}_1, v^{(2)}_1) \pref$ \graytext{$v^{(3)}_2$} $\pref$ \graytext{$(v^{(3)}_2, v^{(2)}_2)$} $\pref \mathbf{d} \pref \cdots$ \breakline
        $3$, $(3,2)$ & $(v^{(3)}_1, v^{(2)}_1) \pref v^{(3)}_1 \pref$ \graytext{$(v^{(3)}_2, v^{(2)}_2)$} $\pref$ \graytext{$v^{(3)}_2$} $\pref \mathbf{d} \pref \cdots$ \breakline
        &&\\[-1.6ex]
        \hline
        &&\\[-1.6ex]
        $4$         & $\mathbf{d} \pref \cdots$ & ($3$ copies) \breakline        
        $4$         & $\mathbf{p} \pref \cdots$ & ($3$ copies)        
      \end{tabular}
    }
    \caption{The election constructed in the proof of
      \autoref{lem:plurality-ccac} for our example, that is, for the
      input graph depicted in~\autoref{figureexamplemcc}.  The
      registered candidates (typeset in bold) are $d$ and~$p$.  The added candidates
      corresponding to picking the multi-colored clique $\{v^{(1)}_1,
      v^{(2)}_2, v^{(3)}_2\}$ are typeset with gray background.}
    \label{tabularexamplemcc}
\end{table}

\end{example}

We now consider the Veto-\textsc{CCAC} case,
which,
despite being a simple modification of the last proof,
is quite intriguing.

\begin{theorem}\label{lem:veto-ccac}
  Veto-\textsc{CCAC},
  parameterized by the number of voters,
  is $\wone$-hard.
\end{theorem}

\begin{proof}
  One can use the same construction (and proof) as for the Plurality-\textsc{CCAC}
  case (Theorem~\ref{lem:plurality-ccac}),
  but with the following modifications
  (note that the order is important,
  that is,
  we perform the second modification only after we have performed the first modification):
  \begin{enumerate}
    \item swap the occurrences of $p$ and $d$ in every vote, and
    \item reverse each vote.
  \end{enumerate}
  In effect, prior to adding candidates, $p$ is vetoed by all but $h$
  voters and $d$ is vetoed by exactly $h$ voters. If we add vertex candidates and edge candidates that correspond
  to a multi-colored clique, then every candidate in the election is
  vetoed by exactly $h$ voters and all the candidates are winners. 

  For the reverse direction, analogously as in the Plurality case
  (Theorem~\ref{lem:plurality-ccac}), we note that we have to add
  exactly one vertex candidate of each color and exactly one edge
  candidate for each (ordered) pair of colors (otherwise $p$ would
  receive more than $h$ vetoes, or one of the added candidates would
  receice fewer than $h$ vetoes). To argue that for each pair of
  vertex candidates $u$ and $v$ that we add, we also have to add edge
  candidate $(u,v)$, we use the same reasoning as in the Plurality
  case, but pointing out that if some candidate receives two vetoes
  from the third group of voters, then some other one receives,
  altogether, fewer than $h$ vetoes and $p$ is not a winner.
\end{proof}

To see why this result is intriguing, let us consider the following
voting rule, that we call \emph{TrueVeto}.  Under TrueVeto, a
candidate~$c$ is a winner if none of the voters ranks $c$ last. TrueVeto-\textsc{CCAC} is indeed $\np$-complete
(this can be proved by a reduction
from \textsc{Set Cover}, for example), but it is also in
$\fpt$ (when parameterized by the number of voters; an algorithm
similar to that for Plurality-DCAC, based on our signatures technique,
works; see \autoref{section_proof_techniques}).  If a Veto
election contained more candidates than voters, then at least one
candidate would never be vetoed and, in effect, the election would be
held according to the TrueVeto rule.  This means that in the proof
which shows that Veto-\textsc{CCAC} is $\wone$-hard, the election has fewer
candidates than voters, even after adding the candidates (and keep in
mind that the number of voters is the parameter!). Thus, the hardness
of the problem lays in picking a few spoiler candidates to add from a
large group of them. If we were adding more candidates than we had
voters, then the problem would be in $\fpt$.

Now, we move on to the deleting candidates case.  We will give a
detailed proof for Veto-CCDC (on the one hand, Plurality-CCDC is in
$\fpt$, and, on the other hand, it is instructive to see a detailed
proof for the case of Veto).  The proof still follows the general
ideas of the Multi-colored Clique technique, but since we delete
candidates, we have to adapt the approach.

\newcommand{\hforvetoone}{|V(G)|-h + 2|E(G)| - H}
\begin{theorem}\label{lem:veto-ccdc}
  Veto-\textsc{CCDC},
  parameterized by the number of voters,
  is $\wone$-hard.
\end{theorem}

\begin{proof}
  We provide a parameterized reduction from the $\probColorClique$
  problem.  Let $I=\probMCCInstance$ be our input instance with graph
  $G$ and non-negative integer $h$, and let the notation be as
  described in the introduction to the section. We form an instance
  $I'$ of Veto-CCDC as follows. Let the registered candidate set~$C$
  consist of all vertex candidates plus all edge candidates for~$G$, plus
  the preferred candidate~$p$.  We construct the following groups of
  voters (set $H = 2{h \choose 2} = h\cdot (h-1)$):
  \begin{enumerate}
  \item For each color $i$, $1 \leq i \leq h$, we introduce $2H-(h-1)$
    voters with preference order of the form
    \[\cdots \pref p \pref V_i(G).\]

  \item For each pair of colors $i,j$ ($1 \leq i,j \leq h$, $i \neq j$) we
  introduce $2H-1$ voters with preference order of the form
    \[\cdots \pref p \pref E(i,j).\]
  \item For each pair of colors, $i,j$ ($1 \leq i,j \leq h$, $i \neq j$)
    we introduce two voters, one with preference order of the form
    \[\cdots \pref p
    \pref R(i,j),\] and one with preference order of the form
    \[\cdots \pref p \pref
    R'(i,j).\]
  \item We introduce $2H$ voters with preference order of the form $\cdots \pref
    p$.
  \end{enumerate}
  We set the number~$k$ of candidates that can be deleted to
  $\hforvetoone$ (with the intention that one should delete all the
  candidates except for $p$ and those corresponding to the vertices
  and edges of the multi-colored clique of order~$h$).  This completes
  the construction.  Note that the total number of voters is
  \begin{align*}
      (2H-(h-1))\cdot h + (2H-1)\cdot H + H \cdot 2 + 2H \cdot 1 = 2H\cdot (H+h+1).
  \end{align*}
  Since the input graph is connected and contains at least two
  vertices (which means that the election has more than $H+h+1$
  candidates), there is at least one candidate, either a vertex
  candidate or an edge candidate, which has fewer than $2H$ vetoes.
  Thus, $p$ is currently not a winner.

  We claim that $p$ can become a winner by deleting at most $k$~candidates 
  if and only if $I$ is a ``yes''-instance.
  First, if $G$ contains an order-$h$ multi-colored clique and $Q$ is
  the set of $h$ vertices that form such a clique, then we can ensure
  that $p$ is a winner.  It suffices to delete all candidates from
  $V(G) \setminus Q$ and all edge candidates except the ones of the
  form $(u,v)$, where both $u$ and $v$ belong to $Q$. In effect, each
  remaining candidate will have $2H$~vetoes and all the candidates
  will tie for victory.  To see this, note that after deleting the
  candidates, $p$ still receives $2H$ vetoes from the last group of
  voters. Now, for each color $i$, $1 \leq i \leq h$, consider the
  remaining vertex candidate of color $i$ (call this vertex
  $v^{(i)}$). This candidate receives $2H-(h-1)$ vetoes from the first
  group of voters. Further, there are exactly $h-1$ voters in the
  third group that each give one veto to $v^{(i)}$ (these are the
  voters that correspond to the edges that connect $v^{(i)}$ with the
  other vertices of the clique). No other voter vetoes $v^{(i)}$.
  Now, for each pair of colors $i$ and $j$, $1 \leq i, j \leq h$,
  $i \neq j$, consider the two edge candidates, call them $(u,v)$ and
  $(v,u)$, whose corresponding edges are incident to the vertices of
  color $i$ (candidate $u$) and color $j$ (candidate $v$).  Both
  $(u,v)$ and $(v,u)$ still get $2H-1$ vetoes from the second group of
  voters. Each of them receives one veto from the third group of
  voters (for the case of $(u,v)$, this veto comes from the first
  voter corresponding to the color choice $(i,j)$, and in the case of
  $v$, this veto comes from the first voter corresponding to the color
  choice $(j,i)$).

  Now we come to the reverse direction. Assume that it is
  possible to ensure $p$'s victory by deleting at most
  $k$~candidates.
  Prior to deleting any candidates, $p$ has $2H$ vetoes and,
  of course, deleting candidates cannot decrease this number.
  Thus, we have to ensure that each non-deleted candidate has at least $2H$ vetoes.

  Consider two colors $i$ and $j$, $1 \leq i$, $j \leq h$, $i \neq
  j$. Each edge candidate $(u,v)$ (where the corresponding vertex~$u$ has
  color $i$ and the corresponding vertex~$v$
  has color~$j$) appears below~$p$ in $2H-1$ votes from the second
  group of voters and in two votes from the third one. If we keep two
  edge candidates, say $(u',v')$ and $(u'',v'')$ (where $u', u'' \in
  V_i(G)$ and $v', v'' \in V_j(G)$),
  then they are both ranked below~$p$
  in the same $2H-1$ votes from the second group and in the same
  two votes from the third one. If neither $(u',v')$ nor $(u'',v'')$
  is deleted, then one of them will receive fewer than $2H$
  vetoes. This means that for each pair of colors $i$ and $j$, we have to
  delete all except possibly one edge candidate of the form $(u,v)$,
  where $u \in V_i(G)$ and $v \in V_j(G)$.

  Similarly, for each color $i$, $1 \leq i \leq h$, each
  vertex candidate from $V_i(G)$ appears below $p$ in $2H-(h-1)$
  votes from the first group of voters and in $2(h-1)$ votes from the
  third group. Each two candidates of the same color are ranked below
  $p$ in the same votes in the first group. Thus, if two
  vertex candidates of the same color were left in the election (after
  deleting candidates), then at least one of them would have fewer
  than $2H$ vetoes.

  In consequence, and since we can delete at most $k=\hforvetoone$
  candidates, which means that at least $h+H$~candidates except $p$
  must remain in the final election, if $p$ is to become a winner,
  then after deleting the candidates the election must contain exactly
  one vertex candidate of each color, and exactly one edge candidate
  for each ordered pair of colors.

  Assume that $p$ is among the winners after deleting candidates and
  consider two remaining vertex candidates $u$ and $v$, $u \in V_i(G)$
  and $v \in V_j(G)$ ($i \neq j$); they must exist by the previous
  observations. We claim that edge candidates $(u,v)$ and $(v,u)$ also
  must remain. Due to symmetry, it suffices to consider
  $(u,v)$. Careful inspection of voters in the third group shows that
  if $(u,v)$ is not among the remaining candidates, then (using the
  observation regarding orders $R(i,j)$ and $R'(i,j)$) we have that
  the two voters from the third group that correspond to the color
  pair $(i,j$) either both rank $u$ last or both rank the same edge
  candidate last.  In either case, a simple counting argument shows
  that either $u$ has fewer than $2H$ vetoes or the edge candidate
  corresponding to the ordered color pair $(i,j)$ has fewer than $2H$
  vetoes.  In either case, $p$ is not a winner. This shows that the
  remaining candidates correspond to an order-$h$ multi-colored
  clique.
\end{proof}

Our final example of the application of the multi-colored clique
technique is for $\constantk$-Approval-\textsc{Comb}-DCAC for $\constantk \geq
2$.  We use an approach very similar to the one used in the preceding
proofs, but since we are in the combinatorial setting, we use the
bundling function to ensure consistency between the added edge
candidates and the added vertex candidates.  This is crucial since
$\constantk$-Approval-DCAC is in $\fpt$.

\begin{theorem}\label{lem:approval-comb-dcac}
  For each fixed integer~$\constantk \geq 2$,
  $\constantk$-Approval-\textsc{Comb}-\textsc{DCAC},
  parameterized by the number of voters,
  is $\wone$-hard.
\end{theorem}

\begin{proof}
  Given an instance~$(G,h)$ for the \probColorClique problem, we construct an instance of
  $\constantk$-Approval-\textsc{Comb}-DCAC.  For the combinatorial setting it is more natural
  to create only one candidate for each edge, and not two ``directed''
  ones. %
  We let the set of registered candidates be $C = \{p, d\} \cup D$,
  where $D$ is the following set of dummy candidates:
  \begin{align*}
    D =\ &\{d^{\{i,j\}}_{z} \mid i,j \in [h], i \neq j, z \in [t-1]\} \\ 
      \cup\ &\{d^{(i)}_{z} \mid i \in [h], z \in [t-1]\}\\
      \cup\ &\{e^{(i)}_{z} \mid i \in [h], z \in [t-1]\}.
    \end{align*}

  Candidate~$d$ is the despised one whose victory we want to
  preclude.  We let the set of the additional (unregistered)
  candidates be
  \[
    A = V(G) \cup E(G).
  \]
  That is, $A$ contains all vertex candidates and all edge
  candidates.  We set the bundling function $\combRule$ so that for
  each edge candidate~$e$ whose corresponding edge is incident to $u$ and $v$,
  we have $\combRule(e) = \{e,u,v\}$,
  and for each vertex candidate $v$ we have $\combRule(v) = \{v\}$.
  We introduce the following voters:
  \begin{enumerate}
  \item For each pair~$i, j$, $i \in [h]$, $j \in [h]$, $i\neq j$,
    of distinct colors,
    we have one voter with the following preference order,
    where we write $E(\{i,j\})$ to mean an arbitrarily chosen order
    over the edge candidates that link vertices of color $i$ with
    those of color $j$; the first occurrence of ``$\cdots$'' regards
    the candidates in $\{d^{\{i,j\}}_z \mid z \in [t-1]\}$ only:
    \[
      E(\{i,j\}) \pref d^{\{i,j\}}_1 \pref \cdots \pref d^{\{i,j\}}_{t-1} \pref d \pref \cdots.
    \]   
    Note that in the initial election,
    $d$ gets a point from this voter, 
    but it is sufficient (and we will make sure that it is also
    necessary) to add one candidate from $E(\{i,j\})$ to prevent~$d$ from
    getting this point.
  \item For each color $i$, $1 \leq i \leq h$, we have a voter with
    the following preference order (recall that $V_{i}(G)$ consists of all
    vertex candidates that correspond to the vertices of the same color~$i$;
    the first occurrence of ``$\cdots$'' regards
    the candidates in $\{d^{(i)}_z \mid z \in [t-2]\}$ only):
    \[
    V_i(G) \pref d^{(i)}_1 \pref \cdots \pref d^{(i)}_{t-2} \pref p
    \pref d^{(i)}_{t-1} \pref \cdots.
    \]
    Note that in the initial election $p$ gets a point from this
    voter, but if more than one candidate from $V_i(G)$ is added, then
    $p$ does not gain this point.
  \item For each number $i \in [h]$, we have a voter with the
    following preference order (the first occurrence of ``$\cdots$''
    regards the candidates in $\{e^{(i)}_z \mid z \in [t-1]\}$ only):
    \[
      d \pref e^{(i)}_1 \pref \cdots \pref e^{(i)}_{t-1} \pref \cdots.
    \]
    Note that, altogether, $d$ gets $h$ points from the voters in this group.
  \end{enumerate}
  First, prior to adding any candidates, $d$ has
  $h+\binom{h}{2}$~points while $p$ has $h$~points, and each of the
  dummy candidates has one point.  We show next that it is possible to
  ensure that $d$ is not a winner of this election by adding at most
  $k \coloneqq \binom{h}{2}$~(bundles of) candidates if and only if $G$ has a
  multi-colored clique of order~$h$.

  It follows now that if there is a multi-colored
  clique in $G$, then adding the edge candidates corresponding to the
  edges of this clique ensures that $d$ is not a winner.  

  For the reverse direction, assume that it is possible to ensure that
  $d$ is not a winner by adding at most $\binom{h}{2}$~(bundles of) candidates.
  $p$ is the only candidate that can reach a higher score than $d$ this way.
  For this to happen, $d$ must lose all the
  points that $d$ initially got from the first group of voters, and
  $p$ must get all the points from the second group of voters.
  Moreover, adding voters corresponding to vertices does not help.
  Thus, this must correspond to adding $\binom{h}{2}$ edge candidates
  whose bundles do not add two vertices of the same color. That is,
  these $\binom{h}{2}$ added edge candidates must correspond to a
  multi-colored clique of order $h$.
\end{proof}

We conclude this section by mentioning that the following results also
follow by applying the Multi-Colored Clique technique. The
proofs are available in Appendix~\ref{app:mcc}.

\newcommand{\cortapprovalccac}{For each fixed integer $t$, $t \geq 2$,
  $t$-Approval-\textsc{CCAC}, parameterized by the number of voters, is
  $\wone$-hard.}

\begin{theorem}\label{cor:t-approval-ccac}
  \cortapprovalccac
\end{theorem}

\newcommand{\cortvetoccac}{
  For each fixed integer $t$, $t \geq 2$, $t$-Veto-\textsc{CCAC}, parameterized
  by the number of voters, is $\wone$-hard.
}

\begin{theorem}\label{cor:t-veto-ccac}
  \cortvetoccac
\end{theorem}

\newcommand{\lemkvetoccdc}{For each fixed integer~$\constantk\ge 1$,
  $\constantk$-Veto-\textsc{CCDC}, parameterized by the number of voters, is
  $\wone$-hard.}

\begin{theorem}\label{lem:k-veto-ccdc}
  \lemkvetoccdc
\end{theorem}

\newcommand{\lemtwoapprovalccdc}{$2$-Approval-\textsc{CCDC}, parameterized by
  the number of voters, is $\wone$-hard.}

\begin{theorem}\label{lem:2-approval-ccdc}
  \lemtwoapprovalccdc
\end{theorem}

\newcommand{\lemkapprovalccdc}{For each fixed integer~$\constantk$,
  $\constantk \geq 3$, $\constantk$-Approval-\textsc{CCDC}, parameterized by
  the number of voters, is $\wone$-hard.}

\begin{theorem}\label{lem:k-approval-ccdc}
  \lemkapprovalccdc
\end{theorem}

\section{\cvctech}\label{section_formal_proofs_cvc}
\label{sec:cvc}

We now move on to the Cubic Vertex Cover proof
technique. Specifically, we use it to obtain the following results
(again, all results are for the parameterization by the number of
voters):
\begin{enumerate}
\item Borda-\textsc{CCAC} and Borda-CCDC are $\np$-hard (this holds already
  for elections with ten voters).
\item For each rational $\alpha$, $0 \leq \alpha \leq 1$,
  Copeland$^\alpha$-\textsc{CCAC} and Copeland$^\alpha$-CCDC are $\np$-hard
  (this holds already for elections with twenty and twenty-six voters,
  respectively).
\item Maximin-\textsc{CCAC} is $\np$-hard (this holds already for elections
  with ten voters).
\end{enumerate}
In other words, we use the Cubic Vertex Cover technique for all our
non-combinatorial $\paranp$-hardness results.
In this section we provide proofs for the cases of Borda-CCDV
and Maximin-\textsc{CCAC}, while the remaining ones are in Appendix~\ref{app:cvc}.  We
made this choice because the proofs for Borda-CCDV and Maximin-\textsc{CCAC}
illustrate the essential elements of the technique (as applied both to
an adding-candidates case and to a deleting-candidates case, and both
to a scoring rule and a Condorcet consistent rule).

The general idea of the Cubic Vertex Cover technique is to prove
$\paranp$-hardness via reductions from the \probCVC problem (known to
be $\np$-hard~\shortcite{GareyJohnsonStockmeyer1976}), using the fact that cubic
graphs\footnote{In a cubic graph, each vertex is of degree exactly
  three, that is, it has exactly three neighbors.}  can be encoded
using a constant number of votes. Formally, the \probCVC problem is
defined as follows.

\begin{definition}
  An instance of \probCVC consists of an undirected graph
  $G = (V(G),E(G))$,
  where each vertex of $G$ has degree exactly three, and a non-negative
  integer $h$. We ask if there is a subset (\emph{vertex cover}) of at most
  $h$~vertices such that each edge is incident to at least one vertex
  in the subset. 
\end{definition}

All the reductions in this section use the following common setup. Let
$I$ be an instance of \probCVC with a graph $G$ and non-negative
integer $h$. From a classic result by~\citet{vizing1965critical}, we
know that there is an edge-coloring of $G$ with four colors (that is,
it is possible to assign one out of four colors to each edge so that
no two edges incident to the same vertex have the same
color). Further, it is possible to compute this coloring in polynomial
time~\shortcite{mis-gri:j:vizing}. This is equivalent to saying that it is
possible to decompose the set of $G$'s edges into four disjoint
matchings.  Our reductions start by computing this decomposition.  We
rename the edges of $G$ so that these four disjoint matchings are:
\begin{align*}
  E^{(1)} & = \{e^{(1)}_{1}, \ldots, e^{(1)}_{m_1}\},  \\
  E^{(2)} & = \{e^{(2)}_{1}, \ldots, e^{(2)}_{m_2}\},  \\
  E^{(3)} & = \{e^{(3)}_{1}, \ldots, e^{(3)}_{m_3}\},  \\
  E^{(4)} & = \{e^{(4)}_{1}, \ldots, e^{(4)}_{m_4}\}.  
\end{align*}
We set $\edgesetsize = m_1 + m_2 + m_3 + m_4 = |E(G)|$ and $\vertexsetsize = |V(G)|$.
For each edge~$e$ of the graph, we arbitrarily order its vertices and
we write $v'(e)$ and $v''(e)$ to refer to the first vertex and to the
second vertex, respectively.
For each $\ell$, $1 \leq \ell \leq 4$, we write $E^{(-\ell)}$ to mean
$E(G) \setminus E^{(\ell)}$. We write $V^{(-\ell)}$ to mean the set of
vertices that are not incident to any of the edges in $E^{(\ell)}$.

The crucial point of our approach is to use the above decomposition to
create eight votes (two for each matching) that encode the graph.  We
will now provide useful notation for describing these eight votes. For
each edge $e$ of the graph, we define the following four orders over
$e$, $v'(e)$, and $v''(e)$:
\begin{align*}
 P(e) \colon  & e \pref v'(e) \pref v''(e), \\
 P'(e) \colon & e \pref v''(e) \pref v'(e), \\
 Q(e) \colon  & v'(e) \pref v''(e) \pref e, \\
 Q'(e) \colon  & v''(e) \pref v'(e) \pref e.
\end{align*}
For each $\ell$, $1 \leq \ell \leq 4$, we define the following orders
over $V(G) \cup E(G)$:
\begin{align*}
  A(\ell) \colon& P(e^{(\ell)}_1) \pref P(e^{(\ell)}_2) \pref \cdots \pref P(e^{(\ell)}_{m_\ell}), \\
  A'(\ell) \colon& P'(e^{(\ell)}_{m_\ell}) \pref \cdots \pref P'(e^{(\ell)}_2) \pref \cdots \pref P'(e^{(\ell)}_{1}),\\
  B(\ell) \colon& Q(e^{(\ell)}_1) \pref Q(e^{(\ell)}_2) \pref \cdots \pref Q(e^{(\ell)}_{m_\ell}), \\
  B'(\ell) \colon& Q'(e^{(\ell)}_{m_\ell}) \pref \cdots \pref Q'(e^{(\ell)}_2) \pref \cdots \pref Q'(e^{(\ell)}_{1}).
\end{align*}
Note that since each $E^{(\ell)}$ is a matching, each of the above
orders is well-defined.  The first two of these families of orders
(that is, $A(\ell)$ and $A'(\ell)$) will be useful in the hardness
proofs for the cases of deleting candidates, and the latter two (that
is, $B(\ell)$ and $B'(\ell)$) in the hardness proofs for the cases of
adding candidates.
The intuitive idea behind orders $A(\ell)$ and $A'(\ell)$ (or $B(\ell)$
and $B'(\ell)$) is that, at a high level, they are reverses of each
other, but they treat edges and their endpoints in a slightly asymmetric way
(we will describe this in detail in the respective proofs).

We are ready to show examples of applying the Cubic Vertex Cover
technique. We start with the case of Borda-CCDC (we present the
theorem and its proof first, and right after that, we show an example of
applying the reduction).

\begin{theorem}\label{lem:borda-ccdc}
  Borda-\textsc{CCDC} is $\np$-hard,
  even for elections with only ten voters.
\end{theorem}

\begin{proof}
  We give a reduction from the \probCVC problem. Let $I$ be our
  input instance that contains graph $G = (V(G),E(G))$ and
  non-negative integer $h$. We use the notation introduced in the
  beginning of the section.
  We form an election $E = (C,V)$, where $C = \{p,d\} \cup V(G) \cup E(G)$.
  We introduce the following ten voters:
  \begin{enumerate}
  \item For each $\ell$, $1 \leq \ell \leq 4$, we have the following two voters:
    \begin{align*}
      \mu(\ell) \colon & A(\ell) \pref E^{(-\ell)} \pref V^{(-\ell)} \pref d \pref p, \\
      \mu'(\ell) \colon & p \pref d \pref \revnot{V^{(-\ell)}} \pref \revnot{E^{(-\ell)}} \pref A'(\ell). 
    \end{align*}
  \item We have one voter with preference order $p \pref d \pref V(G)
    \pref E(G)$ and one voter with preference order $\revnot{E(G)} \pref
    \revnot{V(G)} \pref p \pref d$.
  \end{enumerate}
  We claim that $p$ can become a winner of this election by deleting
  at most $k \coloneqq h$~candidates if and only if there is a vertex cover of
  size~$h$ for~$G$.

  Let us first calculate the scores of all the candidates:
  \begin{enumerate}
  \item Candidate $p$ has $5(\vertexsetsize+\edgesetsize)+6$ points
    (that is, $4(\vertexsetsize+\edgesetsize+1)$ points from the first
    eight voters and $\vertexsetsize+\edgesetsize+2$ points from the
    last two voters).

  \item Each vertex candidate $v$ has
    $5(\vertexsetsize+\edgesetsize)+2$ points (for each of the three
    pairs of voters $\mu(\ell)$, $\mu'(\ell)$, $1 \leq \ell \leq 4$,
    such that $v$ is incident to some edge in $E^{(\ell)}$, $v$ gets
    $\vertexsetsize+\edgesetsize$ points; $v$ gets
    $\vertexsetsize+\edgesetsize+1$ points from the remaining pair of
    voters in the first group and, additionally,
    $\vertexsetsize+\edgesetsize+1$ points from the last two voters).

  \item Each edge candidate $e$ has $5(\vertexsetsize+\edgesetsize)+7$
    points, that is, $\vertexsetsize+\edgesetsize+3$ points from the
    pair of voters $\mu(\ell)$, $\mu'(\ell)$ such that $e \in
    E^{(\ell)}$, $\vertexsetsize+\edgesetsize+1$ points from each pair of
    the remaining three pairs of voters in the first group, and
    $\vertexsetsize+\edgesetsize+1$ points from the last two voters.

  \item Candidate $d$ has $5(\vertexsetsize+\edgesetsize)+4$ points
    (that is, $4(\vertexsetsize+\edgesetsize+1)$ points from the
    voters in the first group and $\vertexsetsize+\edgesetsize$ points
    from the last two voters.
  \end{enumerate}
  Prior to deleting any of the candidates, $p$ is not a
  winner because the edge candidates have higher scores. However, the
  score of~$p$ is higher than the score of the vertex candidates and
  the score of~$d$.

  We now describe how deleting candidates affects the scores of the
  candidates. Let $v$ be some vertex candidate. Deleting $v$ from our
  election causes the following effects: The score of each edge
  candidate $e$ such that $v = v'(e)$ or $v = v''(e)$ decreases by
  six; the score of each remaining candidate decreases by five.
  This means that if we delete $h$ vertex candidates that correspond
  to a vertex cover of $G$, then the scores of $p$, $d$, and all the
  vertex candidates decrease by $5h$, while the scores of all edge
  candidates decrease by at least $5h+1$.  As a result, we have $p$ as
  a winner of the election.

  For the reverse direction, assume that it is possible to ensure $p$'s
  victory by deleting at most $h$ candidates. Deleting candidate $d$
  decreases the score of $p$ by six, whereas it decreases the scores
  of every other candidate by five. Thus, we can assume that
  there is a solution that does not delete $d$. Similarly, one can verify 
  that if there is a solution that deletes some edge $e$, then
  a solution that is identical but instead of $e$ deletes either
  $v'(e)$ or $v''(e)$ (it is irrelevant which one) is also correct. We
  conclude that it is possible to ensure $p$'s victory by deleting at
  most $h$ vertex candidates.  However, by the discussion of the
  effects of deleting vertex candidates and the fact that prior to any
  deleting each edge candidate has one point more than $p$, we have
  that these at-most-$h$ deleted vertex candidates must correspond to
  a vertex cover of $G$. This completes the proof.
\end{proof}

\begin{example}
  We provide an example for the reduction described in the proof
  of~\autoref{lem:borda-ccdc}.  The input graph is depicted
  in~\autoref{figureexamplecvc} and we take $h := 4$. We present the
  constructed election in~\autoref{tabularexamplecvc}. The election
  that results from deleting candidates $\{v_1, v_3, v_4, v_6\}$ that
  correspond to a vertex cover is presented
  in~\autoref{tabular2examplecvc}.

  \begin{figure}
  \begin{center}
    \begin{tikzpicture}[scale=0.95]
      \scriptsize
      \tikzset{node/.style={circle,draw=black!80,minimum size=12pt,inner sep=0pt}}

      \node[node] (v1) at (0,2) {$v_1$};
      \node[node] (v2) at (6,2) {$v_2$};
      \node[node] (v3) at (2,1) {$v_3$};
      \node[node] (v4) at (4,1) {$v_4$};
      \node[node] (v5) at (0,0) {$v_5$};
      \node[node] (v6) at (6,0) {$v_6$};
      
      \draw (v1) to node [auto] {$(1)$} (v2);
      \draw (v5) to node [auto] {$(1)$} (v3);
      \draw (v4) to node [auto] {$(1)$} (v6);
      
      \draw (v3) to node [auto] {$(2)$} (v4);
      \draw (v5) to node [below] {$(2)$} (v6);
      
      \draw (v3) to node [auto] {$(3)$} (v1);
      \draw (v2) to node [auto] {$(3)$} (v4);
      
      \draw (v1) to node [left] {$(4)$} (v5);
      \draw (v2) to node [right] {$(4)$} (v6);
    \end{tikzpicture}
  \end{center}
  \caption{The input graph for our example
    for~\autoref{lem:borda-ccdc}.  The numbers in parentheses
    represent the colors of the edges according to an assumed
    partition to four colors.  For example, the edge $\{v_1, v_3\}$ has
    the third color.  }
  \label{figureexamplecvc}
  \end{figure}
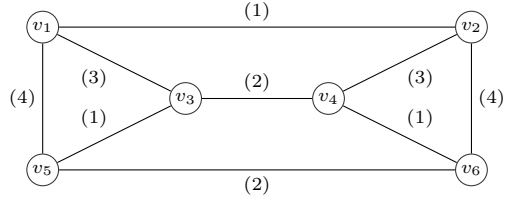

  \begin{table}
    \begin{center}
      \begin{align*}
      \mu(1)  \colon & \{v_1, v_2\} \pref v_1 \pref v_2 \pref \{v_3, v_5\} \pref v_3 \pref v_5 \pref \{v_4, v_6\} \pref v_4 \pref v_6 \pref \\
                     & E^{(-1)} \pref V^{(-1)} \pref d \pref p \\[2mm]
      \mu'(1) \colon & p \pref d \pref \revnot{V^{(-1)}} \pref \revnot{E^{(-1)}} \pref \\
                     & \{v_4, v_6\} \pref v_6 \pref v_4 \pref \{v_3, v_5\} \pref v_5 \pref v_3 \pref \{v_1, v_2\} \pref v_2 \pref v_1 \\[2mm]
      \mu(2)  \colon & \{v_3, v_4\} \pref v_3 \pref v_4 \pref \{v_5, v_6\} \pref v_5 \pref v_6  \pref \\
                     & E^{(-2)} \pref V^{(-2)} \pref d \pref p \\[2mm]
      \mu'(2) \colon & p \pref d \pref \revnot{V^{(-2)}} \pref \revnot{E^{(-2)}} \pref \\
                     & \{v_5, v_6\} \pref v_6 \pref v_5 \pref \{v_3, v_4\} \pref v_4 \pref v_3 \\[2mm]
      \mu(3)  \colon & \{v_1, v_3\} \pref v_1 \pref v_3 \pref \{v_2, v_4\} \pref v_2 \pref v_4 \pref \\
                     & E^{(-3)} \pref V^{(-3)} \pref d \pref p \\[2mm]
      \mu'(3) \colon & p \pref d \pref \revnot{V^{(-3)}} \pref \revnot{E^{(-3)}} \pref \\
                     & \{v_2, v_4\} \pref v_4 \pref v_2 \pref \{v_1, v_3\} \pref v_3 \pref v_1 \\[2mm]
      \mu(4)  \colon & \{v_1, v_5\} \pref v_1 \pref v_5 \pref \{v_2, v_6\} \pref v_2 \pref v_6 \pref \\
                     & E^{(-4)} \pref V^{(-4)} \pref d \pref p \\[2mm]
      \mu'(4) \colon & p \pref d \pref \revnot{V^{(-4)}} \pref \revnot{E^{(-4)}} \pref \\
                     & \{v_1, v_5\} \pref v_5 \pref v_1 \pref \{v_2, v_6\} \pref v_6 \pref v_2 \\[2mm]
      \text{one voter } \colon               & p \pref d \pref V(G) \pref E(G) \\
      \text{one voter } \colon               & \revnot{E(G)} \pref \revnot{V(G)} \pref p \pref d
      \end{align*}
    \end{center}
    \caption{The election constructed in the proof of
      \autoref{lem:borda-ccdc} for the input graph from
      \autoref{figureexamplecvc}.  }
    \label{tabularexamplecvc}
\end{table}

  \begin{table}
    \begin{center}
      \begin{align*}
      \mu(1)  \colon & \{v_1, v_2\} \pref \xcancel{v_1} \pref v_2 \pref \{v_3, v_5\} \pref \xcancel{v_3} \pref v_5 \pref \{v_4, v_6\} \pref \xcancel{v_4} \pref \xcancel{v_6} \pref \\
                     & E^{(-1)} \pref V^{(-1)} \pref d \pref p \\[2mm]
      \mu'(1) \colon & p \pref d \pref \revnot{V^{(-1)}} \pref \revnot{E^{(-1)}} \pref \\
                     & \{v_4, v_6\} \pref \xcancel{v_6} \pref \xcancel{v_4} \pref \{v_3, v_5\} \pref v_5 \pref \xcancel{v_3} \pref \{v_1, v_2\} \pref v_2 \pref \xcancel{v_1} \\[2mm]
      \mu(2)  \colon & \{v_3, v_4\} \pref \xcancel{v_3} \pref \xcancel{v_4} \pref \{v_5, v_6\} \pref v_5 \pref \xcancel{v_6} \pref \\
                     & E^{(-2)} \pref V^{(-2)} \pref d \pref p \\[2mm]
      \mu'(2) \colon & p \pref d \pref \revnot{V^{(-2)}} \pref \revnot{E^{(-2)}} \pref \\
                     & \{v_5, v_6\} \pref \xcancel{v_6} \pref v_5 \pref \{v_3, v_4\} \pref \xcancel{v_4} \pref \xcancel{v_3} \\[2mm]
      \mu(3)  \colon & \{v_1, v_3\} \pref \xcancel{v_1} \pref \xcancel{v_3} \pref \{v_2, v_4\} \pref v_2 \pref \xcancel{v_4} \pref \\
                     & E^{(-3)} \pref V^{(-3)} \pref d \pref p \\[2mm]
      \mu'(3) \colon & p \pref d \pref \revnot{V^{(-3)}} \pref \revnot{E^{(-3)}} \pref \\
                     & \{v_2, v_4\} \pref \xcancel{v_4} \pref v_2 \pref \{v_1, v_3\} \pref \xcancel{v_3} \pref \xcancel{v_1} \\[2mm]
      \mu(4)  \colon & \{v_1, v_5\} \pref \xcancel{v_1} \pref v_5 \pref \{v_2, v_6\} \pref v_2 \pref \xcancel{v_6} \pref \\
                     & E^{(-4)} \pref V^{(-4)} \pref d \pref p \\[2mm]
      \mu'(4) \colon & p \pref d \pref \revnot{V^{(-4)}} \pref \revnot{E^{(-4)}} \pref \\
                     & \{v_1, v_5\} \pref v_5 \pref \xcancel{v_1} \pref \{v_2, v_6\} \pref \xcancel{v_6} \pref v_2 \\[2mm]
      \text{one voter } \colon               & p \pref d \pref V(G) \pref E(G) \\
      \text{one voter } \colon               & \revnot{E(G)} \pref \revnot{V(G)} \pref p \pref d
      \end{align*}
    \end{center}
    \caption{The election from \autoref{tabularexamplecvc} with the candidates corresponding
  to the vertex cover $\{v_1, v_3, v_4, v_6\}$ deleted.
    }
    \label{tabular2examplecvc}
\end{table}
\end{example}

Let us now show an application of the Cubic Vertex Cover technique to
the case of adding candidates. Specifically, we consider Maximin-\textsc{CCAC}.

\begin{theorem}\label{lem:maximin-ccac}
  Maximin-\textsc{CCAC} is $\np$-hard,
  even for elections with only ten voters.
\end{theorem}

\begin{proof}
  We give a reduction from \probCVC (we use the notation as provided
  at the beginning of this section).  Given an instance~$(G,h)$ for
  \probCVC, we construct an instance for Maximin-\textsc{CCAC}.  We
  let the registered candidate set~$C$ be $\{p\} \cup E(G)$, and we
  let $V(G)$ be the set of unregistered candidates.  We construct ten
  voters:
  \begin{enumerate}
  \item For each $\ell$, $1 \leq \ell \leq 4$, we have the following
    two voters:
    \begin{align*}
      \mu(\ell) \colon & B(\ell) \pref E^{(-\ell)} \pref V^{(-\ell)} \pref p, \\
      \mu'(\ell) \colon & p \pref \revnot{V^{(-\ell)}} \pref \revnot{E^{(-\ell)}} \pref B'(\ell). 
    \end{align*}
  \item We have one voter with preference order $E(G) \pref p \pref
    V(G)$ and one voter with preference order $\revnot{E(G)} \pref p
    \pref \revnot{V(G)}$.
  \end{enumerate}
  
  Let $E$ be the thus-constructed election (including all 
  registered and unregistered candidates). We have the following values
  of the $N_E(\cdot,\cdot)$ function (recall that this function represents the head-to-head contests):
  \begin{enumerate}
  \item For each vertex $v \in V(G)$, we have $N_E(p,v) = 6$ (and thus, $N_E(v,p) = 4$).
  \item For each edge $e \in E(G)$, we have $N_E(p,e) = 4$ (and thus, $N_E(e,p) = 6$).
  \item For each vertex $v \in V(G)$ and each edge $e \in E(G)$ we
    have the following: If $v$ is an endpoint of $e$, then $N_E(v,e) =
    6$ (so $N_E(e,v) = 4$), and otherwise we have $N_E(v,e) = 5$ (so
    $N_E(e,v) = 5$).
  \item For each pair of vertices, $v',v'' \in V(G)$, $N_E(v',v'') = 5$.
  \item For each pair of edges, $e',e'' \in E(G)$,  $N_E(e',e'') = 5$.
  \end{enumerate}
  In effect, prior to adding the candidates, the score of~$p$ is four
  and the score of each edge candidate is five.  Adding a vertex
  candidate $v$ to the election does not change the score of $p$, but
  decreases the score of each edge candidate that has $v$ as an
  endpoint to four. Further, this added vertex candidate has score
  four as well. Thus, it is possible to ensure
  $p$'s victory by adding at most $h$ candidates if and only if there
  is a size-$h$ vertex cover for $G$.
\end{proof}

We conclude the section by mentioning that the following results,
whose proofs are in Appendix~\ref{app:cvc}, also follow by applying the Cubic
Vertex Cover technique.

\newcommand{\lembordaccac}{Borda-\textsc{CCAC} is $\np$-hard, even for
  elections with only ten voters.}

\begin{theorem}\label{lem:borda-ccac}
\lembordaccac
\end{theorem}

\newcommand{\lemcopelandccac}{ For each rational number~$\alpha$, $0
  \leq \alpha \leq 1$, Copeland$^\alpha$-\textsc{CCAC} is $\np$-hard, even for
  elections with only twenty voters.}

\begin{theorem}\label{lem:copeland-ccac}
  \lemcopelandccac
\end{theorem}

\newcommand{\lemcopelandccdc}{For each rational number~$\alpha$, $0
  \leq \alpha \leq 1$, Copeland$^\alpha$-\textsc{CCDC} is $\np$-hard, even for
  elections with only twenty-six voters.}

\begin{theorem}\label{lem:copeland-ccdc}
  \lemcopelandccdc
\end{theorem}

\section{\setcovertech for Combinatorial Variants}\label{section_formal_proofs_sc}
\label{sec:set}

In this section we present our Set-Embedding proof technique for the
combinatorial variants of our control problems.  Specifically, we
prove the following statements (again, all results are for the
parameterization by the number of voters):
\begin{enumerate}
  \item For each fixed integer~$\constantk \geq 1$
  and for each voting rule {$\calR \in \{$}$\constantk$-Approval, $\constantk$-Veto, Borda, Copeland$^\alpha$ (for \ouralpha), Maximin{$\}$},
  both $\calR$-\textsc{Comb}-CCDC and $\calR$-\textsc{Comb}-DCDC
  are \paranphl.
\item For each voting rule {$\calR \in \{$}Borda, Copeland$^\alpha$
  (for \ouralpha){$\}$}, Maximin{$\}$}, $\calR$-\textsc{Comb}-\textsc{CCAC}
  is \paranphl.
  \item For each voting rule {$\calR \in \{$}Borda, Copeland$^\alpha$ (for \ouralpha){$\}$},
  $\calR$-\textsc{Comb}-DCAC
  is \paranphl.
\end{enumerate}
That is, in this section we provide all our $\paranp$-hardness results
for the combinatorial variants of our problems.

All proofs follow by reducing the \probSetCover problem (recall Definition~\ref{def:sc}) to the respective problem in a way which uses
the bundling function to encode the sets from the \probSetCover instances
(hence the name of the technique).  We start by providing some common
notation and observations common to all of these results.

Let $I = (X, \calS, h)$ be an input instance of \probSetCover (which is $\np$-hard~\shortcite{gar-joh:b:int}). 
We construct elections with candidate sets that include
the elements from~$X$ and the sets from~$\calS$. Specifically, for
each element $x_i \in X$, we introduce a candidate with the same name,
and for each set $S_j \in \calS$, we introduce a candidate named $s_j$.  We
denote the set of all element candidates by $\ElementCandidateSet$ and
denote the set of all set candidates by $\SetCandidateSet$.
Further, we will typically have
candidates $p$ and $d$.  For the constructive cases, $p$ will be the
preferred candidate while for the destructive cases, $d$ will be the
despised one.

Unless stated otherwise, in each of our proofs we use a bundling
function $\combRule$ defined as follows: for each set candidate $s_j$,
we have $\combRule(s_j) = \{s_j\} \cup \{x_i \mid x_i \in S_j\}$, and
for each non-set candidate $c$, we have $\combRule(c) = \{c\}$.  We
refer to this bundling function as the \emph{set-embedding bundling
  function}.

The general idea of our proofs is that to ensure $p$'s victory (for
the constructive cases) or $d$'s defeat (for the destructive cases),
one has to add/delete all the candidates from $\ElementCandidateSet$,
and due to the bound on the number of candidates that we can
add/delete, this has to be achieved by adding/deleting the candidates
from $\SetCandidateSet$ and relying on the bundling function.

With the above setup ready, we move on to proving our results.  Most
of the proofs are in Appendix~\ref{app:set}, but for each type of
problem (\textsc{Comb}-\textsc{CCAC}, \textsc{Comb}-CCDC, \textsc{Comb}-DCAC,
\textsc{Comb}-DCDC) we give one sample proof.

\subsection{Constructive Control by Deleting Candidates}
We start by looking at constructive control by deleting candidates
because in this case we obtain a very general hardness result that
applies to all the voting rules which satisfy the unanimity
principle.  A rule satisfies the \emph{unanimity principle} if in each
election where a unique candidate $c$ is ranked first by all the
voters, this candidate $c$ is the unique winner.

\newcommand{\lemrcccdc}{Let $\calR$ be a voting rule that satisfies the unanimity principle.
  $\calR$-\textsc{Comb}-\textsc{CCDC} is $\np$-hard, even for the case of elections with just
  a single voter.}

\begin{theorem}\label{lem:r-cccdc}
 \lemrcccdc
\end{theorem}

\begin{proof}
  Let the notation be as in the introduction to this
  section.  Given an instance~$I \coloneqq \probSetCoverInstance$ for
  \probSetCover, we create an instance $I'$ of $\calR$-\textsc{Comb}-CCDC as
  follows.  We construct an election $E = (C,V)$ where $C = \{p\} \cup
  \ElementCandidateSet \cup \SetCandidateSet$ and where $V$~contains a
  single voter with the following preference order:
  \[
    \ElementCandidateSet \pref p \pref \SetCandidateSet.
  \]
  We use the set-embedding bundling function. We claim that $I$ is a
  ``yes''-instance of \probSetCover if and only if it is possible to
  ensure $p$'s victory by deleting at most $h$ (bundles of) candidates.

  On one hand, if $I$ is a ``yes''-instance of $\probSetCover$, then
  $I'$ is a ``yes''-instance of $\calR$-\textsc{Comb}-CCDC.  Indeed, if
  $\calS'$ is a subfamily of $\calS$ such that $|\calS'| \leq h$ and
  $\bigcup_{S_j \in \calS'}S_j = X$, then it suffices to delete the
  candidates $C'$ that correspond to the sets in $\calS'$ from the
  election to ensure that $p$ is ranked first (and, by the unanimity
  of $\calR$, is a winner).

  On the other hand, assume that $I'$ is a ``yes''-instance of
  $\calR$-\textsc{Comb}-CCDC.  Since $\calR$ satisfies the unanimity
  property, the candidate ranked first by the only voter in our
  election is always the unique winner.  This means that if $I'$ is a
  ``yes''-instance of $\calR$-\textsc{Comb}-CCDC, then there is a
  subset~$C'$ of candidates such that $p \notin C'$ and $X \subseteq
  \bigcup_{c \in C'}\combRule(c)$.  Without loss of generality, we can
  assume that $C'$ contains only candidates from the set~$\SetCandidateSet$ (if $C'$ contained some candidate $x_i$, we could
  replace $x_i$ with an arbitrary candidate $s_j$ such that $x_i \in
  S_j$).  However, this immediately implies that setting
  $\calS' \coloneqq \{S_j \mid s_j \in C'\}$ results in a set cover of size at
  most $h$.  Therefore $I$ is a ``yes''-instance of~$I$.
\end{proof}

As Plurality, Borda, Copeland$^\alpha$, and Maximin all satisfy the
unanimity property, we conclude the following.

\begin{corollary}\label{lem:plurality-borda-copeland-maximin-comb-ccdc-np-h-1}
  For each voting rule
  {$\calR \in \{$}Plurality, Borda, Copeland$^\alpha$, Maximin{$\}$},
  $\calR$-\textsc{Comb}-CCDC is $\np$-hard, even for elections with only a single voter.
\end{corollary}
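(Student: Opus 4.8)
The plan is to derive this corollary directly from Lemma~\ref{lem:r-cccdc} (equivalently Theorem~\ref{thm:r-cccdc}), which already establishes that $\calR$-\textsc{Comb}-CCDC is $\np$-hard even for single-voter elections whenever $\calR$ satisfies the unanimity principle. Hence the only thing left to do is to check, one rule at a time, that Plurality, Borda, Copeland$^\alpha$, and Maximin are all unanimous, i.e., that whenever a single candidate $c$ is ranked first by every voter, $c$ is the \emph{unique} winner.

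First I would handle Plurality: if $c$ is ranked first by all $n$ voters, then $c$ receives $n$ points and every other candidate receives $0$, so $c$ is the unique winner. Next, for Borda, a candidate ranked first by a voter receives $m-1$ points from that voter (with $m$ the number of candidates), so $c$ gets $n(m-1)$ points, while any other candidate gets at most $n(m-2) < n(m-1)$ points; thus $c$ wins uniquely. For Copeland$^\alpha$, being ranked first by all voters means $c$ wins every head-to-head contest, so $c$ gets $m-1$ Copeland points; every other candidate loses its contest against $c$ and therefore earns at most $m-2$ points, so again $c$ is the unique winner irrespective of $\alpha$. Finally, for Maximin, $N_E(c,d) = n$ for every $d \neq c$, so $c$'s Maximin score is $n$, whereas any other candidate $d$ satisfies $N_E(d,c) = 0$ and hence has Maximin score $0$; thus $c$ is the unique winner.

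Having verified unanimity for all four rules, the corollary follows by invoking Lemma~\ref{lem:r-cccdc} for each of them. I do not expect any genuine obstacle here: the entire content is the routine per-rule scoring check above, and even that is elementary because the unanimity witness puts $c$ at the top of every vote simultaneously, which maximizes $c$'s score and correspondingly pins down the scores of the competitors. The only minor point to state carefully is that for Copeland$^\alpha$ the argument must be made uniformly in $\alpha \in [0,1]$ (which it is, since $c$ gets $\alpha$-independent full credit for all $m-1$ strict wins).
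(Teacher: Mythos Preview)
Your proposal is correct and matches the paper's approach exactly: the paper simply notes that Plurality, Borda, Copeland$^\alpha$, and Maximin all satisfy the unanimity property and then invokes Lemma~\ref{lem:r-cccdc}. Your write-up is in fact more detailed than the paper's, since you spell out the per-rule verification of unanimity that the paper leaves implicit.
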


By applying minor tweaks to the above construction, we obtain the
following results (for the proofs see Appendix~\ref{app:set}).

\newcommand{\lemapprovalcombccdcnphone}{For each fixed integer~$\constantk \ge 2$,
  $\constantk$-Approval-\textsc{Comb}-\textsc{CCDC} is $\np$-hard, even for
  elections with only a single voter.}

\begin{theorem}\label{lem:approval-comb-ccdc-np-h-1}
  \lemapprovalcombccdcnphone
\end{theorem}

\newcommand{\lemvetocombccdcnphone}{For each fixed integer~$\constantk
  \geq 1$, $\constantk$-Veto-\textsc{Comb}-\textsc{CCDC} is $\np$-hard, even for
  elections with only a single voter.}

\begin{theorem}\label{lem:veto-comb-ccdc-np-h-1}
  \lemvetocombccdcnphone
\end{theorem}

\subsection{Destructive Control by Deleting Candidates}
While the very general proof for the combinatorial variant of
constructive control by deleting candidates is very simple,
occasionally our set-embedding proofs become slightly more
involved. For example, our proof that Maximin-\textsc{Comb}-DCDC is
$\np$-hard even for elections with only few voters requires a bit more
care.

\begin{theorem}\label{sc18}
  Maximin-\textsc{Comb}-\textsc{DCDC} is $\np$-hard,
  even for elections with only five voters.
\end{theorem}

\begin{proof}
  Given an instance~$\probSetCoverInstance$ for \probSetCover, we
  construct an instance~$(E=(C, V), k)$ for Maximin-\textsc{Comb}-DCDC.
  We construct an election~$E=(C, V)$ where $C \coloneqq \{p,d,e\} \cup \ElementCandidateSet \cup \SetCandidateSet$ and where the voter set consists of the following five voters:
  \begin{align*}
    \text{one voter}\colon & p\pref d \pref \ElementCandidateSet \pref e \pref \SetCandidateSet, \\
    \text{two voters}\colon &  d\pref \ElementCandidateSet \pref p \pref e \pref \SetCandidateSet,  \\
    \text{two voters}\colon &  e \pref \overleftarrow{\ElementCandidateSet} \pref p \pref d \pref \overleftarrow{\SetCandidateSet}. 
  \end{align*}
  We set $k\coloneqq h$.
  We use the set-embedding bundling functions. We claim that $I$ is a
  ``yes''-instance of \probSetCover if and only if it is possible to
  ensure that $d$ is not a winner by deleting at most $k=h$ (bundles of)
  candidates.

  The values of the $N_E(\cdot,\cdot)$ function are given in the table
  below (the entry for row~$a$ and column~$b$ gives the value of
  $N_E(a,b)$; we assume $i' \neq i''$ and $j' \neq j''$).
  \begin{center}
  \begin{tabular}[h]{l|ccccc}
    & $p$ & $d$ & $e$ & $x_{i'}$ & $s_{j'}$\\\hline
    $p$ & - & $3$ & $3$ & $1$ & $5$\\
    $d$ & $2$ & - & $3$ & $3$ & $5$\\
    $e$ & $2$ & $2$ & - & $2$ & $5$\\
    $x_{i''}$ & $4$ & $2$ & $3$ & $2$ or $3$ & $5$\\
    $s_{j''}$ & $0$ & $0$ & $0$ & $0$ & $2$ or $3$\\
  \end{tabular}
  \end{center}
  We have the following scores of the candidates: $p$ has one point
  (because of the members of~$\ElementCandidateSet$), $d$ has two
  points (because of $p$), $e$ has two points (because of $p$, $d$,
  and the members of~$\ElementCandidateSet$), the members of
  $\ElementCandidateSet$ have two points each (because of $d$), and
  the members of $\SetCandidateSet$ have zero points each (because of
  all other candidates).

  Now, if there is a set cover for $I$ of size
  $h$, then deleting the set candidates corresponding to the cover
  deletes all members of $\ElementCandidateSet$ and ensures that
  $p$ has three points, whereas $d$ has only two. In effect, $d$
  certainly is not a winner.

  Now consider the other direction.  Since deleting a candidate can
  never decrease the score of any remaining candidate, the only way of
  making $d$ lose is to increase some remaining candidate's score.

  Since for each candidate other than $p$, at least three voters
  prefer $d$ to this candidate, only $p$ has any chance of getting
  a higher score than $d$.  For this to happen, we need to ensure that
  all members of $\ElementCandidateSet$ disappear. As in the previous
  set-embedding proofs, this is possible to do by deleting at most~$h$
  candidates only if there is a set cover of size at most $h$ for~$I$.
\end{proof}

Yet, for most of the other results it suffices to use proofs very
similar to that for Theorem~\ref{lem:r-cccdc}. However, for the case
of $\constantk$-Approval-\textsc{Comb}-DCDC we have to use either two
voters (if $t \geq 2$) or three voters (if $t=1$ and we are dealing
with Plurality). The reason is that if we have a single voter and
candidate $d$ is a $\constantk$-Approval winner, then it is impossible
to prevent $d$ from winning by deleting candidates (no matter what we
do, $d$ will still have the highest possible score, one). A similar
reasoning applies to the case of Plurality and two voters.  We omit
the proofs of the following results (they are available in
Appendix~\ref{app:set}).

\newcommand{\lempluralitycombdcdcnphthree}{Plurality-\textsc{Comb}-\textsc{DCDC} is $\np$-hard, even for elections with only three voters.}

\begin{theorem}\label{lem:plurality-comb-dcdc-np-h-3}
  \lempluralitycombdcdcnphthree
\end{theorem}

\newcommand{\lemapprovalcombdcdcnphtwo}{For each fixed
  integer~$\constantk \ge 2$, $\constantk$-Approval-\textsc{Comb}-\textsc{DCDC}
  is $\np$-hard, even for elections with only two voters.}

\begin{theorem}\label{lem:approval-comb-dcdc-np-h-2}
  \lemapprovalcombdcdcnphtwo
\end{theorem}

\newcommand{\vetocombdcdcnphone}{For each fixed integer~$\constantk
  \geq 1$, $\constantk$-Veto-\textsc{Comb}-\textsc{DCDC} is $\np$-hard, even for
  elections with only a single voter.}

\begin{theorem}\label{veto-comb-dcdc-np-h-1}
  For each fixed integer~$\constantk \geq 1$,
  $\constantk$-Veto-\textsc{Comb}-\textsc{DCDC}
  is $\np$-hard, even for elections with only a single voter.
\end{theorem}

\newcommand{\lembordacombdcdcnphtwo}{Borda-\textsc{Comb}-\textsc{DCDC} is $\np$-hard, even for elections with only two voters.}

\begin{theorem}\label{lem:borda-comb-dcdc-np-h-2}
  \lembordacombdcdcnphtwo
\end{theorem}

\newcommand{\lemcopelandcombdcdcnphthree}{Copeland$^\alpha$-\textsc{Comb}-\textsc{DCDC} is $\np$-hard, even for elections with only three voters.}

\begin{theorem}\label{lem:copeland-comb-dcdc-np-h-3}
  \lemcopelandcombdcdcnphthree
\end{theorem}

\subsection{Constructive  and Destructive Control by Adding Candidates}

For the case of combinatorial control by adding candidates, we give
sample proofs for the cases of Borda. %

\begin{theorem}\label{lem:borda-comb-dcac-np-h-2}
  Borda-\textsc{Comb}-\textsc{CCAC} and Borda-\textsc{Comb}-\textsc{DCAC} are both
  $\np$-hard, even for elections with only two voters.
\end{theorem}

\begin{proof}
  We first show the $\np$-hardness result for Borda-\textsc{Comb}-\textsc{CCAC} and then show how to modify the proof for Borda-\textsc{Comb}-DCAC.

  Let the notation be as in the introduction to this
  section.  Given an instance $I \coloneqq \probSetCoverInstance$ for
  \probSetCover with $n'\coloneqq|\ElementCandidateSet|$,
  we create an
  instance $I'$ of Borda-\textsc{Comb}-\textsc{CCAC} as follows.  We construct
  the registered candidate set $C = \{d, p\}\cup D$, where $D =
  \{d_1, \ldots, d_{n'}\}$.  We construct the unregistered
  candidate set~$A = \ElementCandidateSet\cup \SetCandidateSet$.  We
  construct two voters with the following preference orders:
  \begin{align*}
    & d \pref D \pref p \pref \SetCandidateSet \pref \ElementCandidateSet \pref \cdots  \text{\, and }  \\
    & p \pref \overleftarrow{\ElementCandidateSet} \pref d \pref \SetCandidateSet \pref \overleftarrow{D} \pref \cdots .
  \end{align*}
  We use the set-embedding bundling function.  We claim that $I$ is a
  ``yes''-instance of \probSetCover if and only if it is possible to
  ensure $p$'s victory by adding at most $h$ (bundles of) candidates.
  Note that $d$ gets $n'$ points more than $p$ from the first voter.
  Given a set cover of size $h$, we add the corresponding $s_j$'s to
  the election. Simple calculation shows that in this case $p$ and $d$
  tie as winners.

  For the reverse direction, note that the relative scores of $p$ and
  $d$ in the first vote do not change irrespective which candidates we
  add.  The relative scores of $p$ and $d$, however, do
  change in the second vote in the following way: For each
  unregistered candidate~$x_i$ added to the election, $p$'s score
  increases by one but $d$'s score remains unchanged.  Thus, the only
  way to ensure that $p$ is a winner is by bringing all the candidates
  from $\ElementCandidateSet$ to the election.  Doing so by adding at
  most $h$ candidates is possible only if there is a size-$h$ cover
  for $I$.

  The construction for Borda-\textsc{Comb}-DCAC is the same, except
  that, first, we do not want $p$ to win but $d$ to lose (that is, we
  define $d$ to be the despised candidate, and second, we define $D$ to
  have only $n-1$ dummy candidates.
\end{proof}

The proofs for the remaining results are available in
Appendix~\ref{app:set}.  Note that technically similar
$\paranp$-hardness results already follow from our discussion of the
non-combinatorial variants; using the set-embedding technique we can
give proofs that use fewer voters.

\newcommand{\lemcopelandcombdcacccacnphthree}{For each rational number~$\alpha$, $0 \le \alpha \le 1$, Copeland$^\alpha$-\textsc{Comb}-\textsc{DCAC}
  and Copeland$^\alpha$-\textsc{Comb}-\textsc{CCAC} are $\np$-hard, even for
  elections with only three voters.}
 
\begin{theorem}\label{lem:copeland-comb-dcac-ccac-np-h-3}
  \lemcopelandcombdcacccacnphthree
\end{theorem}

\newcommand{\lemmaximincombccacnphsix}{Maximin-\textsc{Comb}-\textsc{CCAC} is
  $\np$-hard, even for elections with only six voters.}

\begin{theorem}\label{lem:maximin-comb-ccac-np-h-6}
  \lemmaximincombccacnphsix
\end{theorem}

\section{\signaturetech for Destructive~Control}\label{section_formal_proofs_signatures}
\label{sec:sig}

\newcommand{\signature}{\ensuremath{\vec{\gamma}}}
\newcommand{\dptable}{\ensuremath{\vec{\mathcal{Z}}}}
\newcommand{\sig}[1]{\ensuremath{{\gamma}_{#1}}}
\newcommand{\arr}[1]{\ensuremath{{\mathcal{Z}}_{#1}}}
\newcommand{\scorevec}{\ensuremath{\vec{\delta}}}
\newcommand{\svec}[1]{\ensuremath{{\delta}_{#1}}}
\newcommand{\relevantregisteredset}{\ensuremath{C_{\mathrm{R}}}}
\newcommand{\relevantunregisteredset}{\ensuremath{A_{\mathrm{R}}}}
\newcommand{\typevec}{\ensuremath{\vec{\tau}}}
\newcommand{\type}[1]{\ensuremath{\tau_{#1}}}

We now move on to the discussion of the signatures technique of
obtaining $\fpt$ algorithms.  Specifically, in this section we show
the following results:
\begin{enumerate}
\item For each fixed integer~$\constantk \geq 1$ and for each voting
  rule {$\calR \in \{$}$\constantk$-Approval, $\constantk$-Veto{$\}$},
  $\calR$-DCAC is in $\fpt$.
\item Plurality-\textsc{Comb}-DCAC and Veto-\textsc{Comb}-DCAC are in $\fpt$.
\item For each fixed integer~$\constantk \geq 1$ and for each voting
  rule {$\calR \in \{$}$\constantk$-Approval, $\constantk$-Veto{$\}$},
  $\calR$-DCDC is in $\fpt$.
\end{enumerate}
That is, we apply the technique to the case of destructive candidate
control, under $\constantk$-Approval and $\constantk$-Veto elections.
The main idea of the signature technique is to identify certain
properties of the candidates to be added/deleted that allow us to
treat them as equivalent. We refer to these properties as signatures
and we build $\fpt$ algorithms based on the observations that the
number of different signatures (in a given context) is a function of
the number of voters only.

The results from this section apply, in particular, to the cases of
Plurality-DCDC and Veto-DCDC. However, these problems are simple
enough that direct algorithms for them that are easier and faster; we
provide such algorithms in \autoref{sec:other-fpt}.

\subsection{Destructive Control by Adding Candidates}

Let us consider an instance of the Destructive Control by Adding
Candidates problem for the case of $t$-Approval/$t$-Veto (for now we
focus on the non-combinatorial variant).  The instance consists of the
set $C$ of registered candidates, the set $A$ of unregistered
candidates, the collection $V$ of $n$ voters, the despised candidate
$d \in C$, and an integer~$k$ bounding the number of candidates that
we can add. We assume that $d$ is a winner in election $(C,V)$
(otherwise we can trivially accept the input).
The general scheme for our $\fpt$ algorithm (parameterized by the
number $n$ of the voters) is as follows:

\begin{enumerate}
\item We guess a candidate~$p \in C \cup A$ ($p \neq d$). The role
  of~$p$ is to defeat~$d$, that is, to obtain more points than $d$.
  Altogether there are $m\coloneqq|C|+|A|$ candidates and we repeat our
  algorithm for each possible choice of $p$.
\item For each choice of $p$, we ``kernelize'' the input instance,
  that is, we bound the number of ``relevant''
  candidates %
  by a function of the parameter $n$, and search for an optimal
  solution in a brute-force manner over this ``kernel'' (indeed, the
  signature technique regards computing this kernel).\footnote{We
    mention that this kind of kernelization is called \emph{Turing
      kernelization}.  See 
    \citet{Bin-Rai-Fer-Fom-Fok-Sau-Vil-2012} and of
    \citet{Sch-Kom-Mos-Nie-2012} for examples of this concept in the
    context of graph problems.}
\item We accept if the best solution found adds at most $k$ candidates
  and we reject otherwise.
\end{enumerate}

We now describe how to perform the kernelization step. Let us consider
the registered candidates first. It turns out that it suffices to
focus on a few relevant ones only.

\begin{definition}[Relevant registered candidates]
  Fix an integer $\constantk$, $\constantk \geq 1$, and consider an
  instance of $\constantk$-Approval-DCAC. We call a registered
  candidate \emph{relevant} if this candidate receives \emph{at least}
  one point. For the case of $\constantk$-Veto-DCAC, we call a
  registered candidate \emph{relevant} if this candidate receives
  \emph{at least} one veto. We refer to those candidates that are not
  relevant as \emph{irrelevant}.
\end{definition}

For the case of $t$-Approval-DCAC, we can safely remove all the
irrelevant registered candidates. This is so for two reasons: First,
removing an irrelevant candidate does not change the score of any other
registered (or later-added) candidate. Second, an irrelevant
candidate can never obtain score higher than the despised one because,
on the one hand, initially this candidate has score zero, and, on the
other hand, under $t$-Approval adding candidates never increases the
scores of those already registered.

For the case of $t$-Veto-DCAC, if there is an irrelevant candidate
then there are two possibilities. If $d$ receives at least one veto,
then $d$ already is not a winner (because the irrelevant candidate
receives no vetoes and, thus, defeats $d$). If $d$ does not receive any
vetoes, then this stays so, irrespective what candidates we add, and $d$
remains a winner. In either case, we can immediately output the
correct answer.

Thus, from now on we assume that all the registered candidates are
relevant.  For each $\constantk$-Approval-DCAC instance
($\constantk$-Veto-DCAC instance) with $n$ voters, at most $t \cdot n$
candidates are relevant.

To deal with the unregistered candidates, we introduce the notion of a
$\{d,p\}$-signature (recall that $d$ is the despised candidate and $p$
is a candidate whose goal is to defeat $d$).  Let $c$ be some
unregistered candidate. Each voter can rank $c$ in three different
ways relative to $p$ and $d$: This voter can rank $c$ ahead of both
$p$ and $d$, below both $p$ and $d$, or between $p$ and~$d$ (a finer
distinction is not necessary).  A $\{d,p\}$-signature for~$c$ is a
vector in $[3]^n$ that for each voter indicates which of these cases
holds. Formally, we use the following definition.

\begin{definition}[$\{d,p\}$-Signature]\label{def:signature}
  Consider an election~$(C \cup A,V)$, a candidate $d \in C$, and a
  candidate $p \in C \cup A$. Let $n$ be the number of voters in $V$.
  A $\{d,p\}$-signature of candidate $c \in (C \cup A)\setminus
  \{d,p\}$ is a size-$n$~vector~$\signature =
  (\sig{1},\sig{2},\ldots,\sig{n}) \in [3]^n$ such that for each
  voter~$v_i \in V$, it holds that:
   \begin{align*}
    \sig{i} = 
    \begin{cases}
      3, & \text{ if } v_i \text{ prefers $c$ to both $p$ and $d$},\\
      1, & \text{ if } v_i \text{ prefers both $p$ and $d$ to $c$},\\
      2, & \text{ otherwise}.
    \end{cases}
  \end{align*}
\end{definition}

Now, observe that for a given choice of $p$, adding exactly~$\constantk$ 
candidates with the same $\{d,p\}$-signature has the same effect
on the relative scores of $p$ and $d$ as adding more than $\constantk$
such 
candidates.

\begin{lemma}\label{lem:approval-dcac-bound-unregistered-candidates}
  Consider an instance $I\coloneqq((\electionC, \electionV), A, d \in \electionC, \solk)$ of $\constantk$-Approval-DCAC (of
  $\constantk$-Veto-DCAC), with the despised candidate $d$, and with
  some arbitrarily selected candidate $p\in C \cup A$. Let $\signature$ be some
  $\{d,p\}$-signature for this election.  Adding $t$ unregistered
  candidates with signature $\signature$ has the same effect on the
  relative scores of $p$ and $d$ as adding more than $t$ candidates
  with this signature.
\end{lemma}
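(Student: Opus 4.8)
The plan is to reduce everything to a per-voter argument. Fix the despised candidate $d$, the auxiliary candidate $p$, and a $\{d,p\}$-signature $\signature$; regard the registered candidates $C$ (together with whatever candidates have already been added) as a fixed background election, and for $a \geq 0$ let $g_i(a)$ be the contribution of voter $v_i$ to $\mathrm{score}(p) - \mathrm{score}(d)$ when $a$ further candidates, all of signature $\signature$, are added to that background (for $t$-Veto read ``number of vetoes'' and work with $\mathrm{vetoes}(p) - \mathrm{vetoes}(d)$; the two readings differ only by an overall sign). Since the total effect on the relative scores is $\sum_i g_i(a)$, it suffices to prove that for every voter $v_i$ we have $g_i(t) = g_i(a)$ for all $a > t$, i.e.\ that $g_i$ has already attained its limiting value at $a = t$. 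Throughout we may assume there are at least $t$ candidates of signature $\signature$, since otherwise the statement is vacuous.

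First I would record the only structural fact needed: because each voter's preference order over $C \cup A$ is fixed, the signature pins down, for voter $v_i$, the zone occupied by \emph{every} added candidate of signature $\signature$ --- all of them lie above both $p$ and $d$ when $\sig{i} = 3$, all below both when $\sig{i} = 1$, and all strictly between $p$ and $d$ when $\sig{i} = 2$ (and, by the remark preceding the lemma, on one fixed side, so the mutual order of $p$ and $d$ is untouched). The internal order of the added candidates is irrelevant for the $p$-versus-$d$ comparison.

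The body of the proof is then a six-way case analysis (three signature values times two rules), each case a one-line counting argument. For $t$-Approval: if $\sig{i} = 1$, neither $p$ nor $d$ moves, so $g_i$ is constant in $a$; if $\sig{i} = 3$, once $a \ge t$ candidates sit above both, each of $p, d$ is pushed past position $t$, contributes $0$, and $g_i(a) = 0$; if $\sig{i} = 2$, exactly one of $p, d$ drops by $a$ and leaves the top $t$ as soon as $a \ge t$, the other being unmoved, so $g_i$ freezes at $a = t$. For $t$-Veto one runs the dual argument in terms of the number of candidates ranked \emph{below} $p$ and below $d$, using that being in the bottom $t$ positions depends only on that count and not on the total candidate number: $\sig{i} = 3$ changes no ``below'' count, hence no veto and $g_i$ is constant; $\sig{i} = 1$ or $\sig{i} = 2$ raises one or both ``below'' counts by $a$, so the affected candidate(s) stop being vetoed once $a \ge t$ and $g_i$ is then independent of $a$. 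In every case $g_i$ reaches its eventual value at $a = t$, which is exactly the claim; summing over the voters finishes the proof.

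I do not anticipate a real obstacle; the only place that rewards care is the $t$-Veto bookkeeping, where one must keep straight that ``bottom $t$ positions'' is governed solely by how many candidates a voter ranks below the candidate in question, so that candidates inserted above both $p$ and $d$ are genuinely inert while candidates inserted below or between them act through exactly the same thresholding argument as in the $t$-Approval case.
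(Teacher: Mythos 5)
Your proof is correct and follows essentially the same route as the paper's: a per-voter case analysis on the three signature values, observing that in each case the contribution to the score difference between $p$ and $d$ stabilizes once $t$ candidates of the given signature have been added. The only difference is cosmetic — you make the per-voter contribution function $g_i$ explicit and spell out the $\constantk$-Veto bookkeeping, which the paper dismisses as analogous.
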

\begin{proof}
  Let us focus on the case of $t$-Approval-DCAC.  Let $n$ be the
  number of voters in instance $I$. We have $\signature = (\sig{1},
  \ldots, \sig{n})$.  Consider the $i$th voter.
  \begin{enumerate}
  \item If $\sig{i} = 3$, then after adding $t$ candidates with
    signature $\signature$, the $i$th voter will give $0$~points to
    both $p$ and $d$.

  \item If $\sig{i} = 1$, then the $i$th voter will give the same
    number of points to $p$ (respectively, to~$d$) as prior to adding candidates, irrespective how many
    candidates with signature $\signature$ we add.

  \item If $\sig{i} = 2$, then for each candidate~$c$ with signature~$\signature$, 
  the $i$th voter ranks $c$ between candidates~$p$ and $d$.
  Since this voter may have preferences either $p \pref d$ or $d \pref p$, 
  this implies that either for each candidate $c$ with
  signature $\signature$, the $i$th voter has preference order~$p \pref c \pref d$, or for each candidate $c$ with signature
  $\signature$, the $i$th voter has preference order $d \pref c
  \pref p$. In the first case, adding $t$ (or more) candidates with
  signature~$\signature$ will ensure that the $i$th voter gives
  zero points to $d$ and gives the same number of points to~$p$ as
  prior to adding the candidates.  In the second case, the situation is
  the same, but with the roles of $p$ and $d$ swapped.
  \end{enumerate}
  Summing over the points provided by all voters, this proves that
  adding $t$ candidates with a given signature $\signature$ has the
  same effect on the relative scores of $p$ and $d$ as adding any more
  such candidates.
  
  The argument for the case of $t$-Veto-DCAC is
  analogous.~\end{proof}

In effect, it suffices to keep at most $\constantk$
candidates with each signature.  This results in having at most
$\constantk \cdot 3^n$ 
unregistered candidates. We can now formally describe our
kernelization process.

\begin{theorem}\label{lem:approval-dcac-turing-kernel}
  For each fixed integer~$\constantk\ge 1$, $\constantk$-Approval-\textsc{DCAC}
  and $\constantk$-Veto-\textsc{DCAC} admit Turing kernels of size
  $O(\constantk \cdot 3^n)$, where $n$ is the number of voters.
\end{theorem}

\begin{proof}
  Consider an instance $I$ of $\constantk$-Approval-DCAC (of
  $\constantk$-Veto-DCAC). Let $d$ be the despised candidate and let
  $n$ be the number of voters in the instance. As per our discussion,
  we can assume that the instances are non-trivial and that all 
  registered candidates are relevant. Thus, there are at most $t\cdot
  n$ registered candidates.  By
  Lemma~\ref{lem:approval-dcac-bound-unregistered-candidates}, for
  each choice of $p$ it suffices to consider $3^n$
  $\{d,p\}$-signatures, and for each signature at most $t$
  candidates. Altogether, for each choice of candidate $p$ among the
  registered and unregistered candidates, we produce an instance of
  $\constantk$-Approval-DCAC (of $\constantk$-Veto-DCAC), with at most
  $t\cdot n$ registered candidates and at most $t\cdot 3^n$
  unregistered ones (for each possible signature we keep up to $t$
  arbitrarily chosen unregistered candidates); in each instance we can
  add either the same number of candidates as in $I$, or one less, if
  $p$ is an ``added'' candidate already.  It is possible to preclude
  $d$ from winning in the original instance if and only if it is
  possible to do so in one of the produced instances.
\end{proof}

Using a brute-force approach on top of the kernelization given by
Theorem~\ref{lem:approval-dcac-turing-kernel}, it is possible to solve
both $\constantk$-Approval-DCAC and $\constantk$-Veto-DCAC in $\fpt$
time: straightforward application of a brute-force search to each
instance produced by Theorem~\ref{lem:approval-dcac-turing-kernel} gives
running time $O^*({t \cdot 3^{n} \choose \solk})$,
where the $O^*$ notation suppresses polynomial terms. However, it never makes sense to add more than $t\cdot n$ candidates
(intuitively, if we added more than $t\cdot n$ candidates, then at least one
would be irrelevant and we could as well not add him).  Thus we
can assume that $\solk \le t\cdot n$.  In effect, the straightforward
brute-force algorithm running on top of
Theorem~\ref{lem:approval-dcac-turing-kernel} has running time
$O^*((t \cdot 3^n)^{t\cdot n})$. However, if we are willing to spend more space,
then we can obtain significantly better running times.

\begin{theorem}\label{lem:plurality-dcac-fpt}
  Plurality-\textsc{DCAC} can be solved in time $O(m\cdot n \cdot 2^{n})$,
  using $O^*(2^n)$ space, where $m$ is the total number of candidates
  and $n$ is the number of voters.
\end{theorem}

\begin{proof}
  Our algorithm uses a similar general structure as before.  We assume
  that we are given a non-trivial instance, where all the registered
  candidates are relevant.  First, we guess a candidate~$p$ whose goal
  is to defeat $d$ and from now on we focus on a situation where we
  have both $p$ and $d$, and the goal is to ensure that $p$ gets more
  points than $d$. (If $p$ is an unregistered candidate, then we add
  $p$ to the election, decrease the number of candidates that we can
  add by one, and proceed as if $p$ were a registered candidate to
  begin with.)
  
  We define a simplified notion of a candidate's signature.  A
  \emph{signature} for an unregistered candidate~$c$ is a size-$n$
  binary vector~$\typevec=(\type1, \ldots, \type{n}) \in \{0,1\}^{n}$ such that the following hold:
  \begin{enumerate}
  \item We have $\type{i}=1$ if the $i$th voter ranks candidate $c$  ahead of all 
  the registered candidates.
\item We have $\type{i}=0$ if the $i$th voter ranks candidate $c$
  below some registered candidate.
  \end{enumerate}
  We define the \emph{signature}~$\typevec$ of a set $A'$ of unregistered candidates
  analogously: Value one at a given position means that some candidate
  from $A'$ is ranked ahead of all the registered candidates and value
  zero means that some registered candidate is ranked ahead of all the
  members of $A'$.

  Let $k$ be the number of candidates that we are allowed to add.
  Using the new notion of signatures, we maintain a size-$2^{n}$ table
  $\dptable\coloneqq(\arr{\typevec}) \in [\solk+1]^{2^{n}}$, which for each
  signature~$\typevec \in \{0,1\}^{n}$ stores the minimum
  number~$\arr{\typevec}$ of unregistered candidates such that there
  is a size-$\arr{\typevec}$ ($\arr{\typevec} \leq k$)
  candidate subset~$A^{(\typevec)}\subseteq A\setminus \{p\}$ with
  signature~$\typevec$ (value $\solk+1$ indicates that there is no
  such set). 

  To describe our algorithm for computing table $\dptable$, we need
  one more piece of notation. For each pair of signatures $\typevec$ and
  $\typevec'$, we define a ``merged'' signature $\typevec
  \oplus\typevec' = (\max\{\type{i},\type{i}'\})_{i\in [n]}$.  In
  other words, we apply the coordinate-wise $\max$ operator.
  We compute table $\dptable$ as follows (our algorithm is slightly
  more complicated than necessary for the case of Plurality rule, but
  we will also use it as a base for more involved settings):
  \begin{enumerate}
  \item We initiate the table by setting $\arr{\typevec}\coloneqq1$ if there
    is at least one unregistered candidate with signature~$\typevec$,
    and we set $\arr{\typevec}\coloneqq\solk+1$ otherwise.
  \item For each unregistered candidate $a$ we perform the following
    operations:
    \begin{enumerate}
    \item We compute $a$'s signature $\typevec_a$.
    \item We compute a new table $\arr{}'$, by setting, for each signature $\typevec$:
    \[
    \arr{\typevec}' = \min( \{ \arr{\typevec} \} \cup \{ \arr{\typevec'}+1 \mid \typevec = \typevec' \oplus\typevec_a\} \cup \{k+1\} ).
    \]  
  \item We copy the contents of $\arr{}'$ to $\arr{}$.  (At this
    point, for each signature $\typevec$, $\arr{\typevec}$ is the
    number of candidates in the smallest set (of size up to $k$)
    composed of the so-far processed candidates that jointly have this
    signature, or is $k+1$ if no such set exists.)
    \end{enumerate}
  \item We pick a signature $\typevec$ such that $\arr{\typevec}$ has
    a minimum value and adding the candidate set~$A_{\typevec}$ that
    implements this signature ensures that $p$ has more points than
    $d$ (note that this last condition is easy to check: Given a
    signature $\typevec$, if the $i$th component $\type{i}$ is zero,
    then the $i$th voter gives one point to whomever this voter ranks
    first among the registered candidates; if $\type{i}$ is one, then
    the point goes to a candidate from $A_{\typevec}$, that is,
    neither to $p$ nor $d$). If $\arr{\typevec}$ is smaller or equal
    to $k$, the number of candidates that we can add, then we
    accept. Otherwise we reject (for this choice of $p$).
  \end{enumerate}

  Let us first consider the algorithm's running time. The most
  time-consuming part of the algorithm is the loop in the second step
  of the procedure computing the table $\arr{}$.  For each out of at
  most $m$ candidates, computing $\arr{}'$ requires filling in $O(2^n)$ entries of
  the table. If we first copy the then-current contents of $\arr{}$ to $\arr{}'$,
  and then perform the remaining updates, this can be done in time
  $O(m\cdot n \cdot 2^{n})$. This dominates the running time of the remaining
  parts of the algorithm.  

  Now let us consider the correctness of the algorithm. Assume that we
  have guessed the correct candidate $p$ and that there is a subset of
  unregistered candidates $A' = \{a_1, \ldots, $ $ a_\ell\}$ such that
  $p$ has more points than $d$ after we add the candidates from~$A'$
  and $\ell \leq k$. If $\typevec$ is the signature of the set $A'$,
  then notice that the algorithm indeed computes value
  $\arr{\typevec} \leq \ell$. Further, if the algorithm accepts, then
  it must have found a solution. Thus the algorithm is correct.
\end{proof}

We can apply the above ideas to the case of $\constantk$-Approval and
$\constantk$-Veto as well.
The proofs are in Appendix~\ref{app:sig}.

\newcommand{\lemapprovaldcacfpt}{For each fixed integer~$\constantk
  \ge 2$, $\constantk$-Approval-\textsc{DCAC} can be solved in time
  $\min\{O(m \cdot (\constantk \cdot 3^n)^{t\cdot n}), O(m\cdot n \cdot t \cdot(t+1)^{t\cdot n})\}$,
  where $m$ is the total number of
  candidates and $n$ is the number of voters.}

\begin{theorem}\label{lem:approval-dcac-fpt}
  \lemapprovaldcacfpt
\end{theorem}

Adapting the algorithms in a straightforward way (basically by
inverting, or reversing, the signatures) used for
Theorem~\ref{lem:plurality-dcac-fpt} and
Theorem~\ref{lem:approval-dcac-fpt}, we can show analogous results for
the case of $t$-Veto-DCAC.

\begin{corollary}
  \label{cor:veto-based-dcac-fpt}
  For each fixed integer~$\constantk \ge 1$, $\constantk$-Veto-DCAC
  can be solved in time \linebreak
  $\min\{O(m \cdot (\constantk \cdot 3^n)^{t\cdot n}), O(m\cdot n \cdot t \cdot(t+1)^{t\cdot n})\}$, where $m$ is the total number of
  candidates and $n$ is the number of voters.
\end{corollary}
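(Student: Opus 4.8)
The plan is to establish the two running-time bounds separately and then take whichever is smaller; throughout, the parameter is the number $n$ of voters and $m$ is the total number of candidates. One might hope to reduce $\constantk$-Veto-DCAC to $\constantk$-Approval-DCAC by reversing every vote, but this does not work cleanly: reversing turns ``$d$ is a $\constantk$-Veto winner'' into ``$d$ is a $\constantk$-Approval \emph{loser}'', so destructive control over the former is not the same question as destructive control over the latter. Instead I would run, essentially verbatim, the two algorithms developed for $\constantk$-Approval-DCAC, but working directly with veto scores and with the ``reversed'' version of the signature notion (this is precisely what is meant by ``inverting the signatures''). As usual, one first discards trivial instances and irrelevant registered candidates: by the observation preceding Lemma~\ref{lem:approval-dcac-turing-kernel}, we may assume every registered candidate is vetoed by at least one voter, so there are at most $\constantk\cdot n$ registered candidates; and we guess, among the $m$ candidates, the candidate $p$ whose task is to obtain strictly fewer vetoes than $d$, adding $p$ and decreasing $\solk$ by one if $p$ is unregistered.

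For the first bound, I would invoke Lemma~\ref{lem:approval-dcac-turing-kernel}, which is already stated for $\constantk$-Veto-DCAC: for the fixed choice of $p$ it produces an equivalent instance with at most $\constantk\cdot n$ registered and at most $\constantk\cdot 3^{n}$ unregistered candidates (for each of the $3^{n}$ possible $\{d,p\}$-signatures we keep up to $\constantk$ arbitrary unregistered candidates, which by the $\constantk$-Veto case of Lemma~\ref{lem:approval-dcac-bound-unregistered-candidates} loses nothing). Exactly as in the paragraph before Lemma~\ref{lem:approval-dcac-fpt}, it never helps to add more than $\constantk\cdot n$ candidates: if we added more, then since there are only $\constantk\cdot n$ vetoed positions across the $n$ votes, some added candidate would be vetoed by nobody, and dropping it leaves the vetoed block of every vote unchanged. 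Hence we may assume $\solk\le\constantk\cdot n$ and solve each kernelized instance by brute force over all size-at-most-$\solk$ subsets of the unregistered candidates, giving total time $O(\constantk\cdot(3^{n})^{\constantk\cdot n})$.

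For the second (faster, space-hungry) bound, I would mirror the dynamic program of Lemma~\ref{lem:plurality-dcac-fpt} and its refinement in Lemma~\ref{lem:approval-dcac-fpt}. The only change is the signature: the signature of a set $A'$ of unregistered candidates is an $n$-vector whose $i$-th entry is a $\constantk$-vector whose $j$-th coordinate counts how many members of $A'$ the $i$-th voter ranks \emph{below} all but $j-1$ of the registered candidates (the reverse of the approval signature), with every entry exceeding $\constantk$ truncated to $\constantk$. The merge operator $\oplus$ is unchanged (coordinate-wise sum, then cap at $\constantk$), so the table still has $(\constantk+1)^{\constantk\cdot n}$ entries and is updated once per unregistered candidate; from a signature one reads off the numbers of vetoes of $p$ and of $d$, hence whether $d$ is defeated. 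This yields running time $O(m\cdot n\cdot\constantk\cdot(\constantk+1)^{\constantk\cdot n})$, and taking the minimum of the two bounds completes the proof.

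The step that needs the most care is confirming that this reversed signature really captures the effect of $A'$ on the relative veto scores of $p$ and $d$ — i.e., the $\constantk$-Veto half of Lemma~\ref{lem:approval-dcac-bound-unregistered-candidates}. The feature absent in the approval case is that an added candidate may itself land in a voter's bottom $\constantk$ positions, simultaneously acquiring a veto and shifting the candidate formerly at rank $m-\constantk$; one has to verify that, vote by vote, the resulting veto counts of $p$ and $d$ depend only on how many members of $A'$ fall below the relevant registered candidates (together with $p$, $d$), which is exactly the mirror image of the approval argument and which the paper already asserts. Everything else is a line-for-line copy of the $\constantk$-Approval development.
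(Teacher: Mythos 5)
Your proposal is correct and follows essentially the same route as the paper, whose entire proof of this corollary is the one sentence preceding it ("adapting the algorithms... by inverting, or reversing, the signatures"): you reuse the Turing kernel of Lemma~\ref{lem:approval-dcac-turing-kernel} (already stated for $\constantk$-Veto) plus brute force for the first bound, and mirror the signature dynamic program of Lemmas~\ref{lem:plurality-dcac-fpt} and~\ref{lem:approval-dcac-fpt} with bottom-anchored signatures for the second. Your elaboration of why a black-box reduction to the approval case fails, and of the veto half of Lemma~\ref{lem:approval-dcac-bound-unregistered-candidates}, is a faithful filling-in of details the paper leaves implicit.
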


To conclude the discussion of the signature technique for the case of
destructive control by adding candidates, we consider the
combinatorial variant of the problem.  The situation is more
complicated because now we are adding bundles of candidates instead of
individual candidates. In effect, we cannot upper-bound the number of
bundles to add in $\constantk$-Approval-\textsc{Comb}-DCAC (or
$\constantk$-Veto-\textsc{Comb}-DCAC).  This is so because bundles with the
same signature but with different sizes may have different effects on
the score difference between the despised candidate~$d$ and a specific
guessed candidate~$p$ (indeed, $\constantk$-Approval-\textsc{Comb}-DCAC is $\wone$-hard for $\constantk \geq 2$,
as shown in~\autoref{section_formal_proofs_mcc}).  Yet, for Plurality
and for Veto only the first (or the last) position gets a point (a
veto).  This structural observation allows us to use our
non-combinatorial algorithms.

\begin{corollary}\label{cor:plurality-veto-comb-dcac-fpt}
  Plurality-\textsc{Comb}-DCAC and Veto-\textsc{Comb}-DCAC,
  parameterized by the number of voters,
  are fixed-parameter trac\-ta\-ble.
\end{corollary}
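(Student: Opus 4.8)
The plan is to reuse, essentially verbatim, the signature-based dynamic program behind Lemma~\ref{lem:plurality-dcac-fpt} (for Plurality-DCAC) and its veto analogue (Corollary~\ref{cor:veto-based-dcac-fpt}), with the single twist that each bundle $\combRule(a)$, $a\in A$, is now treated as one atomic object to be added or not. The reason this goes through for Plurality (but not for $\constantk$-Approval with $\constantk\ge 2$, where \textsc{Comb}-DCAC would be intractable by Theorem~\ref{thm:approval}) is that under Plurality a voter's point is determined solely by which present candidate it ranks first; consequently, adding the bundle $\combRule(a)$ influences the election only through the binary vector recording, for each voter, whether \emph{some} member of $\combRule(a)$ is ranked ahead of \emph{all} currently registered candidates, and---crucially---these binary descriptions combine by coordinate-wise OR irrespective of the sizes of the bundles. (For $\constantk\ge 2$ one would instead have to track how many candidates of a bundle each voter ranks ahead of $\constantk-1$ registered candidates, a quantity that genuinely depends on bundle size.) For Veto the very same argument applies after the usual reversal (``ranked ahead of all registered candidates'' becomes ``ranked behind all registered candidates'', ``point'' becomes ``veto''), exactly as Corollary~\ref{cor:veto-based-dcac-fpt} is obtained from Lemma~\ref{lem:plurality-dcac-fpt}.

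In detail, I would proceed as follows. As in Lemma~\ref{lem:plurality-dcac-fpt} we may assume the despised candidate $d$ is currently a Plurality winner. We then guess the candidate $p$ that is to beat $d$ and, if $p$ is unregistered, also a trigger $a^\ast\in A$ with $p\in\combRule(a^\ast)$; in that case we add $\combRule(a^\ast)$ to the registered set and decrement $\solk$ by one. This produces polynomially many subinstances, in each of which $p$ and $d$ are registered and it remains to decide whether at most $\solk$ further bundles can be added so that $p$ beats $d$. For a candidate set $B$ put $\sigma(B)\in\{0,1\}^n$ with $\sigma(B)_i=1$ iff voter $v_i$ ranks some member of $B$ ahead of all currently registered candidates; then $\sigma(\combRule(A'))=\bigvee_{a\in A'}\sigma(\combRule(a))$, and the final Plurality scores of $p$ and of $d$ depend only on $\sigma(\combRule(A'))$: in a coordinate where it is $1$ the corresponding voter's point goes to a genuinely new candidate (hence to neither $p$ nor $d$, both registered), and in a coordinate where it is $0$ that voter's point goes to the same registered candidate as before the addition. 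Hence I would run the $O(2^n)$-sized table of Lemma~\ref{lem:plurality-dcac-fpt} with the same coordinate-wise-maximum merge operator, but iterating over the at most $\|A\|$ bundles (budget counting bundles), to obtain for each signature the minimum number of bundles realizing it, and accept iff some signature that makes $p$ beat $d$ is realizable within budget. Correctness mirrors Lemma~\ref{lem:plurality-dcac-fpt}: an arbitrary solution $A'$ with a witness $q$ is recovered in the iteration $p=q$ (taking $a^\ast\in A'$ with $q\in\combRule(a^\ast)$ when $q$ is unregistered), and soundness is immediate since the table only accepts realizable good signatures. The total running time is $\mathrm{poly}(m)\cdot 2^{O(n)}$, so both problems lie in $\fpt$ with respect to the number of voters.

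The main thing requiring care is the treatment of the guessed candidate $p$ when it is not already registered: one must explicitly guess a trigger for it (rather than, say, always adding $\combRule(p)$) so that the registered set and the remaining budget of the subinstance match what an arbitrary optimal solution actually does, and then one has to check that the dynamic program, run relative to this enlarged registered set, reports exactly the final Plurality (respectively, veto) scores of $p$ and $d$ in the original election; the rest is a routine adaptation of the bookkeeping already present in the proof of Lemma~\ref{lem:plurality-dcac-fpt}.
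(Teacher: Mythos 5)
Your proposal is correct and follows essentially the same route as the paper: guess the candidate $p$ that is to beat $d$ (trying every bundle that could bring $p$ in when $p$ is unregistered), replace each unregistered candidate's signature by the signature of its whole bundle, and rerun the dynamic program of Lemma~\ref{lem:plurality-dcac-fpt} over bundles, relying on the fact that under Plurality (and, after reversal, Veto) only the top (bottom) position matters, so bundle signatures combine by coordinate-wise OR independently of bundle sizes. Your added justification of why this breaks for $\constantk\ge 2$ matches the paper's own remark preceding the corollary.
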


\begin{proof}
  For the case of Plurality, it suffices to use, for example, 
  the same algorithm as in Theorem~\ref{lem:plurality-dcac-fpt},
  but with the following changes: 
  \begin{enumerate}
  \item For each choice of candidate~$p$, we also consider each way of
    adding $p$ to the election if $p$ were unregistered ($p$ might
    belong to several different bundles and we try each possibility).
  \item Each unregistered candidate's signature is replaced by the
    signature of the set of candidates in its bundle.
  \end{enumerate}
  Since under Plurality each voter gives a point only to whomever this
  voter ranks first, this strategy suffices. The case of the Veto rule is
  handled analogously.
\end{proof}

\subsection{Destructive Control by Deleting Candidates}

Let us now move on to the case of destructive control by deleting
candidates.  The (Turing) kernelization approach from the previous
section cannot be easily transferred to the case of deleting
candidates.  This is because we cannot upper-bound the number of
candidates that have to be deleted in terms of the number $n$ of the
voters in the election.  However, applying our signature technique
followed by casting the remaining task as an integer linear
program~(ILP), we can show fixed-parameter tractability (for our
parameterization by the number of voters). We present the proof of the
following result in Appendix~\ref{app:sig} (while the proof is quite
interesting technically and we encourage the reader to read it, we
also believe that the previous proofs have presented the signatures
technique sufficiently well).

\newcommand{\lemapprovaldcdcfpt}{For each fixed
  integer~$\constantk\ge 1$, both $\constantk$-Approval-\textsc{DCDC} and $\constantk$-Veto-\textsc{DCDC} 
  can be solved in time $O^*(m \cdot 4^n \cdot (3^n)^{3^n})$, where $m$ is the total number of
  candidates and $n$ is the number of voters.}

\begin{theorem}\label{lem:approval-dcdc-fpt}
  \lemapprovaldcdcfpt
\end{theorem}

We leave it as an open question whether there is a direct
(combinatorial) $\fpt$-algorithm that avoids ILPs as used in
Theorem~\ref{lem:approval-dcdc-fpt}; notably, for many voting problems which can be shown to be $\fpt$ via ILPs,
it is the case that no other, direct algorithm is known~\shortcite{bredereck2014parameterizedgua}.

\section{Remaining Results: Membership in  $\xp$, $\fpt$, and $\p$}
\label{sec:other-proofs}
\label{sec:oth}

\newcommand{\bruteforcevec}{\ensuremath{\vec{b}}}
\newcommand{\bfvec}[1]{\ensuremath{b_{#1}}}
\newcommand{\CollectCands}[1]{\texttt{CollectCands(#1)}}
\newcommand{\Search}[1]{\texttt{BruteForceSearch(#1)}}

In this section we present our remaining algorithms that show
membership of our problems in $\xp$, $\fpt$, and $\p$. In the cases of
$\xp$ and $\fpt$ membership, our algorithms use a simple brute-force
approach.

\subsection{Fixed-Parameter Tractability Results}\label{sec:other-fpt}

We now show simple, fast $\fpt$ algorithms for Plurality-CCDC,
Plurality-DCDC, and Veto-DCDC.  The main idea for the algorithms in
this section is to guess a subset of voters that will give a specific
candidate one point under either Plurality or Veto.  The key
observation is that in the case of deleting candidates, after guessing
this subset of voters, it is trivial to find the set of candidates to
delete to ``implement'' this guess.

\begin{theorem}\label{lem:plurality-ccdc-fpt}
  Plurality-\textsc{CCDC} can be solved in $O(m \cdot n \cdot 2^n)$ time,
  where $n$ is the number of voters and $m$ is the number of candidates in the input election.
\end{theorem}

\begin{proof}
  Let $I\coloneqq((C,V), p, \solk)$ be a Plurality-CCDC instance.  If $I$ is
  a yes-instance, then after deleting at most $\solk$~candidates,
  there must be a subset of voters who each give candidate~$p$ one point, and no
  other candidate has more points than $p$.  Observe that in order to
  let $p$ gain one point from a voter, one has to delete all the
  candidates this voter prefers to $p$. Our algorithm, based on these
  observations, proceeds as follows.

  We consider all $2^n$~subsets of $n$ voters. For each considered 
  set~$V'$ of voters we do the following: For each voter $v' \in V'$, we
  delete all candidates that $v'$ prefers to~$p$. In effect, all
  members of $V'$ rank $p$ first.  Then, we keep deleting all
  candidates that have more than $|V'|$ points (note that deleting
  some candidate that has more than $|V'|$~points may result in some other
  candidate exceeding this bound). If in the end no candidate has more
  than $|V'|$ points and we deleted at most $\solk$ candidates, then we
  accept. Otherwise, we proceed to the next subset of voters. If we
  did not accept after going over all subsets of voters, then we reject.
  
  To see why the algorithm is correct, note that whenever it accepts,
  it has constructed a correct solution. If, however,
  there is a correct solution in which, after deleting the candidates,
  $p$ gets points exactly from the voters in some subset $V'$, then the algorithm will accept when considering this
  subset. Establishing the running time is straightforward.
\end{proof}

It is straightforward to see how to adapt the algorithm from
Theorem~\ref{lem:plurality-ccdc-fpt} to the destructive case. In
essence, it suffices to try all choices of a candidate $p$ whose goal
is to defeat the despised candidate $d$ and for each such choice guess
a subset of voters that are to give points to $p$. If after deleting
the candidates that these voters prefer to $p$ (assuming that none
of them prefers $d$ to $p$) the despised candidate $d$ has fewer
points than $p$, then we accept. In the destructive case there is no
need to have the final loop of deleting candidates scoring higher than
$p$.

\begin{corollary}\label{cor:plurality-dcdc-fpt}
  Plurality-DCDC can be solved in $O(m^2 \cdot n \cdot 2^n)$ time,
  where $n$ is the number of voters and $m$ is the number of candidates in the input election.
\end{corollary}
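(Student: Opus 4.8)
The plan is to reduce to essentially the same brute-force search that proves Lemma~\ref{lem:plurality-ccdc-fpt}, using the generic subroutines of Algorithm~\ref{alg:brute-force-fpt} (in particular \CollectCands{$\cdot$}). The structural fact we rely on is the Plurality-specific observation already exploited in the constructive case: for a fixed candidate $a$, making a voter $v$ give its point to $a$ after deleting candidates forces us to delete \emph{exactly} the candidates that $v$ ranks above $a$, and deleting that set is also sufficient.

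First I would iterate over all at most $m-1$ choices of a candidate $p \in C \setminus \{d\}$ whose intended role is to defeat the despised candidate $d$ — recall that in DCDC we may not delete $d$ itself. For each such $p$, I would iterate over all $2^n$ subsets $V' \subseteq V$, the intention being that $V'$ is exactly the set of voters ranking $p$ first after the deletions. For the pair $(p,V')$ I compute the set $C'$ returned by \CollectCands{$\cdot$}, i.e., the union over $v \in V'$ of the candidates that $v$ ranks above $p$. If $d \in C'$ — equivalently, some voter in $V'$ prefers $d$ to $p$ — the guess is infeasible and I skip it. Otherwise I delete $C'$, recompute the Plurality scores of $p$ and of $d$ in the resulting election, and accept if $\|C'\| \le \solk$ and $p$ has strictly more points than $d$. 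If no pair $(p,V')$ leads to acceptance, I reject. Unlike the constructive case, there is no follow-up loop deleting further high-scoring candidates: for destructive control it suffices that $d$ is beaten by some candidate, irrespective of the scores of the remaining ones.

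For correctness, soundness is immediate, since on acceptance the algorithm has exhibited a set $C'$ of at most $\solk$ deletions, none equal to $d$, after which $p$ beats $d$ and hence $d \notin \calR(C \setminus C', V)$. For completeness, let $C^*$ be a solution, let $p \in C \setminus C^*$ be any candidate with strictly more Plurality points than $d$ in $(C \setminus C^*, V)$ (one exists because $d$ is not a winner there), and let $V'$ be the set of voters ranking $p$ first in $(C \setminus C^*, V)$. Since $d \notin C^*$ and every $v \in V'$ ranks $p$ first among the surviving candidates, each $v \in V'$ ranks $d$ below $p$; hence $C^* \supseteq C'$, where $C'$ is the corresponding output of \CollectCands{$\cdot$}, and $d \notin C'$. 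Deleting only $C'$ already suffices: every $v \in V'$ still ranks $p$ first, so $p$'s score after deleting $C'$ is at least $\|V'\|$, which is $p$'s score after deleting $C^*$; and since a candidate's Plurality score is monotone non-decreasing under deletions of candidates other than itself and $C' \subseteq C^*$, the score of $d$ after deleting $C'$ is at most its score after deleting $C^*$, which is strictly below $p$'s score. Therefore $p$ beats $d$ after deleting $C'$, and $\|C'\| \le \|C^*\| \le \solk$, so the algorithm accepts on $(p,V')$.

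Finally, the running time: there are $O(m)$ choices for $p$ and $2^n$ choices for $V'$, and for each pair the work — computing $C'$, performing the deletions, and recomputing two Plurality scores — is $O(m \cdot n)$ (each vote scanned a constant number of times), giving the claimed $O(m^2 \cdot n \cdot 2^n)$ bound. I do not expect a genuine obstacle here; the only point that needs a moment's care is the completeness step, namely the observation that no deletion avoiding $d$ can lower $d$'s Plurality score, which is exactly what lets us replace an arbitrary solution $C^*$ by the canonical deletion set $C'$ determined by the guess $(p,V')$.
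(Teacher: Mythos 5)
Your proposal is correct and follows essentially the same route as the paper: adapt the brute-force search of Lemma~\ref{lem:plurality-ccdc-fpt} by additionally guessing the candidate $p$ who is to defeat $d$ (accounting for the extra factor of $m$), guessing the voter subset $V'$, deleting exactly the candidates those voters rank above $p$, and dropping the final loop that deletes other high-scoring candidates. Your explicit completeness argument via the monotonicity of $d$'s score under deletions not involving $d$ is exactly the observation the paper leaves implicit.
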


We provide an analogous result for the case of Veto.

\begin{theorem}\label{lem:veto-dcdc-fpt}
   Veto-\textsc{DCDC} can be solved in $O(m\cdot n\cdot 2^n)$ time,
   where $n$ is the number of voters and $m$ is the number of candidates in the input election.
\end{theorem}

\begin{proof}
  We use almost the same approach as for
  Theorem~\ref{lem:plurality-ccdc-fpt}. First, we guess candidate $p$
  whose goal is to have fewer vetoes than $d$. Deleting candidates can
  only increase the number of vetoes that a remaining candidate
  has. Thus, our algorithm proceeds as follows.

  We consider every subset $V'$ of voters that prefer $p$ to $d$ in
  the election. For each voter $v'$ in the guessed subset, we delete
  all candidates that this voter ranks below $d$ (by choice of~$V'$, $p$ is never deleted).
  If in effect $d$ has more vetoes than
  $p$, we accept. Otherwise we try the next subset of voters.  If we
  do not accept after processing all subsets of voters, then we reject.
  
  Establishing the correctness and the running time of the algorithm
  is straightforward.
\end{proof}

\subsection{XP Results}

In this section, we establish $\xp$ results for all our $\wone$-hard
problems.  This implies that if the number of voters is a constant, then
the problems are polynomial-time solvable.
 
\begin{theorem}\label{lem:approval-CCAC-CCDC-xp}
  For each fixed integer~$\constantk$, $\constantk\ge 1$, and for each
  control type $\mathcal{K} \in \{\text{CCAC},\allowbreak \text{CCDC}\}$,
  $\constantk$-Approval-$\mathcal{K}$ and
  $\constantk$-Veto-$\mathcal{K}$ can be solved in time
  $O(m^{\constantk n}\cdot m \cdot n)$, where $m$ is the number of candidates and
  $n$ is the number of voters.
\end{theorem}

\begin{proof}
  We consider the \textsc{CCAC} and the CCDC cases jointly, in parallel for
  both $\constantk$-Approval and $\constantk$-Veto.  Our algorithm
  first guesses for each voter the set of $t$ candidates that this
  voter will rank first (for the case of $\constantk$-Approval) or
  last (for the case of $\constantk$-Veto). There are $O(m^{\constantk n})$ possible different guesses.
    
  For each guess, for each voter, we
  verify which candidates have to be added (for the case of \textsc{CCAC}) or
  deleted (for the case of DCAC) to ensure that the voter ranks the
  guessed $t$ candidates on top. If it suffices to add/delete $k$
  candidates to implement the guess, and in effect of implementing the
  guess our preferred candidate is a winner, then we accept. Otherwise
  we proceed to the next guess. If no guess leads to acceptance, then
  we reject.  Establishing the correctness and the running time of the
  algorithm is immediate.
\end{proof}

\begin{theorem}\label{lem:approval-Comb-CCAC-CCDC-xp}
  For each fixed integer~$\constantk$, $\constantk\ge 1$, and each
  control type $\mathcal{K} \in \{\text{CCAC},\text{DCAC}\}$,
  $\constantk$-Approval-\textsc{Comb}-$\mathcal{K}$ and
  $\constantk$-Veto-\textsc{Comb}-$\mathcal{K}$ can be solved in time
  $O(m^{2\constantk n}\cdot m \cdot n)$, where $m$ is the total number of candidates and
  $n$ is the number of voters.
\end{theorem}

\begin{proof}
  We use the same approach as described in the proof of
  Theorem~\ref{lem:approval-CCAC-CCDC-xp}, but in addition to guessing
  the first $\constantk$ candidates for each vote, we also guess for
  each added candidate $c$ the candidate to whose bundle $c$ belongs.
\end{proof}

\subsection{Polynomial-time Solvable Case}

There is one case that is still missing,
which we show next.

\begin{theorem}\label{thm:maximincombdcac}
  Maximin-\textsc{Comb}-\textsc{DCAC} can be solved in $O(m^3 \cdot n)$ time, where $m$ is the number of
  candidates and $n$ is the number of voters.
\end{theorem}

\begin{proof}
  It was shown by~\citet{fal-hem-hem:j:multimode} that Maximin-DCAC is
  polynomial-time solvable.  The same strategy can be applied for the
  combinatorial case as well. To this end, let $\probCDCACInstance$ be an Instance for Maximin-\textsc{Comb}-DCAC.
  
  The algorithm is simple and can be described as follows: We
  guess up to two candidates, add their bundles to the election, and
  check whether the despised candidate~$d$ is no longer a winner; if so, we
  accept and otherwise we reject.

  To see why this simple algorithm is correct, consider a solution, that is, a set~$A'$ of at most $k$ unregistered candidates whose bundles are to be added.
  If $A'$ consists of at most two candidates, then we are done.
  Otherwise, let us take a closer look at the set~$A'$. 
  It is clear that in the resulting election~$E' \coloneqq (C\cup \combRule(A'), V)$,
  $d$ is not a winner.
  Therefore, there must be
  at least one other candidate $p$ that has a higher score than~$d$.
  Consider the bundle~$b_p$ of some candidate in $\combRule(A')$ such that $b_p$ includes $p$
  (indeed, there might be several such candidates, and we can choose any
  one of them arbitrarily; it is also possible that $p$ is present in
  the original election, in which case we take $b_p$ to be an
  ``empty'' bundle).  Further, consider some candidate $z$ in $\combRule(A')$ such that
  the Maximin score of candidate $d$ in the election $E'$ is exactly $N_{E'}(d,z)$. There may be
  several such candidates and we choose one arbitrarily. Finally, we
  choose an arbitrary candidate from $A'$ whose bundle $b_z$ includes~$z$
  (in fact, it is possible that $z$ is present in the original
  election, in which case we take $b_z$ to be an ``empty'' bundle).

  It is clear that $p$ defeats $d$ in the election~$(C\cup \combRule(x,y), V)$ where
  $x$ and $y$ are the leaders of the bundles $b_p$ and $b_z$, respectively (if either of these bundles is ``empty'', then we simply disregard it). Thus, each ``yes''-instance of Maximin-DCAC has
  a solution that consists of at most two candidates and, consequently, it is
  enough to guess and test at most two unregistered candidates.
\end{proof}

\section{Outlook}\label{section_outlook}

Our work motivates several possible research directions,
some of which are listed below.

\begin{enumerate}
 \item We still do not know the exact complexity of $2$-Veto-\textsc{Comb}-DCAC.
 This open question is marked by a question mark ($?$) in~\autoref{tab:summary}.

\item It is natural to consider an even more diverse set of voting
  rules.  This might allow for understanding our general techniques
  better, and might help in devising new techniques as well.  For
  example, it would be interesting to consider the Bucklin rule. On
  the one hand, candidate control problems for Bucklin are
  $\np$-complete~\shortcite{erd-fel-rot-sch:j:bucklin-control-theory} and,
  on the other hand, the rule can be seen as an adaptive variant of
  $t$-Approval. Thus it would be interesting to see if our techniques
  can be applied to the case of Bucklin.
 
\item One can experiment with real-world elections to understand the
  practical relevance of our theoretical findings or heuristically
  solve our proven worst-case intractable cases.  
  One possible starting point for
  such an analysis would be the experimental paper of
  \citet{erd-fel-rot-sch:j:bucklin-control-experiments}.
  
\item It is interesting to consider some game-theoretic aspects of
  candidate control, where several agents perform the control
  actions. So far, doing this even for the simplest rules such as
  Plurality was hampered by the fact that these control problems are
  $\np$-hard.  Our (partial) tractability results might help in
  overcoming this obstacle. %
  
\item It might be worthwhile to study the multimode control framework
  of \citeA{fal-hem-hem:j:multimode} for the case of
  few voters. In multimode control one can perform control actions of
  several types at the same time (for example, one can add candidates and
  delete voters). Faliszewski et al.\ expected that combining two types
  of easy control actions would lead to a possibly computationally
  hard multimode control problem, but they did not observe such
  effects among natural voting rules. One possible explanation for
  this fact is that they did not have enough easy control problems
  available to combine. We have shown that many candidate control
  problems become easy (in $\fpt$) when they are parameterized by the
  number of voters and, thus, there are more opportunities for
  studying multimode control problems.
\end{enumerate}

Generally, we believe that the case of few voters did not receive
sufficient attention in the computational social choice literature and
many other problems can (and should) be studied with respect to this
parameter.  The main two reasons for the study of this parameter are
as follows.  First, it is very well motivated, as discussed
in~\autoref{motivation}.  Second, in our control problems we observe a
rich (parameterized) complexity landscape.  We hope that such rich
landscapes exist for other voting problems, when parameterized by the
number of voters.
Indeed, this turned out to be the case in several recent studies
conducted by some of the authors of this
paper~\cite{fal-sko-sli-tal:c:top-k-counting,bre-fal-nie-tal:c:multiwinner-sb}
as well as by other researchers~\cite{mis-nab-sin:c:minimax-av}.

\subsection*{Acknowledgments}

Piotr Faliszewski was supported by DFG project PAWS (NI 369/10) and by AGH University grant 11.11.230.124 (statutory research).
Nimrod Talmon was supported by DFG, Research Training Group ``Methods for Discrete Structures''~(GRK~1408),
while the author was affiliated with TU Berlin, where most of the work was done.
This work was also partly supported by COST Action IC1205 on Computational Social Choice.

A preliminary short version of this work appears in the Proceedings of the
Twenty-Ninth Conference on Artificial Intelligence (AAAI'15),
pages 2045-2051, AAAI Press.

\bibliographystyle{abbrvnat}
\bibliography{bibliography}

\appendix

\section{Deferred Proofs for the \mcctech}
\label{app:mcc}

\newtheorem*{thmcortapprovalccac*}{Theorem~\ref{cor:t-approval-ccac}}

\begin{thmcortapprovalccac*}%
  \cortapprovalccac
\end{thmcortapprovalccac*}

\begin{proof}
  One can use the same proof as in the case of
  Theorem~\ref{lem:plurality-ccac}, but for each voter
  we introduce $t-1$ additional registered dummy candidates which this
  voter ranks first (each voter ranks all the remaining dummy
  candidates last).
  In this way, each dummy candidate has exactly one point. 
  The reasoning for the proof of correctness works in the same way.
\end{proof}

\newtheorem*{thmcortvetoccac*}{Theorem~\ref{cor:t-veto-ccac}}

\begin{thmcortvetoccac*}%
  \cortvetoccac
\end{thmcortvetoccac*}

\begin{proof}  
  One can use the same proof as in Theorem~\ref{lem:veto-ccac},
  but we introduce $t-1$~additional registered dummy candidates whom every voter ranks last.
  In this way,
  each dummy candidate receives exactly one veto from each voter,
  while $p$ and $d$ receive the same number of vetoes as in
  the election constructed in the proof for Theorem~\ref{lem:veto-ccac}.
  
  One can verify that the arguments from that proof apply here as well. 
\end{proof}

\newtheorem*{thmlemkvetoccdc}{Theorem~\ref{lem:k-veto-ccdc}}

\begin{thmlemkvetoccdc}%
  \lemkvetoccdc
\end{thmlemkvetoccdc}

\begin{proof}
  We use almost the same proof as in Theorem~\ref{lem:veto-ccdc}, but
  we add sufficiently many dummy (padding) candidates to ensure that
  we can only delete vertex and edge candidates.  Let $I=(G, h)$ be an
  input instance of $\probColorClique$. Let $E' = (C',V')$ be the
  election created by the reduction from the proof of
  Theorem~\ref{lem:veto-ccdc} on input $I$ and set $k\coloneqq\hforvetoone$.

  We modify this election by extending $C'$ to contain a set $D$ of
  $t$ dummy candidates, $D = \{d_1, \ldots, d_{t}\}$,
  and modify the voter collection $V'$ as follows (recall that the
  number~$|V'|$ of voters is a function polynomially bounded by $h$;
  set $n'\coloneqq |V'|$):
  \begin{enumerate}
  \item For each voter~$v$ in $V'$ except the last group of voters, 
    we modify $v$'s preference order to
    rank the dummies $d_1, \ldots, d_{t-1}$ last and $d_t$ first.
  \item For each voter $v$ in the last group of $V'$,
    we rank all candidates from $D$ such that $v$ will have a preference order of the form
    \[d_t \pref \cdots \pref (D\setminus \{d_t\}) \pref p.\]
  \item We add $n'$ voters, all with preference order of the form
    \[ \cdots \pref p \pref D. \]
  \end{enumerate}
  One can verify that each newly added candidate~$d_i$, $1\le i \le t-1$,
  has $2n'$ vetoes and $d_t$ has $n'$ vetoes.
  Since we assume the input graph to be connected and to have at least two vertices,
  at least one candidate from the edge and vertex candidates receives fewer vetoes than~$p$.
  Thus, $p$ is not a winner initially.

  We claim that $p$ (the preferred candidate from the proof of
  Theorem~\ref{lem:veto-ccdc}) can become a winner by deleting at most
  $k$ candidates if and only if $I$ is a ``yes''-instance.

  First, we
  note that if we delete any of the new dummy candidates from $D\setminus \{d_t\}$, 
  then $p$
  certainly does not become a winner since $p$ will have at least
  $n'+2H$ vetoes and $d_t$ will have exactly $n'$ vetoes. 
  If we delete dummy candidate~$d_t$, 
  then $p$ will receive $2n'$ vetoes,
  but there is at least one remaining vertex or edge candidate which is not vetoed by the last group of voters and has hence less than $2n'$ vetoes.
  In consequence,
  no dummy candidate can be deleted.
  Thus, 
  none of them
  will have fewer vetoes than $p$ and (ignoring the dummy candidates) the election will behave as if
  it was held according to the Veto rule. The argument from the proof of
  correctness in Theorem~\ref{lem:veto-ccdc} holds.
\end{proof}

\newtheorem*{thmlemtwoapprovalccdc}{Theorem~\ref{lem:2-approval-ccdc}}

\begin{thmlemtwoapprovalccdc}%
  \lemtwoapprovalccdc
\end{thmlemtwoapprovalccdc}

\begin{proof}
  The proof is quite similar to that for the case of Veto-CCDC, but
  now the construction is a bit more involved. Again, we give a
  parameterized reduction from the $\probColorClique$ problem.  Let
  $I=\probMCCInstance$ be our input instance with graph~$G$ and
  non-negative integer $h$, and let the notation be as described in
  the introduction to Section~\ref{sec:mcc}.  We form an instance $I'$ of
  $2$-Approval-CCDC based on $I$.  We build our candidate set $C$ as
  follows,
  where we set $T = |V(G)| + |E(G)|$ with the intended meaning
  that $T$ is an integer larger than the number of candidates that we
  can delete; we set $H\coloneqq 2 {h \choose 2} = (h-1)\cdot h$:
  \begin{enumerate}
  \item Introduce the preferred candidate $p$.
  \item Introduce $T$ candidates $B = b_1, \ldots, b_T$ (these are the
    \emph{blocker} candidates whose role, on the one hand, is to ensure
    that $p$ has to obtain a given number of points and, on the other
    hand, is to prevent deleting too many candidates of other types).
  \item For each vertex $v \in V(G)$, introduce candidate~$v$.
  \item For each edge $\{u,v\} \in E(G)$, introduce two candidates,
    $(u,v)$ and $(v,u)$.
  \item Introduce two sets $D = \{d_1, \ldots d_h\}$ and $F=\{f_{(i,j)} \mid 1\le i,j \le h, i \neq j\}$ of dummy candidates.
  \end{enumerate}
  Before we describe the set of voters, we need some additional notation.
  By writing $B$ in a preference order, we mean 
  \[
    b_1 \pref b_2 \pref \cdots \pref b_T.
  \]
  For each two colors $i,j$ ($1 \leq i,j \leq h$, $i \neq j$), by
  $F(i,j)$ we mean an arbitrary (but fixed) ranking of all the
  candidates of the form $(u,v)$, where $u \in V_i(G)$ and
  $v \in V_j(G)$.  The set of voters consists of the following groups:
  \begin{enumerate}
  \item
    We have $h+3H$ voters,
    each with preference
    order of the form 
    \[B  \pref \cdots \pref p.\]
  \item
    For each color $i$,
    $1 \leq i \leq h$,
    there are $3H + 1$ voters, 
    where the first of them has preference order of the form
    \[
      V_i(G) \pref p \pref B \pref \cdots,
    \]
    and the remaining ones have preference order of the form
    \[
      V_i(G) \pref d_i \pref B \pref \cdots.
    \]

  \item
    For each pair $i,j$ of distinct colors ($1 \leq i, j \leq h$, $i \neq j$),  
    there are $3H + h - 1$ voters, 
    where the first of them has preference order of the form
    \[
      F(i,j) \pref p \pref B \pref \cdots,
    \]
    the next two have preference orders of the form
    \[
      f_{(i,j)} \pref E(i,j) \pref B \pref \cdots, 
    \]
    and the remaining ones have preference orders of the form
    \[
    F(i,j) \pref f_{(i,j)} \pref B \pref \cdots. 
    \]
  \item
    For each pair $i,j$ of distinct colors
    ($1 \leq i,j \leq h$, $i \neq j$),
    we introduce two voters with the following preference orders of the forms
    \begin{align*}
      p \pref R(i,j) \pref B \pref \cdots, \\
      p \pref R'(i,j) \pref B \pref \cdots.
    \end{align*}
  \end{enumerate}
  Note that the total number of constructed voters is polynomially bounded by $h$:  
  \begin{align*}
     \quad h+3H+(3H+1)\cdot h + (3H+h-1)\cdot H + 2H 
  = & \quad 2h+ 4H + 4H\cdot h + 3H^2. 
  \end{align*}

  We set the number of candidates that can be deleted to $k \coloneqq \hforvetoone$, 
  with the intention that
  $p$ can become a winner if and only if it is possible to delete all of
  the vertex candidates and all of the edge candidates
  except for the ones corresponding to a multi-colored clique of order~$h$.
  We note that if $G$ indeed contains
  an order-$h$ multi-colored clique~$Q$, 
  then deleting all candidates in $V(G) \setminus
  Q$ and all edge candidates of the form $(u,v)$ where either $u
  \notin Q$ or $v \notin Q$ indeed ensures that $p$ is a winner
  (in this case
  $p$, and  all of the vertex and edge candidates have $h+3H$ points each,
  and all of the blocker candidates have at most $h+3H$ points each).

  Now we come to the reverse direction.  Assume that it is possible to
  ensure $p$'s victory by deleting at most $k$ candidates and let
  $C' \subseteq C$ be a set of at most $k$ candidates such that $p$ is
  a winner of $E' = (C \setminus C',V)$. Note that $k < T-1$ and so
  there are at least two blocker candidates that receive $h + 3H$
  points each from the first group of voters. The only voters from
  whom $p$ can obtain additional points after deleting at most
  $k$~candidates are the ones in the second and third group and there
  are exactly $h + H$ of them ($h$ in the second group and $H$ in the
  third group).  However, $p$ can obtain the points from the second
  and the third groups of voters without, at the same time, increasing
  the score of the highest-scoring blocker candidate if and only if:
  (a) we delete all-but-one vertex candidates of each color, and (b)
  for each pair $i,j$ of distinct colors ($1 \leq i,j \leq h$,
  $i \neq j$) all-but-one edge candidates of the form $(u,v)$, where
  $u \in V_i(G)$ and $v \in V_j(G)$.  This means deleting exactly $k$
  candidates.

  Let us now argue that for each pair of colors $i$ and $j$
  ($1 \leq i,j \leq h$, $i \neq j$), the two remaining edge candidates
  $(u,v)$ and $(u',v')$ such that $u \in V_i(G), v \in V_j(G)$ and
  $u' \in V_j(G), v' \in V_i(G)$ regard the same edge, that is,
  $\{u,v\} = \{u',v'\}$. Indeed, if it were not the case, then one of
  the edge candidates $(u,v)$ and $(u',v')$ would receive at least
  $3H+h+1$ points (from the third group of the voters, for color
  choices $(i,j)$ and $(j,i)$) and $p$ would not be a winner.  Thus
  our claim holds.

  Further, we claim that if $p$ is a winner of $E'$, then for each
  pair of not-deleted vertex candidates $u$ and $v$, it must be the
  case that both edge candidates $(u,v)$ and $(v,u)$ are still in the
  election, meaning that there is an edge between $u$ and $v$ in the
  original graph.  It suffices to consider the case of $(u,v)$ (the
  case of $(v,u)$ is symmetric).  If instead of $(u,v)$ the only
  not-deleted edge candidate for the pair of colors of $u$ and $v$ is
  some edge candidate $(u',v')$ (where $(u',v') \neq (u,v)$), then one
  of the two following cases must happen: either $u$ and $v$ would
  receive more than $h - 1$ points from the fourth group and,
  therefore, would have more than $h + 3H$ points altogether, causing
  $p$ not to be a winner, or $(u',v')$ would receive more than one
  point from the fourth group, again causing $p$ to not be a winner
  (this latter case holds because from the previous paragraph we know
  that $(v',u')$ must be included in the election).  Thus $p$ can
  become a winner by deleting at most $H$ candidates if and only if
  $G$ contains a multi-colored clique of order~$h$.

  It is clear that the given reduction can be computed in polynomial
  time and that it is a parameterized reduction,
  therefore the proof is complete.
\end{proof}

\newtheorem*{thmlemkapprovalccdc}{Theorem~\ref{lem:k-approval-ccdc}}

\begin{thmlemkapprovalccdc}%
  \lemkapprovalccdc
\end{thmlemkapprovalccdc}

\begin{proof}
  Let $E'=(C', V')$ be the election constructed in the proof for
  Theorem~\ref{lem:2-approval-ccdc}.  One can use the same proof as
  for Theorem~\ref{lem:2-approval-ccdc} except that now for each
  voter~$v_i\in V'$ we introduce a group of $t-2$ new dummy
  candidates, $d^i_1, d^i_2, \ldots, d^i_{t-2}$, that are ranked
  first, and for each such introduced group, we introduce two new
  dummies, $c^i_1$ and $c^i_2$, and one voter %
  with preference
  order of the form (we write $D_i$ to refer to the preference
  order~$d^i_1 \pref d^i_2 \pref \ldots \pref d^i_{t-2}$):
  \[ D_i \pref c^i_1 \pref c^i_2 \pref B \pref \cdots.\]
  These voters ensure that none of the new dummy
  candidates can be deleted without increasing the score of the
  highest-scoring blocker candidate. If a score of a highest-scoring
  blocker candidate increases, 
  then the preferred candidate no longer has any chance of winning.
  If none of the new dummy candidates can be deleted,
then the correctness proof works the same as the one given for Theorem~\ref{lem:2-approval-ccdc}.
  
  The number of voters is still polynomially bounded by the clique order~$h$.
\end{proof}

\section{Deferred Proofs for the \cvctech}
\label{app:cvc}

\newtheorem*{thmlembordaccac}{Theorem~\ref{lem:borda-ccac}}

\begin{thmlembordaccac}%
  \lembordaccac
\end{thmlembordaccac}

\begin{proof}
  We give a reduction from \probCVC (we use the notation as
  provided at the beginning of Section~\ref{sec:cvc}).  Given an instance~$I = (G,h)$
  for \probCVC, we construct an instance for Borda-\textsc{CCAC}. We let the
  registered candidate set~$C$ be $\{p,d\} \cup E(G)$, and we let $V(G)$ be
  the set of unregistered candidates.  We construct the following
  voters:
  \begin{enumerate}
  \item For each $\ell$, $1 \leq \ell \leq 3$, we have the following
    two voters:
    \begin{align*}
      \mu(\ell) \colon & B(\ell) \pref E^{(-\ell)} \pref V^{(-\ell)} \pref d \pref p, \\
      \mu'(\ell) \colon & p \pref d \pref \revnot{V^{(-\ell)}} \pref
      \revnot{E^{(-\ell)}} \pref B'(\ell).
    \end{align*}
  \item For $\ell = 4$, we have the following two voters:
    \begin{align*}
      \mu(\ell) \colon & B(\ell) \pref E^{(-\ell)} \pref V^{(-\ell)} \pref d \pref p, \\
      \mu'(\ell) \colon & d \pref p \pref \revnot{V^{(-\ell)}} \pref \revnot{E^{(-\ell)}} \pref B'(\ell). 
    \end{align*}
  \item We have two voters with preference orders 
    \begin{align*}
      E(G)  \pref    p \pref d \pref V(G), \\
      p \pref \revnot{E(G)} \pref d \pref \revnot{V(G)}.
    \end{align*}
  \end{enumerate}
  We claim that it is possible to ensure $p$'s victory by adding $k \coloneqq h$
  candidates if and only if there is a vertex cover of size~$h$ 
  for~$G$.
  
  Let $m' \coloneqq |E(G)|$ be the number of edges in $E(G)$.
  Note that at the beginning,
  $p$ has $5\edgesetsize+5$ points, $d$ has $4\edgesetsize+5$
  points, and each edge candidate has $5m'+6$ points. Thus $p$ is not a
  winner.  Adding each unregistered vertex candidate~$v$ causes the
  scores of all candidates to increase: For the edge candidates
  that include~$v$ as an endpoint this increase is by five points,
  whereas for all other candidates this increase is by six
  points. 
  Note that the last two voters always prefer the registered candidates to any vertex candidate.
  Thus, by simple counting, 
  each of these $h$ vertex candidates may obtain at most $4\edgesetsize+5h+7$~points and
  will never obtain more points than $p$ as long as $\edgesetsize+h\ge 2$.
  
  Thus, if we have a vertex cover of size $h$, then it is possible to
  ensure $p$'s victory by adding all vertex candidates that
  correspond to this vertex cover.  For the other direction, assume
  that it is possible to ensure $p$'s victory by adding at most $h$
  candidates and let $S$ be such a set of candidates.  For the sake of
  contradiction, assume that there is an edge candidate $e$ which is
  not covered by some vertex candidate in~$S$. It follows that the
  score of $e$ is greater than the score of $p$, which is a
  contradiction.  Thus $S$ must correspond to a vertex cover in $G$.
\end{proof}

\newtheorem*{thmlemcopelandccac}{Theorem~\ref{lem:copeland-ccac}}

\begin{thmlemcopelandccac}%
  \lemcopelandccac
\end{thmlemcopelandccac}

\begin{proof}
  We give a reduction from \probCVC (we use the notation as provided
  at the beginning of Section~\ref{sec:cvc}).  Given an
  instance~$(G,h)$ for \probCVC, we construct an instance for
  Copeland$^\alpha$-\textsc{CCAC}.  Let the registered candidate set~$C$ be
  $\{p, d\} \cup E(G)$, and let $V(G)$ be the set of unregistered
  candidates.  We introduce the following voters:
  \begin{enumerate}
  \item For each $\ell$, $1 \leq \ell \leq 4$, we construct four voters,
    two voters with the following preference order:
    \[
      B(\ell) \pref E^{(-\ell)} \pref V^{(-\ell)} \pref d \pref p,
    \]
    and two voters with the following preference order:
    \[
      p \pref d \pref \revnot{V^{(-\ell)}} \pref \revnot{E^{(-\ell)}} \pref B'(\ell).
    \]
  \item One voter with the preference order 
    $E \pref V \pref d \pref p$,
    and one voter with the preference order
    $d \pref p \pref \revnot{E} \pref \revnot{V}$.
  \item One voter with the preference order
    $p \pref V \pref E \pref d$,
    and one voter with the preference order
    $\revnot{E} \pref d \pref p \pref \revnot{V}$.
  \end{enumerate}
  We illustrate the results of head-to-head contests between the
  candidates in Figure~\ref{fig:copeland1reduction}. We claim that
  there is a vertex cover of size at most $h$ for $G$ if and only if
  $p$ can become a winner of the election by adding at most $k\coloneqq h$
  candidates.

  Sometimes, when we say that a vertex candidate and an edge candidate are \emph{adjacent} to each other, we mean that the corresponding vertex and edge are adjacent to each other.
  Consider a situation where we have added some subset~$A'$ of $k$
  candidates ($k \leq h$; take $k = 0$ to see the situation prior to
  adding any of the unregistered candidates). The candidates have
  the following scores:
  \begin{enumerate}
  \item $p$ has score $\alpha \edgesetsize + k$ ($p$ ties head-to-head contests
    with all edge candidates and wins all head-to-head
    contests with the vertex candidates).
  \item $d$ has score $1 + \alpha k$ ($d$ wins the head-to-head
    contest with $p$ and ties all head-to-head contests with the
    vertex candidates).
  \item Each added vertex candidate $v$ has score $3 + \alpha k$ ($v$
    ties the head-to-head contests with $d$ and the remaining $k-1$
    vertex candidates and wins the head-to-head contests with the
    three edge candidates that are adjacent to~$v$).
  \item Each edge candidate~$e$ has score $\alpha \edgesetsize + k + 1 - c(e)$,
    where $c(e)$ is the number of vertex candidates from $A'$ that are adjacent
    to $e$ ($e$ ties head-to-head contests with $p$ and the remaining
    edge candidates and wins head-to-head contests with $d$ and all
    added vertex candidates except those that are adjacent to
    $e$).    
  \end{enumerate}
  In effect, it holds that $p$ is a winner of the election if
  and only if $A'$ corresponds to a vertex cover of $G$.
\end{proof}
\begin{figure}
  \center
  \begin{tikzpicture}[draw=black!75, scale=0.75,->]
    \tikzstyle{vertex}=[circle,draw=black!80,minimum size=12pt,inner sep=0pt]
    \tikzstyle{special}=[draw,thick,->, >=stealth']

    \foreach [count=\i] \pos / \text in {
      {(0,-2)}/$E$,
      {(1,0)}/$d$,
      {(4,0)}/$p$,
      {(5,-2)}/$V$}
    {
      \node[vertex] (V\i) at \pos {\text};
    }

    \foreach \i / \j in {1/2,2/3,3/4} {
      \path[,>=stealth'] (V\i) edge (V\j);
    }

    \path[special,bend angle=15] (V1) to[bend right] node[midway,fill=white] {if $v \notin e$} (V4);
    \path[special,bend angle=15] (V4) to[bend right] node[midway,fill=white] {if $v \in e$} (V1);
  \end{tikzpicture}
  \caption{Illustration for the reduction used in the proof of Theorem~\ref{lem:copeland-ccac}.
  Each vertex in the graph corresponds to a candidate or a set of candidates,
  and there is an arc going from a vertex $u_1$ to a vertex $u_2$ if $u_1$ beats $u_2$ in a head-to-head contest.
  Edges indicating ties are omitted.
  The main idea is that an edge candidate beats a vertex candidate if and only if the vertex candidate
  is one of the endpoints of the edge candidate.}
  \label{fig:copeland1reduction}
\end{figure}
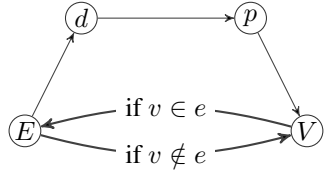

\newtheorem*{thmlemcopelandccdc}{Theorem~\ref{lem:copeland-ccdc}}

\begin{thmlemcopelandccdc}%
  \lemcopelandccdc
\end{thmlemcopelandccdc}

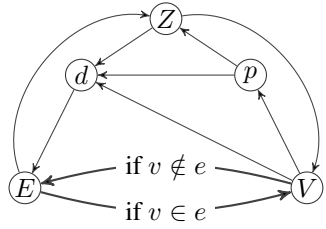
\begin{figure}
  \center
  \begin{tikzpicture}[draw=black!75, scale=0.75,->]
    \tikzstyle{vertex}=[circle,draw=black!80,minimum size=12pt,inner sep=0pt]
    \tikzstyle{special}=[draw,thick,->,>=stealth']

    \foreach [count=\i] \pos / \text in {
      {(0,-2)}/$E$,
      {(1,0)}/$d$,
      {(2.5,1)}/$Z$,
      {(4,0)}/$p$,
      {(5,-2)}/$V$}
    {
      \node[vertex] (V\i) at \pos {\text};
    }

    \foreach \i / \j in {2/1,3/2,4/3,4/2,5/4,5/2} {
      \path[>=stealth'] (V\i) edge (V\j);
    }

    \path[>=stealth'] (V1) edge[bend angle = 60, bend left] (V3);
    \path[>=stealth'] (V3) edge[bend angle = 60, bend left] (V5);
    
    \path[special,bend angle=15] (V1) to[bend right] node[midway,fill=white] {if $v \in e$} (V5);
    \path[special,bend angle=15] (V5) to[bend right] node[midway,fill=white] {if $v \notin e$} (V1);
  \end{tikzpicture}
  \caption{Illustration for the reduction used in the proof of Theorem~\ref{lem:copeland-ccdc}.
  Each vertex in the graph corresponds to a candidate or a set of candidates,
  and there is an arc going from a vertex $u_1$ to a vertex $u_2$ if $u_1$ beats $u_2$ in a head-to-head contest.
  Edges indicating ties are omitted.
  The main idea is that an edge candidate beats a vertex candidate if and only if the vertex candidate
  is one of the endpoints of the edge candidate.}
  \label{fig:copeland2reduction}
\end{figure}

\begin{proof}
  We give a reduction from \probCVC (we use the notation as provided
  at the beginning of Section~\ref{sec:cvc}).  Given an instance~$I = (G,h)$ for \probCVC,
  we construct an instance for Copeland$^\alpha$-CCDC.  The candidate
  set contains the edge candidates, the vertex candidates, the
  preferred candidate $p$, a dummy candidate~$d$, and a set of
  additional dummy candidates $Z = \{z_1, \ldots, z_{\edgesetsize + \vertexsetsize}\}$ (recall that $\edgesetsize\coloneqq |E(G)|$ denotes the number of edges in $E(G)$ and that  $\vertexsetsize\coloneqq |V(G)|$ denotes the number of vertices in $V(G)$).  We
  construct the following voters:
  \begin{enumerate}
  \item For each $\ell$, $1 \leq \ell \leq 4$, we construct two voters
    with  preference order:
    \[
      A(\ell) \pref E^{(-\ell)} \pref V^{(-\ell)} \pref Z \pref d \pref p,
    \]
    and two voters with preference order:
    \[
      p \pref d \pref \revnot{Z} \pref \revnot{V^{(-\ell)}} \pref \revnot{E^{(-\ell)}} \pref A'(\ell).
    \]
  \item We also construct the following ten voters:
    \begin{align*}
      v_1 \colon & V \pref E \pref Z \pref d \pref p, \\
      v'_1 \colon & p \pref d \pref \revnot{Z} \pref \revnot{V} \pref \revnot{E}, \\[6pt]
      v_2 \colon & V \pref p \pref d \pref E \pref Z, \\
      v'_2 \colon & \revnot{E} \pref \revnot{Z} \pref \revnot{V} \pref p \pref d, \\[6pt]
      v_3 \colon & p \pref Z \pref d \pref V \pref E, \\
      v'_3 \colon & \revnot{E} \pref \revnot{V} \pref p \pref \revnot{Z} \pref d, \\[6pt]
      v_4 \colon & d \pref E \pref Z \pref V \pref p, \\
      v'_4 \colon & p \pref \revnot{V} \pref \revnot{Z} \pref d \pref \revnot{E}, \\[6pt]
      v_5 \colon & Z \pref V \pref E \pref d \pref p, \\
      v'_5 \colon & p \pref d \pref \revnot{E} \pref \revnot{Z} \pref \revnot{V}.
    \end{align*}

  \end{enumerate}

  Figure~\ref{fig:copeland2reduction} illustrates the results of the
  head-to-head contests among the candidates.  Prior to deleting any
  of the candidates, we have the following scores:
  \begin{enumerate}
  \item each edge candidate~$e$ has $\edgesetsize+\vertexsetsize + \alpha \edgesetsize + 2$ points
  ($e$ wins head-to-head contests against all candidates in $Z$ due to voters~$v_2$ and $v'_2$,
  wins head-to-head contests against its ``incident'' vertex candidates due to the first group of voters,
  and ties with $p$ and the remaining edge candidates), 
  \item each vertex candidate~$u$ has $\alpha(\vertexsetsize - 1) + \edgesetsize - 1$ points
  ($u$ wins head-to-head contests against all edge candidates that are \emph{not} ``incident'' to $u$
  due to voters from the first group,
  and ties with the remaining vertex candidates),
  \item each candidate~$z$ from $Z$ has $\vertexsetsize + 1 + \alpha(\edgesetsize+\vertexsetsize- 1)$ points
  ($z$ wins head-to-head contests against all vertex candidates and $d$ 
  due to voters~$v_3, v'_3, v_5, v'_5$, 
  and ties with the remaining candidates from $Z$),
  \item $d$ has $\edgesetsize$ points 
    ($d$ wins head-to-head contests against all edge candidates due to voters $v_4$ and $v'_4$), and 
  \item $p$ has $\edgesetsize+\vertexsetsize + \alpha \edgesetsize + 1$ points 
    ($p$ wins head-to-head contests against all candidates from $Z$ 
    due to voters~$v_3$ and $v'_3$,
    wins head-to-head contests against $d$ due to voters~$v_2,v'_2,v_3,v'_3$,
    and ties with all edge candidates). 
  \end{enumerate}
  Thus, all edge candidates are co-winners,
  and $p$ is not a winner because each edge candidate has one point more than~$p$.
  However, $p$ has more points than any other non-edge candidate.
  Note that in the input graph it holds that $\edgesetsize = 3\vertexsetsize/2$.

  We claim that it is possible to ensure that $p$ is a winner by
  deleting at most $k\coloneqq h$ candidates if and only if there is a vertex
  cover of size $h$ for $G$. 

  If there is a vertex cover for $G$ of size $h$, then deleting the
  corresponding $h$ vertices ensures that $p$ is a winner.  To see why
  this is the case, note that after deleting vertices corresponding to
  a vertex cover the score of $p$ does not change, but the score of
  each edge candidate decreases by at least one. The scores of other
  candidates do not increase, so $p$ is a winner.
  
  For the reverse direction, assume that it is possible to ensure that $p$ is
  a winner by deleting at most $h$ candidates. Deleting candidates
  cannot increase $p$'s score, so it must be the case that each edge
  candidate loses at least one point.

  Observe that deleting candidates other than the vertex candidates will not make 
  the edge candidates lose more than one point than $p$. 
  The only possibility of deleting a candidate such that an edge
  candidate~$e$ loses a point but $p$ does not is by deleting one of
  the vertex candidates $v'(e)$ or $v''(e)$. 
  Thus, if it is possible
  to ensure that $p$ is a winner, then we must delete vertices that
  correspond to a vertex cover.
\end{proof}

\section{Deferred Proofs for the \setcovertech}
\label{app:set}

\newtheorem*{thmlemapprovalcombccdcnphone}{Theorem~\ref{lem:approval-comb-ccdc-np-h-1}}

\begin{thmlemapprovalcombccdcnphone}%
  \lemapprovalcombccdcnphone
\end{thmlemapprovalcombccdcnphone}

\begin{proof}
  We build upon the proof of Theorem~\ref{lem:r-cccdc}, but add
  $\constantk-1$ dummy candidates.  Specifically, given an instance
  $I\coloneqq\probSetCoverInstance$ for \probSetCover, we create an instance
  $I'$ of $\constantk$-Approval-\textsc{Comb}-CCDC as follows.  We
  construct an election $E = (C,V)$ where $C = \{p\} \cup
  \ElementCandidateSet \cup \SetCandidateSet \cup D$, where $D =
  \{d_1, \ldots, d_{\constantk-1}\}$, and where $V$ contains a single
  voter with the following preference order:
  \[
    D \pref \ElementCandidateSet \pref p \pref \SetCandidateSet.
  \]
  We use the bundling function as described in the introduction to the
  set-embedding section.  We claim that $I$ is a ``yes''-instance of
  \probSetCover if and only if it is possible to ensure $p$'s victory
  by deleting at most $h$ (bundles of) candidates.

  To see the correctness of the argument,
  note that if there is a solution that
  ensures $p$'s victory by deleting a specific number of candidates,
  then there is also a solution that achieves the same and
  does not delete any of the dummy candidates
  (it is always at least as useful to delete one of the set candidates instead of a dummy one).
\end{proof}

\newtheorem*{thmlemvetocombccdcnphone}{Theorem~\ref{lem:veto-comb-ccdc-np-h-1}}

\begin{thmlemvetocombccdcnphone}%
  \lemvetocombccdcnphone
\end{thmlemvetocombccdcnphone}

\begin{proof}
  Let the notation be as in the introduction to Section~\ref{sec:set}.
  Given an instance $I\coloneqq\probSetCoverInstance$ for \probSetCover, we
  create an instance $I'$ of $\constantk$-Veto-\textsc{Comb}-CCDC as
  follows.  We construct an election $E = (C,V)$ with candidate set:
  \[
     C = \{p, z\} \cup \ElementCandidateSet \cup \SetCandidateSet \cup D,
  \]
  where $D = \{d_1, \ldots, d_{\constantk - 1}\}$ is a set of dummy
  candidates (indeed, for $\constantk = 1$, that is, for Veto, $D =
  \emptyset$), and with the voter collection $V$ containing a single
  voter with the following preference order:
  \[
    z \pref \ElementCandidateSet \pref \SetCandidateSet \pref D \pref p.
  \]
  We use the set-embedding bundling function, with the added feature
  that $\combRule(z) = \SetCandidateSet$. We claim that $I$ is a
  ``yes''-instance of \probSetCover if and only if it is possible to
  ensure $p$'s victory by deleting at most $h+1$ bundles.
  
  Using similar reasoning as used in Theorem~\ref{lem:approval-comb-ccdc-np-h-1},
  it follows that the only way of ensuring that $p$ is a winner
  is to let all the remaining candidates receive no points at all.
  The only way to achieve this is to first delete up to $h$ candidates from
  $\{s_1, \ldots, s_m\}$ that correspond to a cover of the ground set and then to delete $z$.
\end{proof}

\newtheorem*{thmlempluralitycombdcdcnphthree}{Theorem~\ref{lem:plurality-comb-dcdc-np-h-3}}

\begin{thmlempluralitycombdcdcnphthree}%
  \lempluralitycombdcdcnphthree
\end{thmlempluralitycombdcdcnphthree}

\begin{proof}
  Let the notation be as in the introduction to Section~\ref{sec:set}.  Given an instance $I\coloneqq\probSetCoverInstance$ for
  \probSetCover, we create an instance $I'$ of Plurality-\textsc{Comb}-DCDC as
  follows.  We construct an election $E = (C,V)$ where $C =
  \{p,d\} \cup \ElementCandidateSet \cup \SetCandidateSet$, and where
  $V$ contains three voters with the following preference orders:
  \begin{align*}
    \ElementCandidateSet \pref p \pref \SetCandidateSet \pref d, &\\
    d \pref \ElementCandidateSet \pref p \pref \SetCandidateSet, &\text{\, and }\\
    p \pref d \pref \ElementCandidateSet  \pref \SetCandidateSet. &
  \end{align*}
  We use the set-embedding bundling function. We claim that the despised candidate $d$ can
  be precluded from winning by deleting at most $h$ (bundles of)
  candidates if and only if there is a set cover of size $h$ for~$I$.

  Prior to deleting any of the candidates, $d$, $p$, and one of the
  candidates from $X$ are tied as winners.  Since deleting candidates
  cannot make any candidate lose points and since deleting $p$ will
  make $d$ a unique winner, the only way of defeating $d$ is by
  ensuring that the first voter gives her point to $p$.  This means
  that all element candidates have to be removed from the election.
  By the same argument as in the previous proofs, doing so by deleting
  at most $h$ candidates is possible if and only if $I$ is a
  ``yes''-instance of \probSetCover.  
\end{proof}

\newtheorem*{thmlemapprovalcombdcdcnphtwo}{Theorem~\ref{lem:approval-comb-dcdc-np-h-2}}

\begin{thmlemapprovalcombdcdcnphtwo}%
  \lemapprovalcombdcdcnphtwo
\end{thmlemapprovalcombdcdcnphtwo}

\begin{proof}
  Let the notation be as in the introduction to Section~\ref{sec:set}.  Given an instance $I\coloneqq\probSetCoverInstance$ for
  \probSetCover, we create an instance $I'$ of
  $\constantk$-Approval-\textsc{Comb}-DCDC as follows. We construct an
  election $E = (C,V)$ with candidate set: 
  \[
    C = \{p, d\} \cup \ElementCandidateSet \cup \SetCandidateSet \cup D \cup F,
  \] 
  where $D = \{d_1, \ldots, d_{\constantk - 2}\}$ and $F = \{f_1,
  \ldots, f_{\constantk-1}\}$ are two sets of dummy candidates 
  (note that $D$ can be empty), 
  and with the voter collection $V$ containing two voters with the
  following preference orders:
  \begin{align*}
    &d \pref \ElementCandidateSet \pref D \pref p \pref \SetCandidateSet \pref F  \text{\, and}\\
    &p \pref F \pref d \pref \ElementCandidateSet \pref \SetCandidateSet \pref D.
  \end{align*}
  We use the set-embedding bundling function.  We claim that $I$ is a ``yes''-instance of
  \probSetCover if and only if it is possible to preclude $d$ from
  winning by deleting at most $h$ (bundles of) candidates.
  
  Initially, both $d$ and $p$ are winners (as well as some
  members of $\ElementCandidateSet \cup F$).  Deleting $p$ will make
  $d$ gain one more point (from the second voter), making it
  impossible for~$d$ to lose.  The same holds for the dummy candidates
  from set $F$.  In other words, if we change the set of candidates
  that gain a point from the second voter, then $d$ will obtain two
  points and will certainly be a winner.  This implies that the only
  way of making $d$ lose is to let either $p$ or at least one
  candidate from $F$ gain one point from the first voter.  By
  construction of the first voter's preference order, this is possible
  only for $p$ if and only if we delete all members of
  $\ElementCandidateSet$.  As in the previous proofs, deleting them
  (through deleting at most $h$ bundles of candidates) is possible if
  and only if $I$ is a ``yes''-instance of \probSetCover.
\end{proof}

\newtheorem*{thmvetocombdcdcnphone}{Theorem~\ref{veto-comb-dcdc-np-h-1}}

\begin{thmvetocombdcdcnphone}%
  \vetocombdcdcnphone
\end{thmvetocombdcdcnphone}

\begin{proof}
  We use the same construction as used in
  Theorem~\ref{lem:approval-comb-ccdc-np-h-1} for $\constantk$-Approval-\textsc{Comb}-CCDC
  but we reverse the preference order and replace $p$ with $d$,
  the despised candidate:
  \[
    \SetCandidateSet \pref d \pref \ElementCandidateSet \pref D.
  \]
  
  The crucial observation here is that with only one voter, the only
  way of preventing~$d$ from winning is to rank her within the last
  $\constantk$~positions.  This means that all element candidates have
  to ``disappear'' from the election (one could also try deleting the
  dummy candidates, but it is never a mistake to ``make disappear''
  the members of $\ElementCandidateSet$ instead, through deleting the
  appropriate candidates in $\SetCandidateSet$).  
  Thus we can conclude that the set of deleted candidates contains the
  set candidates only. If $d$ is to be precluded from winning
  by deleting at most $h$ candidates, this set must correspond to a
  set cover of size $h$.  Since we can assume that $h <
  |\SetCandidateSet|$, there is at least one set element not
  deleted, and this will be a winner.
\end{proof}

\newtheorem*{thmlembordacombdcdcnphtwo}{Theorem~\ref{lem:borda-comb-dcdc-np-h-2}}

\begin{thmlembordacombdcdcnphtwo}%
  \lembordacombdcdcnphtwo
\end{thmlembordacombdcdcnphtwo}

\begin{proof}
  Let the notation be as in the introduction to Section~\ref{sec:set}.  Given an instance $I\coloneqq\probSetCoverInstance$ for
  \probSetCover, we create an instance $I'$ of
  Borda-\textsc{Comb}-DCDC as follows.  We construct an election $E =
  (C,V)$ where $C = \{p,d,z\} \cup \ElementCandidateSet \cup
  \SetCandidateSet$ and where $V$ contains two voters with the
  following preference orders:
  \begin{align*}
    & d \pref \ElementCandidateSet \pref p \pref \SetCandidateSet \pref z  \text{\, and }\\
    & p \pref z \pref d \pref \overleftarrow{\ElementCandidateSet} \pref \overleftarrow{\SetCandidateSet}.
  \end{align*}
  We use the set-embedding bundling function.  We claim that $I$ is a
  ``yes''-instance of \probSetCover if and only if it is possible to
  preclude $d$ from winning by deleting at most $h$ (bundles of) candidates.

  For convenience, 
  we calculate the scores of all candidates:
  \begin{enumerate}
    \item $d$ has $2|\SetCandidateSet| + 2|\ElementCandidateSet| + 2$ points.
    \item $p$ has $2|\SetCandidateSet| + |\ElementCandidateSet| + 3$ points.
    \item Each element candidate~$x_i$ has $2|\SetCandidateSet| + |\ElementCandidateSet| + 1$ points.
    \item $z$ has $|\SetCandidateSet| + |\ElementCandidateSet| + 1$ points.
    \item Each set candidate~$s_j$ has $|\SetCandidateSet|$ points.
  \end{enumerate}
  It follows that $d$ has the highest number of points and, thus, is a
  winner.

  Since both voters rank $d$ ahead of the candidates in the set
  $\ElementCandidateSet \cup \SetCandidateSet$, no member of this set
  can have score higher than $d$, irrespective which other
  candidates we delete.  Similarly, irrespective which candidates we
  delete, $z$ will never have score higher than $d$. We conclude
  that candidate~$p$ is the only candidate that has a chance of defeating $d$.

  Since deleting candidates does not increase the scores of any of the
  remaining candidates, to ensure that $d$ is not a winner, we have to
  guarantee that he loses at least $|\ElementCandidateSet|$
  points (relative to $p$). This means that it is possible to ensure
  that $d$ is not a winner if and only if it is possible to remove all
  candidates from $\ElementCandidateSet$.  However, this is
  possible if and only if $I$ is a ``yes''-instance of
  $\probSetCover$.~\end{proof}

\newtheorem*{thmlemcopelandcombdcdcnphthree}{Theorem~\ref{lem:copeland-comb-dcdc-np-h-3}}

\begin{thmlemcopelandcombdcdcnphthree}%
  \lemcopelandcombdcdcnphthree
\end{thmlemcopelandcombdcdcnphthree}
  
\begin{proof}
  Let the notation be as in the introduction to Section~\ref{sec:set}.
  Given an instance~$I\coloneqq\probSetCoverInstance$ for \probSetCover,
  we construct an instance for
  Copeland$^\alpha$-\textsc{Comb}-DCDC. 
  Since our reduction will
  produce an instance with an odd number of voters, the particular
  value of $\alpha$ is immaterial.  
  We form the set of candidates:
  \[
  C = \{d, p\} \cup \ElementCandidateSet \cup \SetCandidateSet.  
  \] 

  We have three voters with the following preference orders:
  \begin{align*}
    p \pref d \pref \ElementCandidateSet \pref \SetCandidateSet, &\\
    d \pref \ElementCandidateSet \pref p \pref \SetCandidateSet, & \text{\, and }\\
    \overleftarrow{\ElementCandidateSet} \pref p \pref d \pref \overleftarrow{\SetCandidateSet}.&
  \end{align*}
  We use the set-embedding bundling function. We claim that $I$ is a
  ``yes''-instance of \probSetCover if and only if it is possible to
  preclude $d$'s victory by deleting at most $h$ (bundles of) candidates.
  
  The initial scores are:
  \begin{enumerate}
  \item $d$ receives $|\SetCandidateSet| + |\ElementCandidateSet|$
    points ($d$ wins head-to-head contests against all the other
    candidates but $p$).
  \item $p$ receives $|\SetCandidateSet|+1$ point ($p$ wins
    head-to-head contests against $d$ and all the members of
    $\SetCandidateSet$).
  \item Each member $x_i$ of $\ElementCandidateSet$ receives at most
    $|\SetCandidateSet|+|\ElementCandidateSet|$ (from head-to-head
    contests with $p$, all members of $\SetCandidateSet$, and the
    other members of $\ElementCandidateSet$).
  \item Each member $s_j$ of $\SetCandidateSet$ receives at most
    $|\SetCandidateSet|-1$ points (from head-to-head contests with
    the other members of $\SetCandidateSet$).
  \end{enumerate}

  Since deleting candidates cannot make any candidate gain more
  points, the only way of ensuring that $d$ is not a winner is to make
  sure that $d$'s score decreases relative to some other candidate.
  By the above list of scores, it follows that the only
  candidate that may end up with a score higher than $d$ is $p$. This
  happens only if we remove all the members 
  of~$\ElementCandidateSet$. As in the previous proofs using the
  set-embedding technique, doing so by deleting at most $h$ candidates
  is possible if and only if there is a set cover of size at most $h$
  for $I$.~\end{proof}

\newtheorem*{thmlemcopelandcombdcacccacnphthree}{Theorem~\ref{lem:copeland-comb-dcac-ccac-np-h-3}}

\begin{thmlemcopelandcombdcacccacnphthree}%
  \lemcopelandcombdcacccacnphthree
\end{thmlemcopelandcombdcacccacnphthree}

\begin{proof}
  Let the notation be as in the introduction to Section~\ref{sec:set}.  %
  Given an instance~$I\coloneqq\probSetCoverInstance$ for \probSetCover with
  $n'\coloneqq|\ElementCandidateSet|$, we construct an instance for
  Copeland$^\alpha$-\textsc{Comb}-DCAC.  Since our reduction will
  produce an instance with an odd number of voters, the particular
  value of $\alpha$ is immaterial.  We form the set of registered
  candidates:
  \[
    C = \{d, p\} \cup D \cup  F,
  \] 
  where $d$ is the despised candidate (and we will want to ensure that
  $p$ wins over $d$), and where $D\coloneqq\{d_1, \ldots, d_{n'-2}\}$ and
  $F\coloneqq\{f_1, \ldots, f_{n'-1}\}$ are two sets of dummy candidates.  We
  let the set of unregistered candidates be $A =
  \ElementCandidateSet\cup \SetCandidateSet$.  Finally, we construct
  three voters with the following preference orders:
  \begin{align*}
    & d\pref D \pref p \pref F \pref \ElementCandidateSet \pref \SetCandidateSet, \\
    & p\pref \overleftarrow{F} \pref \overleftarrow{\ElementCandidateSet} \pref \overleftarrow{D} \pref d \pref \overleftarrow{\SetCandidateSet}, \text{\, and }\\
    &\ElementCandidateSet \pref d \pref D \pref F \pref p \pref \SetCandidateSet.
  \end{align*}
  We use the set-embedding bundling function.  We claim that $I$ is a
  ``yes''-instance of \probSetCover if and only if it is possible to
  preclude $d$'s victory by adding at most $h$ (bundles of) candidates.

  Prior to adding any of the candidates, we have the following scores:
  \begin{enumerate}
  \item $d$ receives $2n'-2$ points ($d$ wins head-to-head contests
    with all the remaining registered candidates).
  \item $p$ receives $n'-1$ points ($p$ wins head-to-head contests with
    the members of $F$).
  \item Every dummy candidate~$d_i\in D$ receives at most $2n'-3$
    points ($d_i$ wins head-to-head contests with all the members of
    $F$, with $p$, and---at most---all remaining members of~$D$).
  \item Every dummy candidate~$f_i\in F$ receives at most $n'-2$ points
    ($f_i$ wins head-to-head contests with---at most---the remaining members of $F$).
  \end{enumerate}  
  
  Now,
  if there is a set cover for $I$ of size at most $h$, then adding the members of
  $\SetCandidateSet$ that correspond to the cover ensures that $d$ is
  not a winner (relative to $d$, $p$ gets additional $n'$ points).

  For the reverse direction, note that adding candidates to the election
  cannot decrease the score of any existing candidate.  Thus, in order
  to beat $d$, we must add candidates to increase (relative to $d$) the score of 
  some candidate. We make several observations:
  \begin{enumerate}
  \item The candidates in $\SetCandidateSet$ themselves do not
    contribute to the increase of a score of any candidate relative to
    $p$ because all the other candidates (including $d$) win
    head-to-head contests against them.
  \item The scores of the members of $D$ do not change relative to the
    score of $d$ irrespective which other candidates join the
    election.
  \item By the first observation in this enumeration, the maximum
    possible increase of a score of candidate is by $n'$ points (if
    this candidate defeats all members of $\ElementCandidateSet$ and
    members of $\ElementCandidateSet$ join the election). Since all members
    of set $F$ have score at most $n'-2$, none of them can obtain score
    higher than $d$, irrespective which candidates we add.
  \end{enumerate}
  As a final conclusion, we have that the only candidate that can possibly
  defeat $d$ is $p$, and this happens only if all members of $\ElementCandidateSet$
  join the election. It is possible to ensure that this happens by adding
  at most $h$ bundles of candidates if and only if there is a set cover for $I$
  of size at most $h$.

  We use the same construction for the case of Copeland$^\alpha$-\textsc{CCAC}, except that
  now $p$ is the preferred candidate and we increase the size of $D$ by one.
\end{proof}

\newtheorem*{thmlemmaximincombccacnphsix}{Theorem~\ref{lem:maximin-comb-ccac-np-h-6}}

\begin{thmlemmaximincombccacnphsix}%
  \lemmaximincombccacnphsix
\end{thmlemmaximincombccacnphsix}

\begin{proof}
  Let the notation be as in the introduction to Section~\ref{sec:set}.  Given an instance~$I\coloneqq\probSetCoverInstance$ for
  \probSetCover with $n' \coloneqq|\ElementCandidateSet|$, we construct an
  instance for Maximin-\textsc{Comb}-\textsc{CCAC}.  We let the set of
  registered candidates be $C \coloneqq \{p\}\cup D$, where $p$ is the
  preferred candidate and where $D\coloneqq\{d_1, \ldots, d_{n'}\}$ is a set of
  dummy candidates.  The unregistered candidate set is $A\coloneqq
  \ElementCandidateSet\cup \SetCandidateSet$.  We construct six voters
  with the following preference orders:
  \begin{align*}
    v_1\colon & p \pref x_1 \pref d_1 \pref \cdots \pref x_{n'} \pref d_{n'} \pref \SetCandidateSet,\\
    v_2\colon & p \pref x_{n'} \pref d_{n'} \pref \cdots \pref x_1 \pref d_1 \pref \SetCandidateSet,\\
    v_3\colon & x_1\pref \cdots \pref x_{n'} \pref d_1 \pref \cdots \pref d_{n'} \pref p \pref \SetCandidateSet,\\
    v_4\colon & d_{n'} \pref \cdots \pref d_1 \pref p \pref x_{n'} \pref \cdots \pref x_1 \pref \SetCandidateSet,\\
    v_5\colon & x_1\pref \cdots \pref x_{n'} \pref d_1 \pref \cdots \pref d_{n'} \pref p \pref \SetCandidateSet, \text{\ and}\\
    v_6\colon & d_{n'} \pref \cdots \pref d_1 \pref p \pref x_{n'} \pref \cdots \pref x_1 \pref \SetCandidateSet.
  \end{align*}
  (Note that $v_3$ and $v_5$ have the same preference order and
  that $v_4$ and $v_6$ have the same preference order.)  We use the
  set-embedding bundling function.  We claim that $I$ is a
  ``yes''-instance of \probSetCover if and only if it is possible to
  ensure $p$'s victory by adding at most $h$ (bundles of) candidates.

  Prior to adding any of the candidates, $p$ has two points and each
  candidate in $D$ has three points. All the voters rank the members
  of $\SetCandidateSet$ last, so the presence of these candidates in
  the election does not change the scores of $p$ and members of $D$.
  More so, members of $\SetCandidateSet$ themselves receive zero points
  each. If some candidate $x_i$ appears in the
  election, however, then we have the following effects:
  \begin{enumerate}
  \item This candidate's score is at most two
  (since only voters
    $v_3$ and $v_5$ prefer $x_i$ to~$p$).
  \item The score of $d_i$ becomes at most two
  (since only voters
    $v_4$ and $v_6$ prefer $d_i$ to~$x_i$).
  \item The score of $p$ does not change
  (since already $v_1$ and
    $v_2$ prefer $p$ to~$x_i$).
  \end{enumerate}
  This means that if there is a set cover of size at most $h$ for $I$,
  then adding the set candidates that correspond to this cover will
  bring all members of $\ElementCandidateSet$ to the election and $p$
  will be among the winners.

  First, note that it is impossible to
  increase the score of $p$ by adding candidates, and that for each
  $d_i$, the only way to decrease its score to at most two is to bring
  $x_i$ into the election.

  For the reverse direction, notice that in order to let $p$ win, we must
  add candidates to the election to decrease the score of every
  element candidate~$x_i$, and the only way to achieve this by
  adding at most $\solk$ bundles is by adding the $s_j$ corresponding
  to the set cover. This means that if it is possible to ensure $p$'s
  victory by adding at most $h$ candidates, then it must be possible to add
  all members of $\ElementCandidateSet$ into the election, and this
  means that there is a set cover of size at most~$h$.
\end{proof}

\section{Deferred Proofs for the \signaturetech}
\label{app:sig}

\newtheorem*{thmlemapprovaldcacfpt}{Theorem~\ref{lem:approval-dcac-fpt}}

\begin{thmlemapprovaldcacfpt}%
  \lemapprovaldcacfpt
\end{thmlemapprovaldcacfpt}

\begin{proof}
  There are two means of solving our problem. We can either run the
  brute-force algorithm on top of
  Theorem~\ref{lem:approval-dcac-turing-kernel}, obtaining running time
  $O(m \cdot (\constantk \cdot 3^n)^{t\cdot n}))$,
  or we can use a variant of the
  algorithm from Theorem~\ref{lem:plurality-dcac-fpt}. Below we describe
  how to adapt the algorithm from Theorem~\ref{lem:plurality-dcac-fpt},
  as it allows us to achieve a better running time.

  We use the same algorithm as in Theorem~\ref{lem:plurality-dcac-fpt},
  but with a more involved notion of a signature and with a more involved merging operator $\oplus$.
  Indeed,
  the algorithm to be described next is a generalization of the algorithm described in Theorem~\ref{lem:plurality-dcac-fpt}.

  So,
  if we have $n$ voters,
  then we define the \emph{unbounded} signature
  of a set $A'$ of unregistered candidates to be 
  an vector $\typevec$ with $n$ entries,
  such that the $i$th entry
  is a vector $\type{i}$ with $t$ values,
  defined as follows.
  The $j$th entry of $\type{i}$,
  for $1 \leq j \leq t$ and $1 \leq i \leq n$,
  contains the number of candidates in $A'$ that
  the $i$th voter prefers to all but $j-1$ registered candidates.
  Now a \emph{bounded} signature (simply, a signature) of a set $A'$ is its unbounded signature where all
  entries greater than $t$ are replaced by $t$.
  Altogether, there are
  $(t+1)^{t\cdot n}$ signatures.
  
  Given two signatures, $\typevec'$ and $\typevec''$, we define their
  merge, $\typevec = \typevec' \oplus \typevec''$, as follows: For
  each~$i$, $1 \leq i \leq n$, vector $\type{i}$ is computed by first
  calculating the component-wise sum of vectors $\typevec'$ and
  $\typevec''$, and then replacing with $t$ each entry greater than $t$.
  Now, if $A'$ and $A''$ are two disjoint sets
  of candidates with signatures $\typevec_{A'}$ and $\typevec_{A''}$,
  then $\typevec_{A'} \oplus \typevec_{A''}$ is a signature of their
  union. (Note that in our algorithm we apply operator $\oplus$ to
  ``signatures of disjoint sets of candidates'' only.)

  It is straightforward to verify that given a signature of a subset
  $A'$ of unregistered candidates, we can compute the scores of
  candidates $p$ and $d$. This suffices to describe our algorithm and
  to justify its correctness.  The running time is
  $O(m\cdot n \cdot t\cdot(t+1)^{t\cdot n})$ (it is calculated in the same way as in the
  proof of Theorem~\ref{lem:plurality-dcac-fpt}, except that now we have more
  signatures and the components of the signatures are $t$-dimensional
  vectors).
\end{proof}

An example for the algorithm described above, for $t$-Approval-DCAC, is provided next.

\begin{example}
Consider the following election.
\begin{align*}
  v_1 : \mathbf{d} \pref c \pref a \pref \mathbf{e} \pref b \pref \mathbf{p} \\
  v_2 : b \pref c \pref \mathbf{p} \pref \mathbf{d} \pref \mathbf{e} \pref a \\
  v_3 : a \pref c \pref \mathbf{d} \pref \mathbf{p} \pref b \pref \mathbf{e}
\end{align*}

The registered candidates are $\{d, p, e\}$
and the unregistered candidates are $\{a, b, c\}$.
We consider $2$-Approval (that is, $t = 2$),
therefore $d$ gets $3$ points, $p$ gets $2$ points, and $e$ gets $1$ point.

We have the following signatures.
\[
  \begin{array}{cc}
    ~ \\
    a\text{'s signature:} \\
    ~
  \end{array}
  \left( \begin{array}{cc}
    0 & 1 \\
    0 & 0 \\
    1 & 0
  \end{array} \right)
  \begin{array}{cc}
    ~ \\
    b\text{'s signature:} \\
    ~
  \end{array}
  \left( \begin{array}{cc}
    0 & 0 \\
    1 & 0 \\
    0 & 0
  \end{array} \right)
  \begin{array}{cc}
    ~ \\
    c\text{'s signature:} \\
    ~
  \end{array}
  \left( \begin{array}{cc}
    0 & 1 \\
    1 & 0 \\
    1 & 0
  \end{array} \right)
\]

Combining $a$ and $c$ together,
we have the following signature.
\[
  \begin{array}{cc}
    ~ \\
    a\text{'s signature } \oplus c\text{'s signature:} \\
    ~
  \end{array}
  \left( \begin{array}{cc}
    0 & 1 \\
    1 & 0 \\
    2 & 0
  \end{array} \right)
\]

Indeed,
it can be computed from the above signature,
that $\{a, c\}$ is indeed a solution, causing $p$ to win the election.
\end{example}

\newtheorem*{thmlemapprovaldcdcfpt}{Theorem~\ref{lem:approval-dcdc-fpt}}

\begin{thmlemapprovaldcdcfpt}%
\lemapprovaldcdcfpt
\end{thmlemapprovaldcdcfpt}

\newcommand{\ApprovalCheck}{\texttt{SanityCheck}\xspace}
\newcommand{\VetoCheck}{\texttt{VetoSanityCheck}\xspace}
\newcommand{\variable}[1]{\ensuremath{x_{#1}}}
\newcommand{\constant}[1]{\ensuremath{z_{#1}}}

\setlength{\textfloatsep}{40pt}

\begin{algorithm}[t!]
  \footnotesize
  \SetCommentSty{\color{gray}}
  \SetAlgoVlined

  \SetKwInput{Input}{Input}

  \SetKw{KwVariables}{Variables}
  \SetKw{KwConstants}{Constants}
  \SetKw{KwConstraints}{Constraints}
  \SetKw{KwObjective}{Objective}
  \SetKw{KwTrue}{true}
  \SetKw{KwFalse}{false}
  \SetKw{KwAnd}{and}
  \SetKw{KwAccept}{accept}
  \SetKw{KwReject}{reject}
  \SetKw{KwOr}{or}
  \SetKw{KwWith}{with}

  \SetKwFunction{Guess}{Guess}
  \SetKwFunction{ApprovalCheck}{SanityCheck}
  \SetKwFunction{VetoCheck}{VetoSanityCheck}
  \SetKwFunction{ILP}{ILP}
  \SetKwBlock{Block}{}{}

  \SetAlCapFnt{\footnotesize}

  \vspace{10pt}

  \Input{\\
    \begin{tabular}{ll}
      $((C, V), d, \solk)$&    \comm{---  input: an instance of $\constantk$-Approval-DCDC}\\
      $p$                &     \comm{---  a guessed candidate who is to defeat $d$}\\
    \end{tabular}
  }

  \vspace{10pt}

  \ForEach
  (\comm{\\ --- Run ILP for each sane $\scorevec$ such that $p$ beats $d$.})
  {\label{alg:beat-cond}
    $\scorevec = (\svec{1},\svec{2},\ldots,\svec{n})\in [4]^n$ \KwWith
  \phantom{foreach  }$|\{i \mid \svec{i}=1\}| < |\{i \mid \svec{i}=2\}|$}
  {
    \ForEach{$i \in [n]$}{
      \label{alg:approval-check}\If{\ApprovalCheck($\svec{i}$) = \KwFalse}{
          Next $\scorevec$\;
        }
      } \If{$p$ has more points than $d$ when $p$ and $d$ receive
        points as described by $\scorevec$ \KwAnd there is a solution for \ILP($\scorevec$)}
        {
          \KwAccept\;
        }
  }
  \KwReject\;

  \vspace{10pt}

  \ApprovalCheck{$\svec{i}$}
    \Block{
     \If(\\
     \comm{--- $\svec{i}=1$: only $d$ gains one point.}
     )
     {$\svec{i}=1$ \KwAnd ($v_i: p\pref d$)}{
       \Return \KwFalse\; 
     }
     \If(\\
     \comm{--- $\svec{i}=2$: only $p$ gains one point.}
     )
     {$\svec{i}=2$ \KwAnd ($v_i: d\pref p$)}{
       \Return \KwFalse\;
     }
       \Return \KwTrue\;
   }

  \vspace{10pt}

  \ILP{$\scorevec=(\svec{1},\svec{2},\ldots,\svec{n})$}:
  \Block{
    \KwVariables\\
      $\forall \signature \in [3]^{n}: \variable{\signature}$ \comm{\, --- \# \emph{deleted} candidates with $\{d,p\}$-signature $\signature$}
    
    \KwConstants\\
      $\forall \signature \in [3]^{n}: \constant{\signature}$ \comm{\, --- \# \emph{existing} candidates with $\{d,p\}$-signature $\signature$}

    \KwConstraints
    \Block{
      ${\sum_{\signature}x_{\signature}}\le \solk$
      $\forall \signature \in [3]^{n}: x_{\signature} \le z_{\signature}$
      \label{alg:constraint1}\\
      $\forall i \in [n]:$\\
      \If(\\
      \comm{--- $v_i: d \pref p$ \KwAnd only $d$ gains one point, or}\\
      \comm{--- $v_i: p \pref d$ \KwAnd only $p$ gains one point})
      {$\svec{i}=1$ \KwOr $\svec{i}=2$}
      {
        $\sum_{\forall \signature: \sig{i}=3}(\constant{\signature}-\variable{\signature}) \le t-1$\\\label{alg:the first gains one point}
        $\sum_{\forall \signature: \sig{i}=3 \vee \sig{i}=2}(\constant{\signature}-\variable{\signature})\ge t-1$\label{alg:the second gains zero points}
      }
      \ElseIf(
      \\
      \comm{--- Both $d$ and $p$ gain one point each}
      ){$\svec{i}=3$}
      {
        $\sum_{\forall \signature: \sig{i}=3 \vee \sig{i}=2}(\constant{\signature}-\variable{\signature}) + 2 \le t$
        \label{alg:both gain one point}
      }
      \Else(
      \\
      \comm{--- No one gains one point}
      ){
        $\sum_{\forall \signature: \sig{i}=3}(\constant{\signature}-\variable{\signature}) \ge t$
      }\label{alg:both gain zero points}
    }
  }    
  \caption{\small $\fpt$ algorithm for $\constantk$-Approval-DCDC.}
  \label{alg:fpt-approval-dcdc}
\end{algorithm}

We first describe our approach to proving the theorem and give the
formal proof below.  Let us fix a positive integer $t$ and let
$((C,V), d, \solk)$ be an instance of $t$-Approval-DCDC, where
$V=(v_1,\ldots,v_n)$.  We focus on the case of $\constantk$-Approval
and later we will argue how to adapt the results to apply to the case
of $\constantk$-Veto.  We guess a candidate $p$, whose role is to
defeat the despised candidate $d$.  For each such candidate~$p$ we do
the following. First, we make an initial brute-force search: For each
voter, we ``guess'' one of at most four possible choices of how $d$
and $p$ would gain points after our action of deleting candidates:
\begin{enumerate}
\item choice one: only $d$ receives one point,
\item choice two: only $p$ receives one point,
\item choice three: both candidates receive one point, and 
\item choice four: neither $p$ nor $d$ receive a point.
\end{enumerate}
We record our guesses in vector $\scorevec$.  For each guessed $p$ and
$\scorevec$, we check whether giving the points according to our guesses in
$\scorevec$ guarantees that $p$ has more points than $d$. If so, then we
run an integer linear program to verify if it is at all possible to
ensure that every voter gives points to candidates $p$ and $d$ as
described by vector $\scorevec$, and to compute the smallest number of
candidates we have to delete to ensure this.  The complete procedure,
for the case of $\constantk$-Approval-DCDC, is given as
Algorithm~\ref{alg:fpt-approval-dcdc}.

\begin{proof}[Proof of Theorem~\ref{lem:approval-dcdc-fpt}]
  We start by considering the case of $\constantk$-Approval-DCDC.  The
  running time for Algorithm~\ref{alg:fpt-approval-dcdc} is easy to
  verify: we guess a candidate~$p$ and a possible way of giving $p$
  and $d$ points, followed by running an ILP.  Therefore, the running
  time is $O(m\cdot 4^n)$ times the cost of running the ILP.  The ILP
  has $3^n$ variables and $(3^n+2n)$~constraints.  Thus, employing the
  famous result by~\citet{len:j:integer-fixed}, our
  algorithm runs in $O^*(4^n \cdot f(n))$ where $f$ is a function that
  describes the running time of the ILP solver and solely depends on
  $n$~\shortcite{Kan87,len:j:integer-fixed}.
  
  To prove the correctness of the algorithm, it suffices to show the
  correctness of the ILP program for a given guess of $p$ and
  $\scorevec$.  First, the constraint in Line~\ref{alg:constraint1}
  ensures that we do not delete more candidates with a given
  $\{d,p\}$-signature~$\signature$ than there are present in the election (c.f. the meaning of a signature in Definition~\ref{def:signature}).  The
  remaining signatures verify that we can implement vector $\scorevec$.
  For each $i$, $1 \leq i \leq n$, we verify whether it is possible to 
  implement guess $\svec{i}$:
  \begin{enumerate}
  \item If $\svec{i}$~$=$~$1$ (that is, $d$ gains a point from the $i$th
    voter but $p$ does not), then according to our sanity check
    (\ApprovalCheck) we have that $v_i$ prefers $d$ over $p$.  Thus,
    after the candidate deletion, $d$ must be ranked in the first
    $\constantk$ positions~(Line~\ref{alg:the first gains one point})
    and $p$ must be ranked behind the $\constantk$'th
    position~(Line~\ref{alg:the second gains zero points}).
  \item If $\svec{i} = 2$ which means that only $p$ gains one point,
    then $v_i$ prefers $p$ over $d$.  Thus, after the candidate
    deletion, $p$ must be ranked in the first $\constantk$
    positions~(Line~\ref{alg:the first gains one point}) and $d$ must
    be ranked behind the $\constantk$'th position~(Line~\ref{alg:the
      second gains zero points}).
  \item If $\svec{i} = 3$, 
    then both candidates gain one point each and must be ranked in the first $\constantk$ positions~(Line~\ref{alg:both gain one point}) after the candidate deletion.
  \item Otherwise, both gain zero points and must be ranked behind the
    $\constantk$'th position~(Line~\ref{alg:both gain zero points})
    after the candidate deletion.
  \end{enumerate}
  This justifies the correctness of the ILP and completes the proof
  for the case of $\constantk$-Approval.
  
  For the case of $\constantk$-Veto, it suffices to use the same
  approach as for $\constantk$-Approval, provided that we first
  reverse all preference orders and consider that a candidate is a
  winner if this candidate's score is the lowest (in essence, this is
  equivalent to replacing ``points'' with ``vetoes'' in the above
  reasoning).
\end{proof}

\end{document}